\newtheorem{example}{Example}
\newtheorem{definition}{Definition}
\newtheorem{proposition}{Proposition}
\newtheorem{notation}{Notation}
\newcommand{\prob}{P}
\newcommand{\lab}{L}
\newcommand{\dialogue}{D}
\newcommand{\graph}{{\cal G}}
\newcommand{\arcs}{{\sf Arcs}}
\newcommand{\nodes}{{\sf Nodes}}
\newcommand{\options}{{\sf Options}}
\newcommand{\positmoves}{{\sf PositMoves}}
\newcommand{\menumoves}{{\sf MenuMoves}}
\newcommand{\menupick}{{\sf MenuPick}}
\newcommand{\menulisting}{{\sf MenuListing}}
\newcommand{\dsystem}{{\sf System}}
\newcommand{\duser}{{\sf User}}
\newcommand{\dusertag}{{\sf U}}
\newcommand{\reward}{{\sf Reward}}
\newcommand{\concernscore}{{\sf ConcernScore}}
\newcommand{\prefscore}{{\sf PrefScore}}
\newcommand{\conpref}{\preceq}
\newcommand{\nonchosenscore}{{\sf NonChosenScore}}
\newcommand{\args}{{\sf Args}}
\newcommand{\attackers}{{\sf Attackers}}
\newcommand{\nrej}{{\sf null}^{\sf rej}}
\newcommand{\nacc}{{\sf null}^{\sf acc}}
\newcommand{\concerns}{{\sf Con}}
\newcommand{\concerndomain}{{\cal C}}
\newcommand{\siblings}{{\sf Siblings}}
\newcommand{\exsibcon}{{\sf ExSibCon}}
\newcommand{\sibcon}{{\sf SibCon}}
\newcommand{\initarg}{{\sf Initial}}
\title{Strategic Argumentation Dialogues for Persuasion: Framework and Experiments Based\\ on Modelling the Beliefs and Concerns of the Persuadee}
\date{}
\author[1,3]{Emmanuel Hadoux}
\author[1]{Anthony Hunter}
\affil[1]{Department of Computer Science, University College London, United Kingdom} 
\author[2]{Sylwia Polberg} 
\affil[2]{School of Computer Science and Informatics, Cardiff University, United Kingdom}
\affil[3]{Scribe Labs, London, United Kingdom}
\begin{document}

\maketitle


\begin{abstract}

Persuasion is an important and yet complex aspect of human intelligence. 
When undertaken through dialogue, the deployment of good arguments, and therefore counterarguments, clearly has a significant effect on the ability to be successful in persuasion. 
Two key dimensions for determining whether an argument is \enquote{good} in a particular dialogue
are the degree to which the intended audience believes the argument and counterarguments, 
and the impact that the argument has on the concerns of the intended audience.
In this paper, we present a framework for modelling persuadees in terms of their beliefs and concerns, and for harnessing these models in optimizing the choice of move in persuasion dialogues. 
Our approach is based on the Monte Carlo Tree Search which allows optimization in real-time.
We provide empirical results of a study with human participants showing that our automated persuasion system based on this technology is superior to a baseline system that does not take the beliefs and concerns into account in its strategy. 

\end{abstract}

\newpage
\tableofcontents
\newpage

\section{Introduction} 

Persuasion is an important and multifaceted human facility. The ability to induce another party to believe or do something 
is as essential in commerce and politics as it is in many aspects of daily life. 
We can consider examples such as a doctor trying to get a patient to enter a smoking cessation programme,  
a politician trying to convince people to vote for him in the elections, or even just a child asking a parent for a rise in pocket money.
There are many components that boost the effectiveness of persuasion, and simple things such as how someone is dressed 
or a compliment can affect the person they are trying to convince. Nevertheless, arguments are a crucial part of persuasion, 
and resolving a given person's doubts and criticisms is necessary to win them over.  

While arguments can be implicit, as in a product advert, or explicit, as in a discussion with a doctor, in both cases they need to be selected
with the target audience in mind. For instance, a doctor encouraging a patient who is a parent to take better care of their heart can use 
references to the patient's family in order to succeed. It is therefore valuable for the persuader to have an understanding of the persuadee and of what might work better with them. This is particularly challenging in situations where the persuadees take on an active rather 
than passive role and can voice and change their opinions throughout the persuasion attempt. In this paper, we focus on 
the following two dimensions in which a potential persuadee may judge arguments in the context of a dialogue. 

\begin{description}

\item[Beliefs]  Arguments are formed from premises and a claim, either of which may be explicit or partially implicit. An agent can express a belief in an argument based on the agent's belief in the premises being true, the claim being implied by the premises, and the claim being true. 
There is substantial evidence in the behaviour change literature that shows the importance of the beliefs of a persuadee in affecting the likelihood that the persuasion attempt is successful (see for example the review by Ogden \cite{ogden2012health}). Furthermore, beliefs 
can be used as a proxy for fine-grained argument acceptability, the need for which was highlighted by empirical studies conducted in \cite{Rahwan2011,PolbergHunter2018ijar}. 

\item[Concerns]  Arguments are statements that contain information about the agent and/or the world. Furthermore, they can refer to impacts on the agent and/or the world, which in turn may relate to the concerns of the agent. In other words, some arguments may have a significant impact on what the agent is concerned about. In empirical studies, it has been shown that taking the persuadee's concerns into account can improve the likelihood that persuasion is successful \cite{HadouxHunter2018submission,Chalaguine19,Chalaguine2020}.

\end{description}
 
To illustrate how beliefs (respectively concerns) arise in argumentation, and how they can be harnessed for more effective persuasion, consider Example \ref{ex:belief} (respectively Example \ref{ex:concern}).

\begin{example}
\label{ex:belief}
Consider a health advisor who wants to persuade a student to join a smoking cessation programme (\emph{i.e.}, a health programme designed to help someone give up smoking). The student may be expressing reluctance to join but not explaining why. Through experience, the advisor might guess that the student believes one of the following arguments.
\begin{itemize}
\item Argument 1: If I give up smoking, I will get more anxious about my studies, I will eat less, and I will lose too much weight.
\item Argument 2: If I give up smoking, I will start to eat more as a displacement activity while I study, and I will get anxious as I will put on too much weight.  
\end{itemize}
Based on the conversation so far, the health advisor has to judge whether the student believes Argument 1 or Argument 2. 
With that prediction, the advisor can try to present an appropriate argument to counter the student's belief in the argument, and thereby overcome the student's barrier to joining the smoking cessation programme. For instance, if the advisor thinks it is Argument 1, they 
can suggest that as part of the smoking cessation programme, the student can join free yoga classes to overcome any stress that they might feel from the nicotine withdrawal symptoms.  
\end{example}






\begin{example}
\label{ex:concern}
Consider a volunteer street-fundraising for a hospital charity who has managed to engage in a conversation with a passerby. 
\begin{itemize}
\item Argument 1: Supporting this hospital will fund innovative cancer research.
\item Argument 2: Supporting this hospital will fund specialized hearing equipment for deaf people. 
\end{itemize}

The volunteer is fundraising in a university area and managed to stop a person that is likely to be a professor in the nearby institution. Hence, they may guess that supporting research is something very close to that person's interests and concerns, probably closer than funding hearing equipment. The volunteer is likely to have just one chance at convincing the person to sign up, and will regard Argument 1 as the more convincing argument to present. 
\end{example}

So in Example \ref{ex:belief}, the student has the same concerns, but different beliefs, associated with the arguments. In contrast, in Example \ref{ex:concern},  the passerby has the same beliefs, but different concerns, associated with the arguments. 
We therefore see both the concerns and beliefs as being orthogonal kinds of information that an agent might have about an argument, 
and knowing about them can be valuable to a persuader. 

The importance of such information becomes even more apparent when the persuader is an artificial, and not a human agent, 
and thus requires a formal representation of the persuadee. In recent years, there has been an increase in demand 
for such agents in various areas, from e-commerce to e-health. Artificial agents can offer accessibility or availability rarely possible 
for human agents, or be deployed in scenarios in which hiring appropriate specialists can be difficult due to the numbers required. 
Furthermore, using such tools may also have unexpected benefits. For instance, research shows that artificial agents can help 
in reducing barriers to healthcare and that the patients may be more truthful with them in contrast to a human specialist \cite{Gratch2014}.

In this work we therefore focus on developing automated persuasion systems (APSs). An APS
plays the role of the persuader and engages in a dialogue with a \emph{user} (the persuadee)
in order to convince them to accept 
a certain persuasion goal (\emph{i.e.}, the argument that encapsulates the reason for a change of behaviour in some respect) 
\cite{Hunter2016comma}. 
Whether an argument is convincing or not depends on the context of the dialogue and on the characteristics of the persuadee. 
Thus, an APS may maintain a model of the persuadee, use it to predict what arguments they may know about 
and/or believe, and harness this information in order to improve the choices of move in a dialogue. 

Beliefs and concerns are of course not the only dimensions one can consider. For instance, the emotional response of an agent to an 
argument is also worth investigating (for example \cite{HadouxHunter2018foiks}) as part of ongoing research on the development of advanced APSs. However, our aim here is to bring together the research on beliefs and concerns, which so far have been considered 
separately, in order to produce a next-level APS that could then be considered as a base for further improvements.   

The majority of our previous research has focused on beliefs in arguments as being a key aspect of a user model for making good choices of moves in a dialogue. 
To represent and reason with beliefs in arguments, we have used the epistemic approach to probabilistic argumentation \cite{Thimm12,Hunter2013ijar,BaroniGiacominVicig14,HunterThimm2016ijar,PolbergHunterThimm17}, the value of which has been 
supported by experiments with participants \cite{PolbergHunter2018ijar}. 
In applying this approach to modelling a persuadee's beliefs in arguments, we have developed methods for:
\begin{inparaenum}[(1)]
\item updating beliefs during a dialogue \cite{Hunter2015ijcai,Hunter2016sum,HunterPotyka2017};
\item efficiently representing and reasoning with the probabilistic user model \cite{HadouxHunter16}; 
\item modelling uncertainty in the modelling of 
persuadee beliefs \cite{Hunter2016ecai,HadouxHunter2018aamas}; 
\item harnessing decision rules for optimizing the choice of argument based on the user model \cite{HadouxHunter17,HadouxHunter2018foiks}; 
\item crowdsourcing the acquisition of user models 
based on beliefs \cite{HunterPolberg17ictai};
\item modelling a domain in a way that supports 
the use of the epistemic approach \cite{Hunter2018domain}.
\end{inparaenum}
These developments for taking belief into account offer a well-understood theoretical and computationally viable framework for applications such as behaviour change. 

However, belief in an argument is not the only dimension of a user model that could 
be taken into account. Recent research provides some evidence that taking concerns into account can improve the persuasiveness of a dialogue \cite{HadouxHunter2018submission,Chalaguine19}. 
Thus, in order to model users better, it is worth exploring a combination of consideration of a user's beliefs with consideration of their concerns in a coherent framework for strategic argumentation. This however creates the need to empirically evaluate with participants the combined use of beliefs and concerns in persuasion.


The aim of this paper is therefore to provide a computational approach to strategic argumentation for persuasion that takes both the concerns and the beliefs of the persuadee into account. They will be used to provide a more advanced user model
which can be harnessed by a decision-theoretic APS to choose the arguments to present in a dialogue. To render this approach viable for real-time applications and to dynamically update an APS's strategy as the dialogue progresses, we present an approach based on Monte Carlo Tree Search. We evaluate our proposal in an empirical study with human participants
using an APS based on this technology which we will refer to as the {\em advanced system}. We compare its performance with an approach that does not rely either on beliefs nor concerns, which we will refer to as the {\em baseline system}. 

We proceed as follows:
(Section \ref{section:argumentation}) We review basic concepts about computational persuasion;
(Sections \ref{section:domainmodelling}, \ref{section:usermodelling} and \ref{sec:dialogues}) We present our setting, from domain and user modelling to dialogue protocol; 
(Section \ref{sec:strategies}) We present our framework for optimizing choices of moves in persuasion dialogues; 
(Section \ref{section:datascience}) We present our approach to acquiring and harnessing the crowdsourced data for user models; 
(Section \ref{section:experiments}) We present our experiments for evaluating our technology in automated persuasion systems; 
(Section \ref{section:literaturereview}) We discuss our work with respect to the related literature;  
and
(Section \ref{section:discussion}) We discuss our contributions and future work.


\section{Framework for computational persuasion}
\label{section:argumentation}
  
In order to work, an automated persuasion system needs a formal model of certain important components. This includes 
a formalization of domain and user models and a dialogue engine, as depicted in Figure \ref{fig:persuasionframework}.

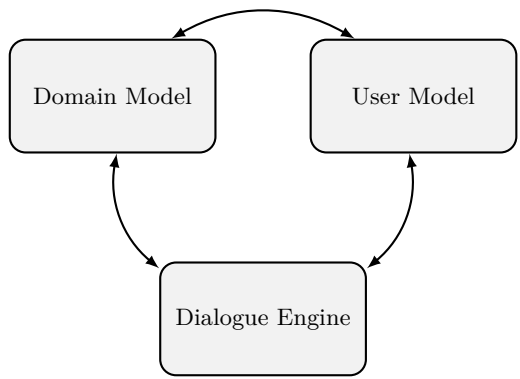
\begin{figure}[ht]
\centering
\begin{tikzpicture}[->,>=latex,thick]
\tikzstyle{every node}=[{draw,text centered, shape=rectangle, rounded corners=6pt, fill=gray!10,font=\small,text width = 25mm, minimum height=15mm}]
			\node (a1)  {Domain Model};
			\node (a2) at ([shift=({0:4 cm})]a1) {User Model};
			\node (a3) at ([shift=({-60:4 cm})]a1) [yshift = 0.5cm]  {Dialogue Engine};
			\path	([xshift=-6mm] a1.north east)[<->,bend left] edge ([xshift=6mm] a2.north west);
			\path	(a2)[<->, bend left] edge  (a3); 
			\path	(a3)[<->, bend left] edge (a1);
\end{tikzpicture} 
\caption{\label{fig:persuasionframework}
In an APS, the dialogue engine is responsible for generating the system moves and for accepting the user's moves. To do this, the dialogue engine incorporates the dialogue protocol, and manages the strategic choice of move for the APS, which it does by consulting the user model and the domain model. The domain model and user model share information about the arguments and the concerns associated with them.
}
\end{figure}

 
The main purpose of the \textbf{domain model} is to represent the arguments that can be uttered in the dialogue
by the system as well as the arguments that the user may entertain. Generally, it can be seen as a source of knowledge 
for the APS, and has to have sufficient breadth and depth in order to tackle any issues that may be raised during 
a persuasion attempt. It can be populated manually \cite{HadouxHunter2018submission,Reisert2019}, through argument mining \cite{Lippi2016,Stede2019}, through argument harvesting/crowdsourcing \cite{CHHP2018,Chalaguine19}, 
or by any combination of these methods. The contents of an argument can be textual, as seen in 
natural language dialogues or newspapers, or follow a particular logical structure, as obtained from formal knowledge bases. 
They can also be annotated according to their nature, \emph{e.g.} whether they are factual, represent social norms, 
user commitments, etc \cite{Hunter2018domain}. Independently of the choice of domain, 
it is important to highlight that it has an impact both on the user model and the dialogue engine as the source of 
kinds of arguments that can be expressed. We present our domain model in Section \ref{section:domainmodelling}, and discuss how it is populated in Section \ref{section:datascience}.

The \textbf{user model} is meant to represent what the APS believes to be true about the user. The information contained 
in the model can be harnessed during a dialogue in order to tailor it to the user and to boost its effectiveness. 
We can consider various kinds of information, such as beliefs, intentions, desires, emotions, and more. 
In this paper, we will focus on beliefs and concerns, and discuss our user models in Section \ref{section:usermodelling}. Details on how a user model is populated will be given in Section \ref{section:datascience}.
 
The main purpose of the \textbf{dialogue engine} is to offer a formal representation of a dialogue and its rules, and to 
decide on what arguments and when the system should utter during a discussion. 
In terms of representation, in this paper we assume that a \textbf{dialogue} is a sequence of moves and that 
the participants take turns in expressing themselves. 
Generally, in formal models of argument, there is a wide variety of types of move that we could consider in our dialogues, 
including positing an argument, querying, conceding, and more. They might not always be valid at any point 
in the dialogue - for instance, it is common to expect that a query move is matched by an appropriate response, 
or that a termination move should not be followed by any other moves, and so on. 
Consequently, every dialogue has to follow a particular \textbf{protocol}, which specifies what are the allowed moves that can be made at each step of a dialogue. The protocol is public and every participant has to adhere to it. 

There are many possible protocols that we could define even with a limited range of moves \cite{Prakken2005,CaminadaPodlaszewki12,BH09,FanToni11,BenchCapon2002}. Often they can be classified 
as symmetric or asymmetric, \emph{i.e.} whether all participants in the dialogue can perform all kinds of moves and 
express themselves freely, or if some restrictions are imposed on some. The latter are quite common for APSs, 
where the system may present a selection of moves to choose from to the user rather than ask them for natural language input. 
This removes the need for NLP capabilities in an APS. In Section \ref{sec:dialogues} we will provide the description of the dialogue moves and protocol that we have used in our experiments.

During a dialogue under a given protocol, every participant has to choose how to respond to the moves of other participants. 
The way they select their dialogue moves is called a \textbf{strategy} \cite{Thimm14}. This includes random approaches
(\emph{i.e.} an arbitrary move out of the permitted ones is selected) in which the participant does not care about their performance, 
as well as those in which some evaluation is taking place in order to ensure that the likelihood of the desired outcome increases. 
The random approach is typically used to serve as a baseline against which more advanced methods can be tested.
In this paper, we will present a strategy module for our APS in Section \ref{sec:strategies}, with the aim of improving the likelihood that the APS is persuasive, and present an evaluation of our strategic APS against a baseline system in Section \ref{section:experiments}. 

\section{Domain modelling}
\label{section:domainmodelling} 

In the context of this paper, we focus on arguments as they can be found in newspaper articles or in discussions between 
humans, \emph{e.g.} on forums or social media. In other words, we assume that they are pieces of text (1-2 sentences), 
and are either a short claim representing a persuasion goal or fact (\emph{e.g.} \enquote{\emph{Universities should continue charging students the \pounds 9K fee.}}), or loosely follow the premise-claim construction (\emph{e.g.} \enquote{\emph{Students should regard their university education as an investment in their future, and so they should agree to pay the student fees.}}). In order to 
improve readability, the claim may be left implicit if it is sufficiently obvious. Every argument will also be associated 
with an appropriate tag for ease of use. 

We will represent arguments and relations between them through the means of argument graphs as defined by Dung \cite{Dung95}, 
which do not presuppose any particular argument structure and focus on modelling the attack relation.  

\begin{definition}
An {\bf argument graph} is a pair $\graph = ({\cal A},{\cal R})$ where 
${\cal A}$ is a set of \textbf{arguments} and ${\cal R}\subseteq {\cal A}\times {\cal A}$ represents a binary \textbf{attack relation} 
between arguments. 
\end{definition}

For arguments $A_i, A_j \in {\cal A}$, $(A_i,A_j) \in {\cal R}$ means that $A_i$ \textbf{attacks} $A_j$ 
(accordingly, $A_i$ is said to be a {\bf counterargument for} $A_j$). 
With $\attackers(A) = \{B \mid (B,A) \in \arcs(\graph)\}$ we will denote the set of attackers of an argument $A$.
In general, we 
say that an argument $A_i$ \textbf{(indirectly) attacks} an argument $A_j$ if there is a (directed) path of odd length from 
$A_i$ to $A_j$ in $\graph$. 
In a dual fashion, $A_i$ \textbf{defends} $A_j$ against $A_k$ if $(A_i, A_k) \in {\cal R}$ and $(A_k, A_j) \in {\cal R}$, 
and $A_i$ \textbf{(indirectly) defends} $A_j$ if there is a (directed) path of non-zero even length from 
$A_i$ to $A_j$ in $\graph$. With $\initarg(\graph) = \{A \mid \attackers(A) = \emptyset\}$ we denote the set of initial arguments of a graph, i.e. arguments that are not attacked at all. 


An argument graph can be easily depicted as a directed graph, where nodes represent arguments and arcs represent attacks.
We will therefore use $\nodes(\graph)$ to denote the nodes 
in $\graph$ (\emph{i.e.} $\nodes(\graph) = {\cal A}$)
and $\arcs(\graph)$ to denote the set of arcs 
in $\graph$ (\emph{i.e.} $\arcs(\graph) = {\cal R}$). 
An example of an argument graph is in Figure \ref{fig:intro1}.

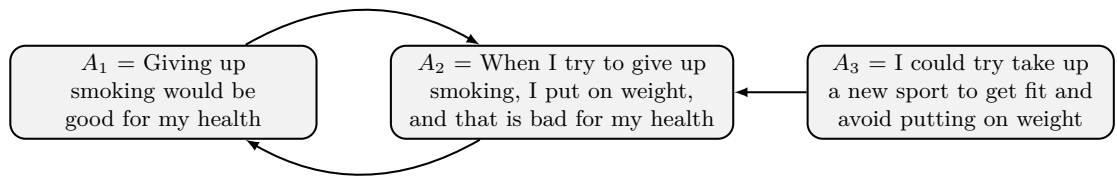
\begin{figure}[ht]
\centering
\resizebox{\textwidth}{!}{
\begin{tikzpicture}[->,>=latex,thick]
\tikzstyle{every node}=[{draw,text centered, shape=rectangle, rounded corners=6pt, fill=gray!10,font=\small}]
			\node (a1) [text width = 40mm] {$A_1$ = Giving up smoking would be good for my health};
			\node (a2) [right=of a1,text width = 45mm] {$A_2$ = When I try to give up smoking, I put on weight, and that is bad for my health};
			\node (a3) [right=of a2,text width = 40mm] {$A_3$ = I could try take up a new sport to get fit and avoid putting on weight};
			\path	(a1)[bend left] edge (a2);
			\path	(a2)[bend left] edge  (a1);
			\path	(a3) edge (a2);
\end{tikzpicture} 
}
\caption{\label{fig:intro1}
An example of an argument graph.
Here, we assume that $A_1$ and $A_2$ attack each other because they rebut each other, and we assume that $A_3$ attacks $A_2$ because it provides a counterargument to $A_2$. 
}
\end{figure}

Given an argument graph, a natural question to ask is which arguments are acceptable, \emph{i.e.}, which arguments can be seen as 
\enquote{winning}. Based on dialectical notions, Dung made some important proposals for acceptable subsets of arguments where each subset is conflict-free (\emph{i.e.}, no arguments in the set attacks another argument in the subset) and defensible (\emph{i.e.}, 
for every attacker of an argument in the subset, there is a defender in the subset). 
Numerous proposals have been made that investigate variants and developments of Dung's proposal and we refer the reader to \cite{HOFA2018} for a comprehensive review.  

In our work, we will use the definition of an argument graph. However, as we will see in Section \ref{sec:strategies}, we do not assume that the agents use dialectical semantics.
The reason for not using dialectical semantics is that we are not concerned with determining the arguments acceptable according to normative principles. Instead, we wish to model how persuasion may occur in scenarios where the participants are allowed the freedom
of opinion and as such do not need to adhere to any rationality principles. 
Certain studies show that the performance of dialectical semantics can be rather low in such applications \cite{PolbergHunter2018ijar,Rosenfeld2016}. Since we wish to construct a predictive model, we will not impose conditions for when an agent should be persuaded, but rather have a model that reflects how an agent is likely to behave. We refer to 
Section \ref{section:literaturereview} for a discussion on approaches that follow the dialectical semantics.

\section{User modelling}
\label{section:usermodelling}

In this section we will describe the type of user model that we want to incorporate into our APS. We focus on two possible 
dimensions - the concerns and the beliefs of the user -  and in the next sections we explain how they can be interpreted 
and modelled. 

\subsection{Concerns}
\label{sub:concerns}

A concern is meant to represent something that is important to an agent.
It may be something that they want to maintain (for example, a student may wish to remain \emph{healthy} during the exam period), or it may be something that they want to bring about (for example, a student may wish to \emph{do well in the exams}).
Often arguments can be seen as either raising a concern or addressing a concern, as we illustrate in the following example.

\begin{example}
\label{ex:concerns}
Consider the following arguments about abolishing student fees. Depending on the participant, the first argument could be addressing the concern of student finances whereas the second could be raising the concern of education.
\begin{itemize}
\item ($A_1$) The charging of fees should be abolished as it leads to students graduating with massive debts.
\item ($A_2$) The charging of fees should be abolished as it leads to universities investing leisure facilities rather than education facilities. 
\end{itemize}
\end{example}

The types of concern may reflect the possible motivations, agenda, or plans that the persuadee has in the domain (\emph{i.e.}, subject area) of the persuasion dialogue. 
They may also reflect the worries or issues she might have in the domain. 
When an argument is labelled with a type of concern, it is meant to denote that the argument has an impact on that concern, irrespective of the exact nature of that impact.

Various agents can, independently of each other, identify similar concerns. Thus, it may be appropriate to group these into types of concern. For example, we could choose to define the type \enquote{Fitness} to cover a variety of concepts, from 
lack of exercise through to training for marathons. 
The actual types of concern we might consider, and the scope and granularity of them, depend on the application. 
However, we assume that they are atomic, and that ideally, the set is sufficient to be able to type all the possible arguments that we might want to consider using in the dialogue. We therefore introduce the following notation:

\begin{definition}
Let $\concerndomain$ be the domain of concerns and $\args$ the set of arguments. 
The function $\concerns^{\dusertag}: \args \rightarrow 2^{\concerndomain}$ 
represents the assignment of concerns 
to arguments by the agent $\dusertag$. 
\end{definition} 
Where clear from the context, we can drop the superscript from the concern assignment.



While certain agents can agree on what kinds of concern a given argument raises or addresses, this does not mean that 
the concerns themselves are equally important or relevant to them. 
We thus also require information about the preferences over types of concern of an agent. 
For instance, if we have a collection of arguments on the topic of the university fees, we may have types such as \enquote{Student Well-being}, \enquote{Education}, or  
\enquote{Student Satisfaction}. We then may have an agent that regards 
\enquote{Student Well-Being} as the most important concern for them, \enquote{Education} being the second, and \enquote{Student Satisfaction} as last one. 
Another agent may have an entirely different preference regarding those categories, and these need to be represented, 
so that they can be harnessed by an APS to put forward more convincing arguments during a dialogue (see also Example 
\ref{ex:concern}).

\begin{definition}
With $\conpref_{\concerndomain}^{\dusertag}$ we denote the preference relation of the agent $\dusertag$ over concerns $\concerndomain$.
\end{definition}
By $C'\conpref_{\concerndomain}^{\dusertag} C$ we understand that $C$ is at least as preferred as $C'$ by agent $\dusertag$.  
When clear from the context, we can drop the super/subscripts to improve readability. 

There are some potentially important choices for how we can model agent's preferences.  
For instance, we can define them as pairwise choices or assume they form a partial or even 
linear ordering. 
While during the experiments we will focus on the linear approach 
(see Section \ref{section:datascience}), our general method is agnostic as to how preferences are represented.  
There are also numerous techniques for acquiring preferences from participants (see \emph{e.g.}, \cite{ChenPu04}, for a survey). 
We thus assume that appropriate representation and sourcing techniques can be harnessed depending on the desired application. 
 





\subsection{Beliefs}
\label{sub:beliefs}

The beliefs of the user strongly affect how they are going to react to persuasion attempts \cite{ogden2012health}. 
There is a close relationship between the belief an agent has in an argument, and the degree to which the agents regard the argument 
as convincing \cite{HunterPolberg17ictai}. Furthermore, beliefs 
can be used as a proxy for fine-grained argument acceptability, 
the need for which was highlighted by empirical studies conducted in \cite{Rahwan2011,PolbergHunter2018ijar}. 
We therefore treat the belief in arguments as a key dimension in a user model. 
In this section we explain 
how we represent belief for an individual, and how we can capture the uncertainty in that belief when considering multiple 
(sub)populations of agents.




For modelling the beliefs of a user, we use the epistemic approach to probabilistic argumentation \cite{Thimm14,Hunter2013ijar,BaroniGiacominVicig14,HunterThimm2017}, which defines 
a belief model as a \emph{probability distribution} over all possible subsets of arguments.

\begin{definition}
\label{def:beliefdistribution}
A {\bf probability distribution} over a graph $\graph$ is a function $\prob: 2^{\nodes(\graph)} \rightarrow [0,1]$ 
s.t. $\sum_{X \subseteq \nodes(\graph)} \prob(X) = 1$.
The {\bf belief in an argument} $A \in \nodes(\graph)$, denoted $\prob(A)$, is defined as:
    $$\prob(A) = \sum_{X \subseteq \nodes(\graph) \mbox{ s.t. } A \in X} \prob(X).$$
\end{definition} 

For a probability distribution $\prob$ and $A \in \nodes(\graph)$, the belief $\prob(A)$ that an agent has in $A$ is seen as 
the degree to which the agent believes $A$ is true.
When $\prob(A) > 0.5$, we say that the agent believes the argument to some degree, whereas when $\prob(A) < 0.5$, the agent disbelieves the argument to some degree. $\prob(A) = 0.5$ means that the agent neither believes nor disbelieves the argument. 
We would like to highlight that $\prob(A)$ (\emph{i.e.} the belief the agent has in argument $A$) is related to, but distinct from 
$\prob(\{A\})$ (\emph{i.e.} the probability assigned to set $\{A\}$). 

The persuader uses a belief distribution $\prob$ as a belief model of the persuadee and updates it at each stage of the dialogue 
in order to reflect the changes in persuadee's opinions. 
There are various possible ways to perform the updates (such as discussed in \emph{e.g.}, \cite{Hunter2015ijcai,Hunter2016sum}) and the method we will focus on will 
be explained in Section \ref{subsub:updating}.

 
Definition \ref{def:beliefdistribution} considers the belief we have in an argument. However, it lacks any quantification of the uncertainty about the belief in an argument. For example, an agent may be certain of the value that $\prob(A)$ takes for an argument $A$, or she may have some uncertainty associated with the assignment, with an extreme being when she is simply not sure of anything and $\prob(A)$ could take on any value in the unit interval. 
Furthermore, different agents may have different assignments when asked about their beliefs; or different reactions to the way the queries are formulated depending, for instance, on their personality (see, \emph{e.g.}, \cite{JohnsonHersheyMeszarosKunreuther93, SieglerOpfer03}).
This means that when we want to represent the probability distribution for a set of agents (to be harnessed in a user model), there is uncertainty of the value to choose for each argument. 
To address this, we use a proposal for constructing user models that is based on beta distributions \cite{HadouxHunter2018aamas}.

These distributions offer a well-established and well-understood approach to quantifying uncertainty.
Additionally, they allow for a principled way of representing subgroups within a population.
This is particularly important for applications in persuasion, where different subpopulations may have significantly different beliefs in the arguments in a dialogue. Furthermore, they may also have radically different ways of responding to specific dialogue moves.
In such situations, 
easy-to-get or already gathered data (such as a medical record for instance) can be leveraged to match a new user with a particular subpopulation in order to use a more efficient argumentation strategy.

So rather than use a probability distribution as given in Definition \ref{def:beliefdistribution} to formalize the belief in an argument, we will use a beta distribution. In a beta distribution for an argument $A$, the X axis gives the belief in the argument in the unit interval (i.e. $\prob(A)$) and the Y axis gives the probability density for that being the belief in the argument. So for a particular $x$, it is the probability of $\prob(A)$ being $x$. 

As we see in the following definition, the shape of the beta distribution is determined by two hyperparameters $\alpha$ and $\beta$. Figure \ref{fig:betaex} shows two examples of beta distributions with parameters, from left to right, $(\alpha=0.12, \beta=0.45)$ and $(\alpha=3.41, \beta=3.38)$.




\begin{definition}
\label{def:betadistribution}
A \textbf{beta distribution} $\mathcal{B}(\alpha,\beta)$ of parameters $\alpha$ and $\beta$ is a probability distribution defined as follows such that $x \in [0,1]$
\[
f(x,\alpha,\beta) 
= \frac{1}{B(\alpha,\beta)}x^{\alpha-1}(1 - x)^{\beta-1}
\]
where
\[
B(\alpha,\beta) =
\int_{0}^{1} x^{\alpha-1}(1-x)^{\beta-1} dx
\]
\end{definition}

Whilst this definition may appear complex, it gives a natural way of capturing the probability of a probability value. Furthermore, the definition can be easily understood in terms of capturing Bernoulli trials.

Given the definition for a beta distribution, the mean $\mu$ and variance $\nu$ can easily be obtained as follows.
\[
\mu = \frac{\alpha}{\alpha+\beta} 
\hspace{1cm}
\nu = \frac{\alpha\beta}{(\alpha+\beta)^2(\alpha+\beta+1)}
\]



Using the beta distributions gives us a number of advantages.
The distribution can handle the uncertainty on the belief (i.e. the uncertainty over the value assigned to $\prob(A)$ for an argument $A$)
whether it comes from the lack of prior knowledge or from the discrepancies in the cognitive evaluation of the belief.
It is also well suited to representing populations.
So if we have some data about the belief in an argument $A$ (for instance, if we ask some people for their belief in $A$ --- \emph{i.e.} their value for $\prob(A)$), we could have a sequence of values such as 0.6, 0.5, 0.6, 0.7, 0.6, 0.7 as data. From this, we can calculate a mean and variance for the data, denoted $\hat{\mu}$ and $\hat{\nu}$ respectively, and then it is straightforward to use these to estimate the $\alpha$ and $\beta$ values using the method of moments as follows, where $\hat{\alpha}$ and $\hat{\beta}$ are the estimates for $\alpha$ and $\beta$ respectively.

\[
\hat{\alpha} = \hat{\mu}\left(\frac{\hat{\mu}(1 - \hat{\mu})}{\hat{\nu}} - 1\right), 
\mbox{ if } \hat{\nu} < \hat{\mu}(1 - \hat{\mu})
\] 
\[
\hat{\beta} = (1 - \hat{\mu})\left(\frac{\hat{\mu}(1 - \hat{\mu})}{\hat{\nu}} - 1\right), 
\mbox{ if } \hat{\nu} < \hat{\mu}(1 - \hat{\mu})
\]

We can then plug the estimates $\hat{\alpha}$ and $\hat{\beta}$ into Definition \ref{def:betadistribution} to get a beta distribution that is an estimate of the beta distribution for the population.

Another advantage of beta distributions is that we use them to detect the subpopulations with homogeneous behaviours (\emph{i.e.} similar belief). In other words, we may find that the data about a population suggests that there are multiple underlying beta distributions. This can be handled by the notion of a mixture of beta distributions which we define below.

\begin{figure}[t]
  \centering
  \includegraphics[scale=0.65]{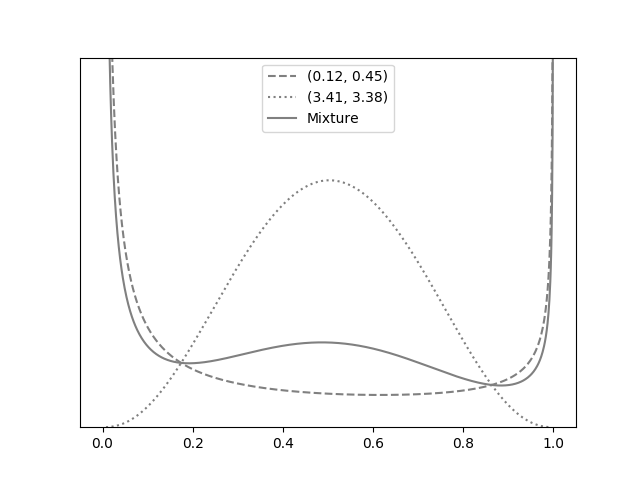}
  \caption{Examples of beta distributions with parameters $(\alpha=0.12, \beta=0.45), (\alpha=3.41, \beta=3.38)$ (zoomed in for visualisation) for argument \emph{\enquote{University education is an investment in the economy of the whole country, and therefore everyone should contribute to university education.}}}
  \label{fig:betaex}
\end{figure}


The mixture of beta distributions in Figure \ref{fig:betaex} shows the initial belief of all the participants in the argument \emph{\enquote{University education is an investment in the economy of the whole country, and therefore everyone should contribute to university education.}}.
We see that a unimodal distribution (\emph{i.e.}, containing only one \enquote{bell}) cannot accurately represent the data.
Indeed, it is composed of extreme values on both ends, and high values in the middle of the range. This multimodal shape suggests that there is a heterogeneous underlying population. Therefore, there are multiple underlying beta distributions, each representing a more homogeneous subpopulation.
For this reason, we use a mixture of beta distributions in order to create a multimodal distribution.
Each distribution is called a component, and all components are weighted and summed as a linear combination as defined next.

\begin{definition}
A \textbf{beta mixture} is characterised by a triple $\langle \bm{\alpha} = (\alpha)_{\{1, \ldots, C\}}$, $ \bm{\beta} = (\beta)_{\{1, \ldots, C\}}$, $ \bm{\pi} = (\pi)_{\{1, \ldots, C\}} \rangle$ where $C$ is the number of components, $\alpha_c, \beta_c$ and $\pi_c$ are respectively the parameters $\alpha, \beta$ and the weight associated with component $c \in \{1,\ldots,C\}$.
\end{definition}
Therefore, a mixture $M$ is calculated as follows:
$$
M(\bm{\alpha}, \bm{\beta}, \bm{\pi}) = \sum_{c=1}^C \pi_c \times \mathcal{B}(\alpha_c, \beta_c).
$$

By extension, the probability of a belief $x \in [0,1]$
(i.e. an assignment for $\prob(A)$ for an argument $A$, which is also called a sample when talking about a value in a dataset) under the mixture $M(\bm{\alpha, \bm{\beta}, \bm{\pi}})$ is:
$$
M(x; \bm{\alpha}, \bm{\beta}, \bm{\pi}) = \sum_{c=1}^C \pi_c \times \mathcal{B}(x; \alpha_c, \beta_c)
$$
where $\mathcal{B}(x;\alpha_c,\beta_c)$ is the function giving the probability that sample $x$ has been drawn from the beta distribution of parameters $\alpha_c$ and $\beta_c$.

Figure \ref{fig:betaex} also presents the mixture of the two components, combined with a weights vector $\pi = [0.15, 0.85]$.
Therefore, the probability of a belief $x$ with the mixture $M$ in Figure \ref{fig:betaex} is: 
$$P_M(x) = 0.15 \times \mathcal{B}(x;0.12, 0.45) + 0.85 \times \mathcal{B}(x;3.41,3.38).$$

So beta distributions offer a flexible and practical way of representing the belief in arguments when there is uncertainty in what that belief might be. As we will see, we can collect data from crowdsourced participants about their belief in arguments, and use this to populate the beta distributions in our user models. This involves finding the choice of components that best describes the data. This may involve a trade-off of the number of components and the fit with the data (see \cite{HadouxHunter2018aamas} for more details). 






\section{Dialogue representation and rules} 
\label{sec:dialogues}

There are many possible moves and protocols for dialogical argumentation (see Section \ref{section:literaturereview}), and they 
vary in their goals or properties. For the purpose of this study, the protocol we require would have to meet the following 
principles - \textbf{asymmetry}, \textbf{timeliness} and \textbf{incompleteness}. 
 
\textbf{Asymmetry} of a dialogue means that different parties have different types of moves available, or one 
party has control over how others can express their opinions. In our case, this means that while the system can put forward 
arguments freely, the user will need to select theirs from a list displayed to them, referred to as a menu.  
This bypasses the need for a natural language processing module within our APSs that would extract user arguments 
from free text, but puts additional burden on the domain design to increase the applicability of the menus to the users.  

\textbf{Timeliness} means that the user's counterarguments are addressed when they are stated. Typically, persuasion 
dialogues allow only for one argument to be expressed at a time, or proceed in a depth-first manner. In other words, if 
the user has three counterarguments, then the first one (and its counterarguments, and counterarguments of these counterarguments, etc) is fully explored before moving onto the second. This kind of narrative can be natural 
for various settings, such as scientific discussions, but less intuitive in others. It also makes the success of the dialogue 
vulnerable to the estimation of what the user's most important issue is (\emph{i.e.} which discussion branch should 
be explored first). Getting the order right is even more important due to the 
fact that the longer the discussion is, the less effective it may be \cite{TanNDNML16}.
Consequently, our preference would be to investigate a more breadth-first approach, where user's issues 
are not \enquote{put on hold}, which can mean that multiple arguments may have to be dealt with at each step of the dialogue. 
 
Last, but not least, \textbf{incompleteness} means we do not force all arguments to be matched with 
appropriate counterarguments. In real life, an exhaustive response can simply be too exhausting to the user, 
which creates the risk of disengagement. In some cases, dealing with key issues may be more effective than 
dealing with all of them. We are therefore more concerned with what are the allowed exchanges between the system and user 
and what effect they have. Completeness of a dialogue is often an effect of assuming that a dialogue is evaluated 
using dialectical semantics, which is something we are not doing in this study. 

In the next sections we will provide a formalization of dialogue moves and the above intuitions concerning the protocol. 
In this paper we will assume that a dialogue is a sequence of moves $\dialogue = [m_1,\ldots,m_k]$. 
Equivalently, we use $\dialogue$ as a function with an index position $i$ to return the move 
at that index (\emph{i.e.}, $\dialogue(i) = m_i$). With $\args(m_i)$ we will denote the set of arguments 
expressed in a given dialogue move; the precise definition will be given when we define the moves. 
Finally, with ${\sf Length}(\dialogue) = k$ we will denote the length of a given dialogue.

\subsection{Dialogue moves}
\label{section:dialoguemoves}



In this paper, we will only consider two types of move - the posit move, which will be used by the APS to state 
arguments, and the menu move, which will allow the user to give their counterarguments. 
In order to make certain notation easier, throughout this section we will assume that we have 
a dialogue $\dialogue$ and a graph $\graph$ s.t. the arguments expressed in the dialogue come from 
this graph. In other words, we assume that for every step $i$, $\args(\dialogue(i)) \subseteq \nodes(\graph)$.

The purpose of the \textbf{posit move} is to present  the starting argument, and after that the counterarguments to the previously presented arguments. 
In order to do so, we first consider a function returning a set of attackers of a given argument that does not contain 
statements that have already been made during a dialogue.   
 
\begin{definition}
The set of {\bf options} of argument $A$ at step $i$ in dialogue $\dialogue$ is defined as follows, where for all $j < i$, 
$\args(\dialogue(j)) \subseteq \nodes(\graph)$.
\[
\options(\dialogue,A,i) = \{ B \mid B \in \attackers(A) \mbox{ and there is no } j < i \mbox{ s.t. } B \in \args(\dialogue(j)) \}
\]
\end{definition}

With this, we define posit moves as follows: 

\begin{definition} 
For a dialogue $\dialogue$ and step $i$, the set of {\bf posit moves} is given by a function $\positmoves(\dialogue,i)$, which is 
defined as follows:
$$\positmoves(\dialogue,i) =
\begin{cases}
\{ \{A\} \mid A \in \nodes(\graph)\} & \text{ if } i=1 \\
\{ X \mid X \subseteq \bigcup_{A \in \args(\dialogue(i-1))} \options(\dialogue,A,i)\} & \text{ otherwise }\\
\end{cases}
$$ 
\end{definition}

We note that for every posit move $X \in \positmoves(\dialogue,i)$, $\args(X) = X$. We also observe 
that we do not force $X$ to contain a counterargument for every argument from the previous for $i>1$, 
thus creating the possibility of not countering certain arguments. In fact, $X$ may be an empty set, representing 
a situation where the system may have no arguments left to utter, or a scenario in which not saying anything 
may be the best outcome \cite{PaglieriCastel10,Paglieri2009}.

Now we introduce the \textbf{menu move} as a way for the user to give their input into the discussion. 
In an asymmetric dialogue, the counterarguments to choose from are displayed to the user by the system, 
and the user is meant to select their response from the list \footnote{Please note that displaying the listing does not count as a dialogue move in this approach. It is merely a way to facilitate the user move.
}
In our approach, we not only 
include the arguments to choose from, but also  the null options 
(denoted with $\nacc_A$ and $\nrej_A$) which mean that the user agrees with $A$ and does not give any 
counterarguments, or disagrees with $A$ but none of the listed arguments are applicable. 
An example of this is seen in 
Figure \ref{fig:interface}, and can be formalized as follows:

\begin{definition}
The {\bf menu listing} at step $i$ in dialogue $\dialogue$ for an argument $A$ 
is defined as follows:
\[
\menulisting(\dialogue,A,i) =
\begin{cases}
\emptyset & \attackers(A) = \emptyset \\
\options(\dialogue,A,i) \cup \{ \nrej_A, \nacc_A \} & \text{otherwise}\\
\end{cases}
\]
\end{definition}

\begin{figure}[t]
\centering
{\small
\begin{tabular}{|r l r|}
\hline
 \multicolumn{3}{|p{9cm}|}{\large \vspace{0.1cm}\textbf{Select your reason(s) against each statement presented below.}}\\
 &&\\
  \multicolumn{3}{|l|}{\normalsize Universities should continue charging students the 9K fee.}\\
  \multicolumn{3}{|l|}{\normalsize \textit{I disagree with the statement because:}}\\
  &&\\
  $\Box$ & \multicolumn{2}{p{8cm}|}{Student fees should be abolished because they are unfair.}\\
  $\Box$ & \multicolumn{2}{p{8cm}|}{Education in the UK used to be free in the past and we should continue that model.}\\
  $\Box$& \multicolumn{2}{p{8cm}|}{University education is becoming too expensive and many students end up in debts, hence the 9K fees should be abolished.}\\
  $\Box$ & \multicolumn{2}{p{8cm}|}{Student fees have a negative impact on the education quality and experience and should be abolished.}\\
  $\Box$ & \multicolumn{2}{p{8cm}|}{None apply to me and I agree with the statement.}\\
  $\Box$ & \multicolumn{2}{p{8cm}|}{None apply to me and I disagree with the statement.}\\  
  & & \textcolor{blue}{SEND} \\
  &&\\
 \hline
\end{tabular}
}
\caption{Interface for an asymmetric dialogue move for asking the user's counterarguments. Multiple statements (and their counterarguments) can be displayed, one after another.
}
\label{fig:interface}
\end{figure}


A menu move is simply a selection of responses from the menu listings against previously stated arguments, with the 
added constraints that for any given argument, a choice has to be made, and one cannot pick the null moves and counterarguments at the same time. 

\begin{definition}
For a dialogue $\dialogue$ and step $i$, the set of {\bf menu moves}  is given by a function $\menumoves(\dialogue,i)$, which is 
defined as follows and where it is assumed that for all $j < i$, $\args(\dialogue(j)) \subseteq \nodes(\graph)$:
\[
\menumoves(\dialogue,i) =  \{X_1 \cup \ldots \cup X_l \mid \forall_{1\leq h \leq l} \, X_h \in \menupick(\dialogue, A_h, i)\}
\]
where $\{A_1, \ldots, A_l \}$ is the set of all arguments in $\args(\dialogue(i-1))$ that posits an attacker in $\graph$ 
and 
\[\menupick(\dialogue, A, i) =  
\{X \mid \emptyset \neq X \subseteq \menulisting(\dialogue,A,i) \cap \nodes(\graph) \text{ or } X \in \{ \{\nrej_A\}, \{\nacc_A\} \}
\] 
\end{definition}

We observe that for a given menu move $X \in \menumoves(\dialogue,i)$, the set of presented arguments is defined as 
$\args(X) = X \cap \nodes(\graph)$.  
 
The posit and menu moves as defined here are only some of the possible moves that could be 
used in asymmetric persuasion dialogues. Nevertheless, they are sufficient for our purposes and have the benefit 
of resembling other popular non-dialogical interfaces for user input. In the next sections, we define our dialogue 
protocol based on these moves. 

\subsection{Dialogue protocol}
\label{section:protocol}

Let us now formally define the protocol for our dialogues using the previously proposed moves. 
The use of posit and menu moves and the lack of certain restrictions on them allows the protocol to 
meet our principles of asymmetry, timeliness and incompleteness. This, along 
with certain classical requirements (such as participants taking turns in expressing their opinions), leads 
to the following formalization. 
 
\begin{definition}
\label{def:protocol}
A dialogue $\dialogue = [m_1,\ldots,m_k]$ adheres to our \textbf{incomplete asymmetric dialogue protocol} 
on an argument graph $\graph$ if it satisfies
the following conditions, where $Q$ is the assignment of a participant to each step:
\begin{enumerate}
\item For each step $i \in \{1,\ldots,k\}$, $\args(m_i) \subseteq \nodes(\graph)$.

\item For each step $i \in \{1,\ldots,k\}$, if $i$ is odd, then $Q(i) = \dsystem$, else $Q(i) = \duser$.

\item For step $i = 1$, $m_i = \{A\}$ where $A$ is the persuasion goal.  

\item For each step $i$ such that $2 \leq i \leq k$, if $Q(i) = \duser$,
then $m_i \in \menumoves(\dialogue,i)$.

\item For each step $i$ such that $3 < i \leq k$, if $Q(i) = \dsystem$, then $m_i \in \positmoves(\dialogue,i)$.

\item For step $i = 3$, 
for every $A \in \args(\dialogue(i-1))$, there is $B \in \args(m_i)$ s.t. $B \in \attackers(A)$. 
\item For each step $i \geq 5$ s.t. $Q(i) = \dsystem$, 
$|\args(m_i) \setminus \initarg(\graph)|\leq 2$.
\item For the final step $k$, one of the following conditions hold, and for all steps $j < k$, neither of the conditions holds.
\begin{enumerate}
\item $m_k = \emptyset$ and $Q_k = \dsystem$, or
\item $\menumoves(\dialogue, k) = \emptyset$ and $Q_k = \duser$.
\end{enumerate}
\end{enumerate}
\end{definition}  
 
\begin{example} 

For the purpose of our experiments, we have constructed two argument graphs that we will discuss 
in Section \ref{sec:experimentgraphs}. 
Figure \ref{fig:keepingexcerpt} presents 
the subgraph of the original graph associated with the discussion visible 
in Table \ref{tab:protocolexample}. The table presents a dialogue 
between the user and one of the APSs we have implemented; for the sake of readability, we refer to arguments with their 
tags rather than textual content. 

We observe that the agents take turns in presenting their arguments, and since the definition of our dialogue moves 
forces an argument-counterargument relation, the system moves are at an even distance from the persuasion goal 
(argument $0$) and user moves are at an odd distance. In other words, they are (indirect) defends and (indirect) attackers 
of the goal respectively. The fact that not every user argument has to be answered by the system in our protocol results
in arguments $17$, $34$, $35$, $36$ and $83$ being unattacked. 
\end{example}

\begin{table}[t]
\centering
\begin{tabular}{ccc}
\toprule
Step & Agent & Move Made\\
\midrule
1 & \dsystem & $\{0\}$ \\
2 & \duser& $\{1, 2, 3, 4\}$ \\
3 &\dsystem & $\{5, 10, 12, 15\}$ \\
4 & \duser & $\{16, 17, 18, 28, 34, 35, 36, 37\}$\\
5 & \dsystem & $\{40, 55, 70, 71\}$ \\
6 & \duser & $\{81, 83, 93\}$\\
7 & \dsystem & $\{100, 113\}$ \\
\bottomrule
\end{tabular}
\caption{
An example of a dialogue adhering to the asymmetric posit protocol for the argument graph in favour of maintaining 
student fees (see Data Appendix). 
}
\label{tab:protocolexample}
\end{table}

\begin{figure}[ht!]
 \centering
\includegraphics[width=\textwidth]{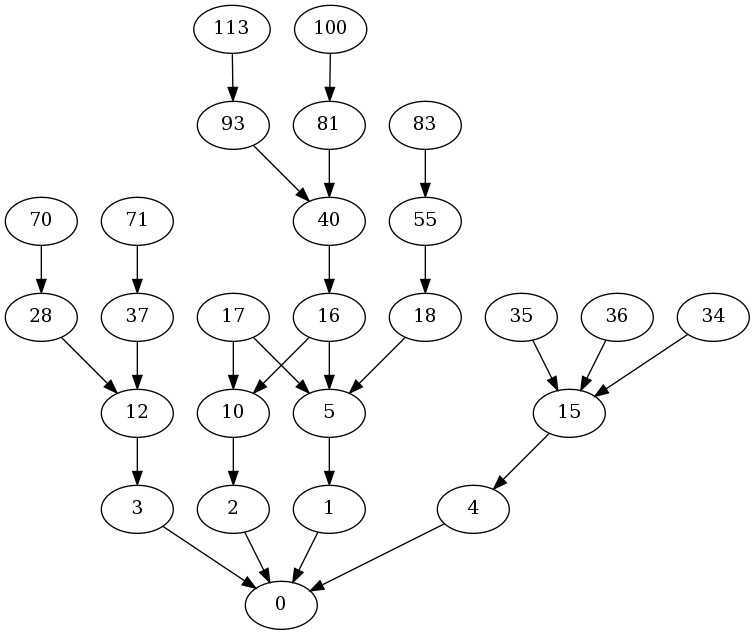}  
\caption{A subgraph of the argument graph in favour of maintaining 
student fees (see Data Appendix) induced by the dialogue from Table \ref{tab:protocolexample}.}
 \label{fig:keepingexcerpt}
\end{figure}

The above restrictions can be simply explained as follows. 
First, only the arguments that occur in the graph can be exchanged. 
The system and the user take turns in the dialogue, 
with the system making the first move by positing the persuasion goal. The system posits need to be met 
with appropriate user menu moves, and the counterarguments raised by the user may be (not necessarily fully) addressed by system posits. The system is only forced to give a complete response to the first user move - after that, the responses can be partial. In particular, we limit the number of active dialogue lines to two, i.e., that at most two arguments that can still be responded to by the user can be played by the system. This also means that any number of initial arguments can be played, as they do not lead to further discussions.
Finally, the dialogue terminates if  the system decides not to play any arguments or no further moves can be made. 
This can happen if neither the user nor the system 
have any arguments left to play (\emph{i.e} we have reached leaf arguments), the user concedes or all of their counterarguments 
are outside of the domain (\emph{i.e.} the user chooses only the $\nacc$ or $\nrej$ responses).

At the end of the dialogue, the system (\emph{i.e.}, the persuader) hopes to have convinced the user (\emph{i.e.}, the persuadee) to accept the persuasion goal (\emph{i.e.}, the first argument in the dialogue). 
We do not use dialectical semantics to determine whether the persuasion goal is a winning argument (for example, if it is in a grounded or preferred extension of the subgraph of $\graph$ induced by the arguments that appear in the dialogue). 
In our APSs, the dialogue will be evaluated by asking the user whether they believe the persuasion goal and to what degree
after the discussion has ended (see Section \ref{section:experiments}).

Our approach is one of many, and 
there exist various different dialogue protocols (see also Section \ref{section:literaturereview}). Nevertheless, 
we are not aware of other methods that would adhere to our principles.
This protocol is different to the dialogue protocols for abstract argumentation that are used for determining whether specific arguments are in the grounded extension \cite{Prakken2005} or preferred extension \cite{CaminadaPodlaszewki12}. 
It is also different to the dialogue protocols for arguments that are generated from logical knowledge bases (\emph{e.g.}, \cite{BH09,FanToni11}). 
Those protocols are concerned with determining the winning arguments in a dialogue in a way that is sound and complete with respect to the underlying knowledge base. Finally, it is worth noting that many proposals for dialogical argumentation protocols involve depth-first search (\emph{e.g.}, \cite{BenchCapon2002}), which goes against our timeliness requirement.  


\section{Dialogue strategies}
\label{sec:strategies}

Typically, at any step of the dialogue, there can be multiple move options to choose from. In other words, particularly 
with large domains, more often than not it can be the case that 
$|\positmoves(\dialogue,i)| >1$ for the system and $|\menumoves(\dialogue,i)| >1$ for the user. 
While the aim of the APS is to select an appropriate posit move out of the available ones, the estimation 
of user's actions can affect what \enquote{appropriate} is. Consequently, the APS needs some strategy on how to proceed, 
and in this section we will focus on two options - the MCTS-based and the baseline strategies. The first one is an advanced 
method, harnessing the information contained in the user model; the other one serves as a basis for comparison and represents 
an agent selecting random moves out of the available ones at each step of the dialogue. 

\subsection{Advanced strategy}
\label{sec:advancedstrategy}
 
Our approach to making strategic choices of move is to harness decision trees \cite{HadouxHunter17}.
A \textbf{decision tree} represents all the possible combinations of decisions and outcomes of a sequential decision-making problem.
In a situation with two agents taking turns, a path from the root to any leaf crosses alternately nodes associated with the proponent (called \emph{decision nodes}) and nodes associated with the opponent (called \emph{chance nodes}). In our case, the role 
of the proponent is played by the APS, and user is the opponent. 

In the case of dialogical argumentation, a full decision tree represents all the possible dialogues. Each path from the root to a leaf is one possible permutation of the moves permitted by the dialogue protocol \emph{i.e.}, one possible complete dialogue between the two agents.
An edge between any two nodes $n$ and $n'$ in the tree is the decision (\emph{i.e.}, dialogue move) that has to be taken by the corresponding agent in order to transition from node $n$ to node $n'$. We give an example in Figure \ref{fig:new} where, for the sake of readability, each move is the posit of a single argument (we note that the protocol allows for exchanging sets of arguments).   
Note that in our case the decision tree is from the point of view of the proponent.
Therefore, even if both the proponent and the opponent make decisions on the next argument to put forward, from the point of view of the proponent, only her moves are decisions (hence \emph{decision nodes}) and the opponent moves can only be predicted and later observed (hence \emph{chance nodes}).

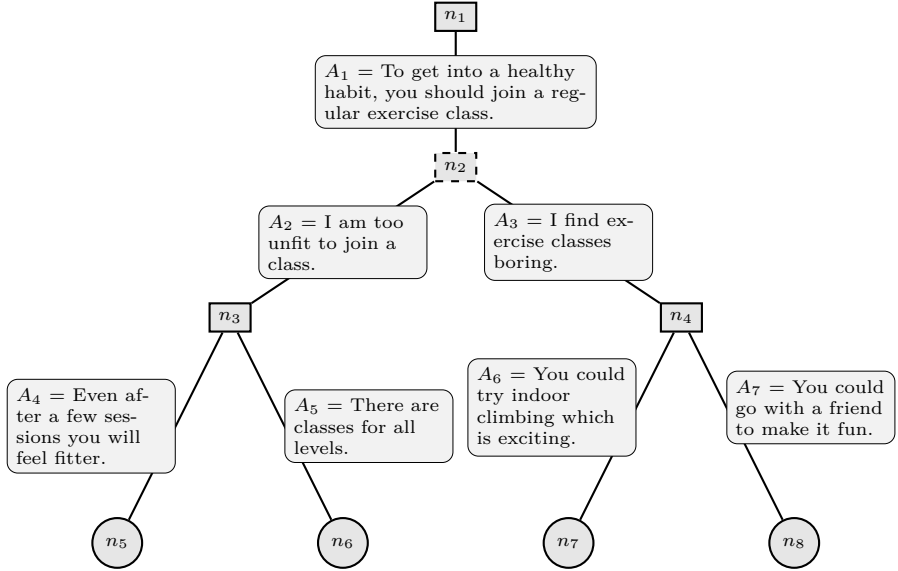
\begin{figure}[hbt]
\begin{center}
\begin{tikzpicture}[-,thick]
\node[draw,rectangle,fill=gray!20] (n1)  at (4.5,6) {\scriptsize $n_1$};
\node[draw,rectangle,dashed,fill=gray!20] (n2)  at (4.5,4) {\scriptsize $n_2$};
\node[draw,rectangle,fill=gray!20] (n3)  at (1.5,2) {\scriptsize $n_3$};
\node[draw,rectangle,fill=gray!20] (n4)  at (7.5,2) {\scriptsize $n_4$};
\node[draw,circle,fill=gray!20] (n5)  at (0,-1) {\scriptsize $n_5$};
\node[draw,circle,fill=gray!20] (n6)  at (3,-1) {\scriptsize $n_6$};
\node[draw,circle,fill=gray!20] (n7)  at (6,-1) {\scriptsize $n_7$};
\node[draw,circle,fill=gray!20] (n8)  at (9,-1) {\scriptsize $n_8$};
\path (n2)[] edge[] node[draw,rectangle,rounded corners,line width=0.3pt,text width=35mm,font=\scriptsize,fill=gray!10]{$A_1$ = To get into a healthy habit, you should join a regular exercise class.} (n1);
\path (n3)[] edge[] node[draw,rectangle,rounded corners,line width=0.3pt,text width=20mm,font=\scriptsize,fill=gray!10]{$A_2$ = I am too unfit to join a class.} (n2);
\path (n4)[] edge[] node[draw,rectangle,rounded corners,line width=0.3pt,text width=20mm,font=\scriptsize,fill=gray!10]{$A_3$ = I find exercise classes boring.} (n2);
\path (n5)[] edge[] node[draw,rectangle,rounded corners,line width=0.3pt,pos=0.5,left,text width=20mm,font=\scriptsize,fill=gray!10]{$A_4$ =  Even after a few sessions you will feel fitter.} (n3);
\path (n6)[] edge[] node[draw,rectangle,rounded corners,line width=0.3pt,pos=0.5,right,text width=20mm,font=\scriptsize,fill=gray!10]{$A_5$ = There are classes for all levels.} (n3);
\path (n7)[] edge[] node[draw,rectangle,rounded corners,line width=0.3pt,pos=0.6,left,text width=20mm,font=\scriptsize,fill=gray!10]{$A_6$ = You could try indoor climbing which is exciting.} (n4);
\path (n8)[] edge[] node[draw,rectangle,rounded corners,line width=0.3pt,pos=0.6,right,text width=20mm,font=\scriptsize,fill=gray!10]{$A_7$ = You could go with a friend to make it fun.} (n4);
\end{tikzpicture}
\end{center}
\caption{\label{fig:new} A decision tree for an argumentation dialogue. Each arc is labelled with a posit move in a dialogue, which for readability purposes is assumed to consist of only single arguments in this example. Each branch denotes a dialogue involving exactly three arguments with the first (respectively the second) being posited by the proponent (respectively the opponent). The proponent (decision) nodes are solid boxes, the opponent (chance) nodes are dashed boxes and the leaf nodes are circles.}
\end{figure}



In order to compare different dialogues so as to be able to select the best one that can be reached from each step, 
we need to define a {\bf reward function} that gives a value of the dialogue or outcome of the dialogue to the system. 
Every node in a tree can then be evaluated based on the outcomes it leads to. Hence, for every decision node, 
we can also find an action to perform (\emph{e.g.}, the arguments to posit in each state 
of the debate) that would lead to a more beneficial result according to a given criterion.

Decision trees are useful tools in artificial intelligence. However, they also have their limits, and quickly become unmanageable 
in applications with a large number of possible outcomes. For instance, while we can use decision trees for a tic-tac-toe game, 
Go is too complicated. Unfortunately, given the sizes of argument graphs we will be dealing with in this paper, the same holds 
for our APS. A possible solution is to make use of appropriate sampling techniques that explore only certain branches of 
the tree (\emph{i.e.} only certain dialogues) rather than all of them, and in this paper we will rely on the Monte Carlo Tree Search 
method.  
%
%
%

Monte-Carlo Tree Search (MCTS) \cite{Coulom07} methods are amongst the most efficient online methods to approximately solve large-sized sequential decision-making problems (for a review, see \cite{Browne2012}). This method is notably used in the \emph{Partially Observable Monte-Carlo Planning} (POMCP) algorithm \cite{SilverVeness10} and in applications such as Alpha-Go \cite{Silver16}. Unlike traditional decision tree solving methods such as backward induction, using an MCTS is significantly less 
affected by the dimensionality. Since the branching factor (\emph{i.e.} the number of actions we can perform in a given node) 
increases with the number of arguments we can play, choosing a resilient method is vital for the efficiency of our system.





\subsubsection{Monte Carlo Tree Search}
\label{section:montecarlo}

The approach can be roughly split into four phases - selection, expansion, simulation and backpropagation - that are repeated until the desired number of simulations has occurred or some time limit is reached. 

\begin{description}
\item[Selection:] Starting from the root of the tree (the current state of the dialogue) the algorithm chooses an action to perform in a black box simulator of the environment. It uses the UCB1 \cite{AuerCesaBianchiFischer02} procedure to choose the action and then observes the new state of the environment that is output by the simulator. It then goes down a level in the tree depending on this new state. The algorithm repeats this step until it reaches a leaf in the tree.
\item[Expansion:] If this leaf is not a terminal state of the problem (\emph{i.e.}, a possible end of the dialogue), the algorithm expands the tree at this leaf and adds a child for each possible subsequent state.
\item[Simulation:] Once the leaf node has been expanded (and is thus not a leaf anymore) the algorithm simulates all the subsequent steps in the dialogue until it reaches a possible terminal state. This simulation does not expand the tree.
\item[Backpropagation:] Once a terminal state has been reached, a reward can be calculated and then backpropagated up in the tree to calculate the most promising nodes.
\end{description}

These four steps are repeated until the desired number of simulations has occurred or some time limit is reached.
At this point, the most promising next argument in the dialogue is selected as the child of the root node with the highest backpropagated reward.
This argument is played in the real dialogue, a new state is observed after the user responds and the root of the simulation tree is moved down to the node representing this new state.

\subsubsection{Reward function}
\label{section:reward}

The purpose of the reward function is to be able to compare dialogues; the higher the reward, the better or more desirable 
the dialogue is. In our framework, the reward function is based on the usage of concerns arising in the arguments, 
and the belief in the persuasion goal at the end of the dialogue. 
For this, we have designed the reward function in two parts, that are ultimately combined into one single value, 
as we describe in this section. 

\paragraph{Scoring of concerns}\hfill
\label{section:scoringconcerns}


In this subsection, we will introduce a function for scoring a dialogue in terms of the concerns that are associated with the arguments presented in the dialogue, and the user's preferences over them. 
We will use this function to compare dialogues as part of a reward function in Section \ref{sec:rerwardfunction}. 
Note, in this paper, we assume that concerns and preferences between them are static and so not updated during the dialogue. 

The aim of the concern scoring function is to reflect how well the arguments posited by the system match the 
user's preferences over concerns. 
We assume that arguments covering fewer, but more important concerns to the user, 
are more interesting than arguments covering more, but less relevant concerns.


We thus aim to select the most 
appropriate argument(s) to state out of the possible ones.  
Every argument uttered in a dialogue (aside from the persuasion goal) 
is stated in response to one or more arguments that appeared in the previous move (see Section \ref{section:protocol}). 
We can thus speak about \enquote{dialogue parents} of a given argument (\emph{i.e.}, 
arguments at a previous step that they attack) 
and \enquote{dialogue siblings} (\emph{i.e.}, all arguments that could potentially be used against a dialogue parent). 
We can then analyze the concerns associated with a given argument, with its siblings, with those that have appeared in 
the dialogue and those that have not. 

\begin{notation}
We introduce the following notation, where $\concerns(A)$ denotes 
the concerns associated with an argument $A$.

%
%
%
%

\begin{tabu} to \textwidth{rcX}
$\concerns(\dialogue,i)$ &=& $\bigcup_{A \in \args(\dialogue(i))} \concerns(A)$\\
$\siblings(\dialogue,A)$ &=& $\{A' \mid \exists \, 1< i \leq {\sf Length}(\dialogue)$, $B \in \args(\dialogue(i-1))$ s.t. 
$A \in \args(\dialogue(i))$, $(A,B) \in \arcs(\graph)$ and $(A',B) \in \arcs(\graph)\}$\\
$\sibcon(\dialogue,A)$ &=& $\bigcup_{A' \in \siblings(\dialogue,A)} \concerns(A')$\\
$\sibcon(\dialogue,i)$ &=& $\bigcup_{A \in \args(\dialogue(i))} \bigcup_{A' \in \siblings(\dialogue,A)} \concerns(A')$\\
$\exsibcon(\dialogue,i)$ &=& $\sibcon(\dialogue,i) \setminus \concerns(\dialogue,i)$\\
\end{tabu}

\end{notation}

We explain these functions as follows:
$\concerns(\dialogue,i)$ gives the concerns of the arguments stated at the i-th step of a dialogue $\dialogue$; 
$\siblings(\dialogue, A)$ gives the dialogical siblings of an argument $A$, representing other arguments that attack the target of argument $A$ in the dialogue; 
$\sibcon(\dialogue,A)$ gives the concerns of the siblings of an argument $A$ in a dialogue $\dialogue$,
$\sibcon(\dialogue,i)$ gives the concerns of the siblings of all argument stated at the i-th step of a dialogue $\dialogue$;
and
$\exsibcon(\dialogue,i)$ gives the concerns of the siblings of all arguments stated at the i-th step of a dialogue $\dialogue$ excluding the concerns associated with the arguments that appeared at that step.


In addition, we require the function $\prefscore(C',C) \in [0,1]$ which states the proportion of population who prefer concern $C'$ over concern $C$. Crowdsourcing the preferences of participants is explained in Section \ref{section:preferenceconcerns}.

\begin{definition}
Let $\{\dusertag_i\}_{i=1}^n$ be a non-empty set of agents. For a given $C', C \in \concerndomain$,
$$\prefscore(C',C) = \frac{|\{\dusertag \mid \dusertag \in \{\dusertag_i\}_{i=1}^n, \, C \conpref_{\concerndomain}^{\dusertag} C'|}{n} = \frac{1}{n}\left(\sum_{i=1}^n \mathds{1}_{C \conpref_{\concerndomain}^{U_i} C'}\right)$$
where $\conpref_{\concerndomain}^{\dusertag}$ is the preference relation over concerns associated with agent $\dusertag$. 
\end{definition}

The concern score of a given dialogue step associated with the system 
is now defined in terms of how \enquote{good} or \enquote{bad} the sibling arguments 
that were not played are. For obvious reasons, the first step in which the persuasion goal is played is ignored. 
The score of the dialogue is then simply an average of these values:

\begin{definition}
Assume ${\sf Length}(\dialogue) = k \geq 3$ and let $n$ be the number of odd steps after the first step (\emph{i.e.} $n = \lceil (k/2) - 1 \rceil$).
The {\bf concern score} is 
\[
\concernscore(\dialogue) =
\frac{1}{n}
\sum_{i = 1}^{n} 
\bigg(
1 - \nonchosenscore(\dialogue,i)
\bigg)
\]
where the {\bf non-chosen} score is calculated as $\nonchosenscore(\dialogue,i)$ =
\[
\frac{1}{\mid\concerns(\dialogue,2i+1)\mid}
\times
\bigg(
\sum_{C \in \concerns(\dialogue,2i+1)} 
\sum_{C' \in \exsibcon(\dialogue,2i+1)}
\frac{\prefscore(C',C)}{\mid \exsibcon(\dialogue,2i+1)\mid}
\bigg)
\] 
\end{definition}


%

The non-chosen score for a step of the dialogue generates an average preference score for each non-chosen concern. This is done by taking each concern of the arguments played (\emph{i.e.}, $\concerns(\dialogue,2i+1)$) and each concern of the sibling arguments that are not played (\emph{i.e.}, $\exsibcon(\dialogue,2i+1)$). This average preference score is normalised by the number of concerns that appear in the chosen arguments. So the more that there is a concern $C'$ that is not played and a concern $C$ that is played and $C'$ is population-preferred to $C$ then the greater the non-chosen score is. 
This effect is normalised by the number of concerns raised by these non-played arguments (\emph{i.e.}, $\exsibcon(\dialogue,2i+1)$). This is to ensure that arguments are not favoured if they have many concerns associated with them. In other words, there is a bias in favour of arguments that have focused concerns.


%
%
%

We observe that the concern score of a dialogue is always in the $[0,1]$ interval. 

\begin{proposition}
For all dialogues $\dialogue$, $\concernscore(\dialogue) \in [0,1]$
\end{proposition}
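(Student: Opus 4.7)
The plan is a routine bounding argument: show the summand $\nonchosenscore(\dialogue,i)$ lies in $[0,1]$ for each eligible step $i$, then deduce that each contribution $1 - \nonchosenscore(\dialogue,i)$ lies in $[0,1]$, and finally observe that an arithmetic mean of values in $[0,1]$ is itself in $[0,1]$.

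First I would fix a step $i$ with $1 \leq i \leq n$ and analyse the non-chosen score. By the definition of $\prefscore$, each term $\prefscore(C',C)$ is a proportion of a finite population and therefore lies in $[0,1]$. The inner sum over $C' \in \exsibcon(\dialogue,2i+1)$ contains exactly $|\exsibcon(\dialogue,2i+1)|$ terms, each of the form $\prefscore(C',C)/|\exsibcon(\dialogue,2i+1)|$, hence lies in $[0,1]$. Summing this quantity over all $C \in \concerns(\dialogue,2i+1)$ yields at most $|\concerns(\dialogue,2i+1)|$ and at least $0$; dividing by $|\concerns(\dialogue,2i+1)|$ therefore gives $\nonchosenscore(\dialogue,i) \in [0,1]$. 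Consequently $1 - \nonchosenscore(\dialogue,i) \in [0,1]$, and averaging over the $n$ odd steps after the first preserves membership in $[0,1]$.

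The only real subtlety is to make sure the formulas are well-defined, i.e. that we do not divide by zero in $|\concerns(\dialogue,2i+1)|$ or $|\exsibcon(\dialogue,2i+1)|$. Since the definition of $\concernscore$ requires ${\sf Length}(\dialogue) \geq 3$, at each relevant step the system has posited at least one argument, so $\concerns(\dialogue,2i+1)$ is non-empty whenever the posited arguments carry concerns; for $\exsibcon$, if the set is empty the inner sum is vacuously $0$ and we adopt the standard convention that the corresponding factor contributes $0$ to $\nonchosenscore(\dialogue,i)$, so the bound $\nonchosenscore(\dialogue,i) \in [0,1]$ still holds. I expect the main (and only mild) obstacle to be making this edge-case convention explicit; once that is handled, the argument is a direct chain of inequalities and requires no deeper machinery.
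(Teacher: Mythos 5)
Your proof is correct and follows essentially the same route as the paper's: both bound $\nonchosenscore(\dialogue,i)$ in $[0,1]$ by noting each $\prefscore(C',C)\in[0,1]$ and that the two normalisations by $|\exsibcon(\dialogue,2i+1)|$ and $|\concerns(\dialogue,2i+1)|$ exactly cancel the number of summands, then conclude that the average of the terms $1-\nonchosenscore(\dialogue,i)$ stays in $[0,1]$. Your explicit handling of the $\exsibcon(\dialogue,2i+1)=\emptyset$ case matches the paper's remark that the non-chosen score is $0$ there, so nothing essential differs.
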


\begin{proof}
Let $\mid\concerns(\dialogue,2i+1)\mid = x$
and let $\mid\exsibcon(\dialogue,2i+1)\mid = y$.
Since, for any concerns $C,C'$, $\prefscore(C',C) \in [0,1]$,
the non-chosen score is maximum when $\prefscore(C',C) = 1$ for each $C' \in \exsibcon(\dialogue,2i+1)$ and each $C \in \concerns(\dialogue,2i+1)$. 
In which case, the non-chosen score can be rewritten as $1/x(xy(1/y))$,
which is 1.
The non-chosen score is minimum when $\prefscore(C',C) = 0$ for each $C' \in \exsibcon(\dialogue,2i+1)$ and each $C \in \concerns(\dialogue,2i+1)$, 
or when $\exsibcon(\dialogue,2i+1) = \emptyset$.
In which case, the non-chosen score is 0.
\end{proof}

\begin{figure}
\begin{center}
\begin{tikzpicture}[->,>=latex,thick]
\tikzstyle{every node}=[{draw,text centered, shape=rectangle, rounded corners=6pt, fill=gray!10,font=\small}] 
\node (a10) [] at (5,3) {$A_{10}$};
\node (a21) [] at (2,2){$A_{21}$};
\node (a22) [] at (8,2) {$A_{22}$};
\node (a31) [] at (0,1){$A_{31}$};
\node (a32) [] at (4,1) {$A_{32}$};
\node (a33) [] at (6,1){$A_{33}$};
\node (a34) [] at (10,1){$A_{34}$};
\node (a42) [] at (4,0) {$A_{42}$};
\node (a52) [] at (2,-1) {$A_{52}$};
\node (a53) [] at (6,-1) {$A_{53}$};
\path	(a21) edge (a10);
\path	(a22) edge (a10);
\path	(a31) edge (a21);
\path	(a32) edge (a21);
\path	(a33) edge (a22);
\path	(a34) edge (a22);
\path	(a42) edge (a32);
\path	(a52) edge (a42);
\path	(a53) edge (a42);
\end{tikzpicture}
\end{center}
\caption{\label{fig:concernscore}Argument graph used in Example \ref{ex:concernscore}}
\end{figure}

\begin{example}
\label{ex:concernscore}
Consider the following dialogue based on the argument graph in Figure \ref{fig:concernscore} where the graph and dialogue are hypothetical: 

\begin{center}
\begin{tabular}{ccc}
\toprule
Step & Agent & Arguments played\\
\midrule
1 & System & $A_{10}$\\
2 & User & $A_{21},A_{22}$\\
3 & System & $A_{32},A_{33}$\\
4 & User & $A_{42}$\\
5 & System & $A_{52}$\\
\bottomrule
\end{tabular}
\end{center}

Assume that the concerns associated with the arguments and the population preference scores are as follows:

\[
\begin{array}{l c r}
 \concerns(A_{31}) = \{C_1\} & \concerns(A_{32}) = \{C_2\} & \concerns(A_{33}) = \{C_3\} \\ 
 \concerns(A_{34}) = \{C_4\} & \concerns(A_{52}) = \{C_2\} & \concerns(A_{53}) = \{C_3\} \\
\prefscore(C_1,C_2) = 1/4 & \prefscore(C_1,C_3) = 1/4 & \prefscore(C_2,C_3) = 1/4 \\
\prefscore(C_4,C_2) = 1/4 & \prefscore(C_4,C_3) = 1/4 & \\
\end{array}
\]

%
From this, we obtain the concern score as follows:
\[\arraycolsep=1.2pt
\begin{array}{lll}
\concernscore(\dialogue) & = &\frac{1}{2} \times ((1 - \nonchosenscore(\dialogue,1)) + (1 - \nonchosenscore(\dialogue,2)))\\
& = &\frac{1}{2} \times (\frac{3}{4} + \frac{3}{4}) = \frac{3}{4}\\
\end{array}
\]
where
\[\arraycolsep=1.2pt
\begin{array}{lll}
\nonchosenscore(\dialogue,1) &=& \frac{1}{2} \times (\frac{\prefscore(C_1,C_2)}{2} 
+ \frac{\prefscore(C_4,C_2)}{2} 
+ \frac{\prefscore(C_1,C_3)}{2} 
+ \frac{\prefscore(C_4,C_3)}{2})\\
&=& \frac{1}{2} \times (\frac{1}{8} + \frac{1}{8} + \frac{1}{8} + \frac{1}{8})
= \frac{1}{4}\\
\end{array}
\]
and
\[\arraycolsep=1.2pt
\begin{array}{lll}
\nonchosenscore(\dialogue,2) 
&=& \frac{1}{1} \times (\frac{\prefscore(C_3,C_2)}{1})
= \frac{1}{1} \times \frac{1}{4}
= \frac{1}{4}.\\
\end{array}
\]
\end{example} 

The concern score combines the information about the concerns associated with the arguments appearing and not appearing in the dialogue, and the relative preference over those concerns, to a single value in the unit interval. The definition incorporates a bias favouring arguments that have fewer concerns associated with them. This and other features of the definition could be further investigated, and alternative definitions could be devised and harnessed with our framework for strategic argumentation. We leave this task for future 
work.

\paragraph{Updating the user's beliefs}\hfill
\label{subsub:updating}

The user model gives the predicted beliefs of the user in the arguments at the start of the dialogue.
By the end of the dialogue, the user's beliefs may have changed as a result of the discussion. 
We therefore need an appropriate update function for producing new, more accurate beliefs throughout the dialogue. 

In principle, we could update a user model during a dialogue using a belief redistribution function that takes the old probability distribution and returns a revised one. 
To do this, we could consider the notion of an \textbf{update method} $\sigma(\prob_{i-1},\dialogue(i)) = \prob_i$, 
generating a belief distribution $\prob_i$ from $\prob_{i-1}$ based on the move $\dialogue(i)$ in dialogue $\dialogue$.
If the move is a posit of argument $A$, the belief in $A$ and its ancestors should be modified. 

However, if the update is applied on all of the possible subsets of arguments, it may lead to a computationally intractable problem.
To address this issue, we can for instance exploit the structure of the argument graph $\cal G$ \cite{HadouxHunter16} or define the belief directly on the singleton arguments \cite{HunterT16,Potyka19a,PPH19a}.
We choose the latter in this work as it is a computationally efficient option, and it allows us to modulate the update in terms of the attackers as we describe below. Furthermore, as we will see, we will update the values after a sequence of moves have been made rather than on a step-by-step basis.

\begin{definition}
A \textbf{probability labelling} is defined as $\lab: \args(\graph) \rightarrow [0,1]$. 
The probability labelling associated with a probability distribution $\prob$ is 
$\lab_{\prob}$ s.t. $\lab_{\prob}(A) = \prob(A)$ for every $A \in \args(\graph)$.
\end{definition}

For further details on the properties of such labellings we refer to \cite{HunterT16,Potyka19a,PPH19a}. What is important to note is that every probability distribution has a corresponding labelling, and for every labelling we can find at least one probability distribution producing it.

We also need the following notion of an {\bf induced graph}, i.e. the subgraph of $\cal G$ containing exactly the arguments (and any relations between them) occurring in a dialogue up to a given step $i$:

\begin{definition}
Let $\dialogue$ be a dialogue and $\graph$ a graph. 
With $\graph_{\dialogue(i)} = (X, (X \times X) \cap \arcs(\graph))$ we denote \textbf{the graph induced by the dialogue} up to step $i$, where $X = \bigcup_{j=1}^i \args(\dialogue(i))$.
By $\attackers_i(A)$ we understand the attackers of $A$ in ${\cal G}_{D(i)}$.
\end{definition}
 
\begin{example}
Consider the argument graph $\cal G$ in Figure \ref{fig:concernscore}.
If $D(1) = \{A_{10}\}$, 
$D(2) = \{A_{21},A_{22}\}$,
and 
$D(3) = \{A_{32}\}$,
then the graph induced up to step $3$ is visible in Figure \ref{fig:indgraph}.
Using the induced graph, we observe that $\attackers_3(A_{10}) = \{A_{21},A_{22}\}$,
$\attackers_3(A_{21}) = \{A_{32}\}$,
$\attackers_3(A_{22}) = \emptyset$,
and
$\attackers_3(A_{32}) = \emptyset$.
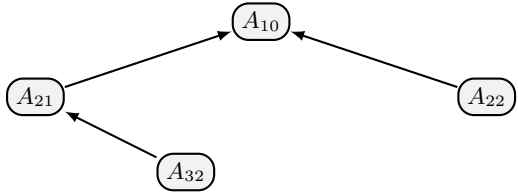
\begin{figure}[h]
\centering
\begin{tikzpicture}[->,>=latex,thick]
\tikzstyle{every node}=[{draw,text centered, shape=rectangle, rounded corners=6pt, fill=gray!10,font=\small}] 
\node (a10) [] at (5,3) {$A_{10}$};
\node (a21) [] at (2,2){$A_{21}$};
\node (a22) [] at (8,2) {$A_{22}$};
\node (a32) [] at (4,1) {$A_{32}$};
\path	(a21) edge (a10);
\path	(a22) edge (a10);
\path	(a32) edge (a21);
\end{tikzpicture}
\caption{Example of an induced argument graph.}
\label{fig:indgraph}
\end{figure}
\end{example} 

Throughout the rest of this section, we will assume that we are working with a graph $G_{D(n)}$ induced by a dialogue $(D(1), \ldots, D(n))$.

We consider three stages of belief for an argument (and hence three labellings) as arising in a dialogue (and so attack and defence is with respect to the arguments in the dialogue).

\begin{itemize}
\item the initial belief \textbf{(init)} when the argument has just been played, 
\item the attacked belief \textbf{(att)} after the argument has been attacked, and 
\item the reinstated belief \textbf{(reinst)} if the argument is  defended at the end of the dialogue. 
\end{itemize}
The reinstated belief corresponds to the value we use to evaluate the belief in an argument (and thus in the goal) at the end of the dialogue. Note, when a belief is reinstated, it is not necessarily reinstated to its original value. Rather, depending on the belief in its attackers, the reinstated belief may be below its original value, as shown in \cite{Rahwan2011}.

    
    
    
        
        
    

In order to model this behaviour of partial effectiveness of updating belief, we introduce the following coefficients to play 
the role of dampening factors (as suggested in \cite{Bonzon2017}) which causes the effect of an attacker to decrease as the length of the chain of arguments increases. We will introduce the  $k^A_{inter}$ coefficient that we will later use to obtain the attacked belief from the initial belief for a given argument $A$, and the $k^A_{reinst}$ coefficient that we will later use to obtain the reinstated belief from the attacked belief. 



\begin{definition}
For $\sigma \in \{inter,reinst\}$,
the  {\bf $\sigma$ coefficient}, denoted $k^A_{\sigma}$, is defined as follows: 
If $\attackers_n(A)$ is empty, 
then $k^A_{\sigma} = 1$, else it is defined 
as follows where $init(B)$ (respectively $reinst(B)$) is the initial (respectively reinstated) belief of the attacker $B$ .
\[
k^A_{\sigma} = \sum_{X \subseteq \attackers_n(A)} (-1)^{|X|} \times \prod_{B \in X} \sigma(B)
\]
\end{definition}




The definition of these coefficients provides a balance between how strongly the attackers are believed and their number. There are other ways to aggregate beliefs of attackers, each with their advantages and disadvantages. 
Summing the attackers' beliefs, while simple, may not be in [0,1], whereas taking the maximal or average belief does not 
reflect the number of attackers.
Our definition addresses these issues and additionally provides a form of dampening effect so that the influence of an argument decreases as the length of the chain of arguments increases. 


In the following, we show 
that: the coefficients are in the unit interval (Proposition \ref{prop:kunit});
when there are attackers of an argument,
and all the attackers have an initial belief of 1 (respectively 0), then the $k^A_{inter}$ (respectively $k^A_{reinst}$) coefficient is 0 (respectively 1) 
(Proposition \ref{prop:kextreme});
increasing the initial belief in the attackers decreases the initial belief in the attackee (Proposition \ref{prop:kstronger}):
and increasing the set of attackers decreases the initial belief in the attackee 
(Proposition \ref{prop:kmore}).

\begin{proposition}
\label{prop:kunit}
For an argument $A$, $k^A_{init} \in [0,1]$ 
and $k^A_{reinst} \in [0,1]$
\end{proposition}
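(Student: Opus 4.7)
The plan is to prove both bounds simultaneously by recognising that the alternating sum in the definition of $k^A_\sigma$ is the expansion of a single product. Concretely, I would first handle the trivial base case: if $\attackers_n(A) = \emptyset$, then by definition $k^A_\sigma = 1 \in [0,1]$, so both claims hold immediately.

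For the main case, let $\attackers_n(A) = \{B_1, \ldots, B_m\}$ with $m \geq 1$, and write $b_i = \sigma(B_i)$. The key step is to establish the factorisation
\[
k^A_\sigma \;=\; \sum_{X \subseteq \attackers_n(A)} (-1)^{|X|} \prod_{B \in X} \sigma(B) \;=\; \prod_{i=1}^{m} (1 - b_i).
\]
This identity follows by expanding the product on the right: each factor contributes either $1$ (corresponding to $B_i \notin X$) or $-b_i$ (corresponding to $B_i \in X$), and collecting terms gives exactly the alternating sum on the left. This is the standard inclusion--exclusion style identity, and it is the one non-routine observation in the argument.

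Once the factorisation is established, the bounds are immediate. By hypothesis, for $\sigma = inter$ we have $init(B_i) \in [0,1]$, and for $\sigma = reinst$ we have $reinst(B_i) \in [0,1]$ (the latter justified inductively, since reinstated beliefs are themselves probability labellings into the unit interval). Hence each factor $1 - b_i$ lies in $[0,1]$, and a product of finitely many values in $[0,1]$ is itself in $[0,1]$. This yields $k^A_{inter} \in [0,1]$ and $k^A_{reinst} \in [0,1]$ as required.

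I do not anticipate a serious obstacle here: the only conceptual step is spotting the product factorisation, after which the bound is automatic. The one subtlety worth flagging is the implicit inductive dependence for the $reinst$ case --- to apply the bound $reinst(B_i) \in [0,1]$ on attackers, one should either appeal to a prior statement that reinstated beliefs are valid probability labellings, or note that the proposition is really a statement about the coefficient as a function of inputs in $[0,1]$, regardless of how those inputs are computed.
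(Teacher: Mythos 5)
Your proof is correct and rests on the same key identity as the paper's: rewriting the alternating sum $\sum_{X}(-1)^{|X|}\prod_{B\in X}\sigma(B)$ as the product $\prod_i(1-b_i)$ and concluding from $b_i\in[0,1]$. The paper reaches that factorisation slightly more circuitously (first the uniform-belief case via binomial coefficients, then generalising), and your version is in fact a bit more careful, handling the empty-attacker base case explicitly and flagging the inductive dependence needed for the $reinst$ bound, which the paper's proof leaves implicit.
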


\begin{proof}
Let $n$ denote the cardinality of $\attackers_n(A)$.
First, assume that the init value for each argument in $\attackers_n(A)$ is the value $b$.
We can rewrite $k_{init}$ as $\sum_{i=o}^n t_i$
where $t_i = (-1)^i \times \binom{i}{n} \times b^i$.
So $k^A_{init} = (1-b)^n$.
Hence, $k^A_{init} \in [0,1]$.
We can generalize to handle attacks with different beliefs.
For each $i$, let $b_i$ be the belief for argument $A_i$.
So we can rewrite $k_{init}$ 
as $(1-b_1)(1-b_2)\dots(1-b_n)$.
Since $b_i \in [0,1]$ for all $i$ (recall that $b_i$ denotes belief), $k^A_{init} \in [0,1]$.
\end{proof}

\begin{proposition}
\label{prop:kextreme}
If $\attackers_n(A) \neq \emptyset$
and for all $B \in \attackers_n(A)$,
$init(B) = 1$ (respectively $init(B) = 0$),
then $k^A_{init} = 0$ (respectively $init(B) = 1$).
\end{proposition}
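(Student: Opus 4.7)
The plan is to reuse the factorisation of $k^A_{\sigma}$ that was established mid-proof in Proposition \ref{prop:kunit}, where the alternating sum $\sum_{X \subseteq \attackers_n(A)} (-1)^{|X|} \prod_{B \in X} \sigma(B)$ was shown to coincide with the product $\prod_{B \in \attackers_n(A)} (1 - \sigma(B))$. Once that identity is in hand, both halves of the proposition are immediate substitutions, so the only real work is making the factorisation explicit and then being careful about the role of the non-emptiness hypothesis.

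First, I would either cite the step in the previous proof or re-derive in one line the identity
\[
k^A_{init} \;=\; \prod_{B \in \attackers_n(A)} \bigl(1 - init(B)\bigr),
\]
which follows by expanding the right-hand side term-by-term and matching with the inclusion-exclusion sum defining $k^A_{init}$. This is a standard calculation and needs no induction beyond observing that choosing a subset $X \subseteq \attackers_n(A)$ corresponds bijectively to selecting, for each attacker $B$, whether to contribute the summand $-init(B)$ or the summand $1$.

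Next, I would handle the two cases in turn. If $init(B) = 1$ for every $B \in \attackers_n(A)$, then every factor $(1 - init(B))$ in the product equals $0$; since $\attackers_n(A)$ is assumed non-empty, at least one zero factor appears, and the product collapses to $0$, giving $k^A_{init} = 0$. Symmetrically, if $init(B) = 0$ for every $B \in \attackers_n(A)$, then each factor equals $1$, so the product is $1$ and $k^A_{init} = 1$.

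There is no genuine obstacle here; the argument is essentially a corollary of the factorisation lemma implicit in the preceding proposition. The only subtlety worth flagging is the necessity of the hypothesis $\attackers_n(A) \neq \emptyset$ for the first case: without it, the empty product would yield $k^A_{init} = 1$ rather than $0$, so the assumption is not cosmetic. For the second case, the hypothesis is not strictly needed because the empty product already equals $1$, matching the convention $k^A_{\sigma} = 1$ already set in the definition when $\attackers_n(A) = \emptyset$.
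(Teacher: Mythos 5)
Your proof is correct and follows essentially the same route as the paper: both rest on rewriting the inclusion–exclusion sum as the product $\prod_{B}(1 - init(B))$ (equivalently, the paper evaluates the specialised alternating binomial sum $(1-b)^n$ at $b=1$ and $b=0$), after which both cases are immediate. Your explicit observation that non-emptiness of $\attackers_n(A)$ is essential for the first case but redundant for the second is a worthwhile clarification that the paper's own proof leaves implicit.
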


\begin{proof}
Assume $\attackers_n(A) \neq \emptyset$.
Let $n$ denote the cardinality of $\attackers_n(A)$.
We can rewrite $k^A_{init}$ as $\sum_{i=0}^n t_i$
where $t_i = 1$ and for all $i > 1$, $t_i = (-1)^i \times \binom{i}{n} \times b^i$.
First, assume for all $B \in \attackers_n(A)$, $init(B) = 1$.
So $k^A_{init} = \sum_{i=0}^n (-1)^i \times \binom{i}{n} = 1$
Now, assume for all $B \in \attackers_n(A)$, $init(B) = 0$.
So for all $i > 1$, $t_i = 0$.
Hence, $k^A_{init} = 1$.
\end{proof}

In the following result, we assume that the attackers for $A$ are given in some arbitrary ordering (i.e. the ordering has no meaning), and then we can put the attackers in $A'$ in an order so that for each $i$, $init(B_i) < init(B'_i)$.


\begin{proposition}
\label{prop:kstronger}
For arguments $A$ and $A'$, 
if it possible to order $\attackers_n(A)$ as the sequence $B_1,\ldots,B_m$
and $\attackers_n(A')$ as the sequence $B'_1,\ldots,B'_m$
such that
for each $i$, $init(B_i) < init(B'_i)$,
then $k^A_{init} > k^{A'}_{init}$.
\end{proposition}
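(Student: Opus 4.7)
The plan is to leverage the factored form of $k^A_{init}$ that already emerged in the proof of Proposition \ref{prop:kunit}. There it was shown (by rewriting the inclusion--exclusion sum as a product) that
\[
k^A_{init} \;=\; \prod_{i=1}^{m} \bigl(1 - init(B_i)\bigr),
\]
where $B_1,\dots,B_m$ enumerates $\attackers_n(A)$. The same identity applies to $A'$ with factors $(1 - init(B'_i))$. Once this product form is in hand, the proposition reduces to comparing two products of nonnegative numbers factor-by-factor.

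First, I would invoke the product form for both $k^A_{init}$ and $k^{A'}_{init}$ using the orderings $B_1,\dots,B_m$ and $B'_1,\dots,B'_m$ provided by the hypothesis. Next, from $init(B_i) < init(B'_i)$ I would deduce two things: (i) $1 - init(B_i) > 1 - init(B'_i)$ for every $i$, giving a strict factor-wise inequality; and (ii) $1 - init(B_i) > 0$ for every $i$, because the strict inequality $init(B_i) < init(B'_i) \leq 1$ rules out $init(B_i) = 1$. Thus all factors appearing in $k^A_{init}$ are strictly positive, so a standard induction on $m$ (or a one-line monotonicity argument for products of nonnegative reals, strict when every left factor is positive) yields $k^A_{init} > k^{A'}_{init}$.

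The only real obstacle is an edge case: the identity $k^A_{init} = \prod_i (1 - init(B_i))$ requires $\attackers_n(A)$ to be nonempty, since the case of empty attackers is handled by the separate clause $k^A_\sigma = 1$ in the definition of the coefficient. I would therefore note up front that the hypothesis (existence of a common length-$m$ pairing with strict inequalities for every $i$) forces $m \geq 1$, so the product form is the applicable branch of the definition for both $A$ and $A'$. A secondary, very minor point is that the sequences in the hypothesis are arbitrary orderings of the attacker sets; since the product is commutative, any ordering satisfying the pairing condition suffices, and no further bookkeeping is needed.
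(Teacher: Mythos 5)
Your proposal is correct and follows essentially the same route as the paper: invoke the product form $k^A_{init} = \prod_i (1 - init(B_i))$ established in the proof of Proposition~\ref{prop:kunit} and compare factor by factor. Your additional care about strict positivity of the left-hand factors and the empty-attackers branch of the definition is sound and slightly more thorough than the paper's two-line argument, but it is the same proof.
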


\begin{proof}
From the proof of Proposition \ref{prop:kunit},
we have $k^A_{init} = \prod^n_{i=1} (1 - b_i)$
and $k'^A_{init} = \prod^n_{i=1} (1 - b'_i)$.
Since, for each $i$, $b_i < b'_i$,
we have $k^A_{init} > k^{A'}_{init}$. 
\end{proof}

Like in the previous result, the following result assumes that the attackers for $A$ are given in some arbitrary ordering, and then we can put the first $m$ attackers in $A'$ in an order so that for each $i$, $init(B_i) < init(B'_i)$.

\begin{proposition}
\label{prop:kmore}
For arguments $A$ and $A'$, 
if it possible to order $\attackers_n(A)$ as the sequence $B_1,\ldots,B_m$
and $\attackers_n(A')$ as the sequence $B'_1,\ldots,B'_n$
such that
for each $i \leq m$, $init(B_i) = init(B'_i)$,
and $m < n$, 
then $k^A_{init} < k^{A'}_{init}$.
\end{proposition}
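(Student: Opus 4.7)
My plan is to apply the product-form identity already established inside the proof of Proposition \ref{prop:kunit}: for any argument whose attacker set has cardinality $r$ and initial beliefs $b_1,\ldots,b_r$, the coefficient factors as $\prod_{i=1}^{r}(1-b_i)$. Instantiating this for both $A$ and $A'$ and exploiting the alignment $b'_i = b_i$ for $i\leq m$ gives
\[
k^{A'}_{init} \;=\; \prod_{i=1}^{n}(1-b'_i) \;=\; \Bigl(\prod_{i=1}^{m}(1-b_i)\Bigr)\cdot\prod_{i=m+1}^{n}(1-b'_i) \;=\; k^{A}_{init}\cdot\prod_{i=m+1}^{n}(1-b'_i).
\]
The residual factor $\prod_{i=m+1}^{n}(1-b'_i)$ is a product of numbers in $[0,1]$, hence itself lies in $[0,1]$, which forces $k^{A'}_{init}\leq k^{A}_{init}$.

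The hard part, and in fact the fatal obstacle, is that this is the \emph{opposite} direction to the stated conclusion $k^{A}_{init}<k^{A'}_{init}$. The factorisation above is an identity, not a loose bound, and multiplying a quantity by factors in $[0,1]$ can only shrink or preserve it; no regrouping of the same hypotheses can reverse that. So the inequality exactly as worded is not merely unproven but unprovable under these hypotheses, and I read it as a typographical slip in the direction of the inequality: it contradicts both the informal claim preceding the propositions (``increasing the set of attackers decreases the initial belief in the attackee'') and the pattern of Proposition \ref{prop:kstronger}, where strengthening the attackers shrinks $k$.

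Taking the intended statement to be $k^{A}_{init} > k^{A'}_{init}$, the proof is then immediate from the displayed factorisation. To upgrade to strict inequality, I would add the natural positivity hypothesis, mirroring Proposition \ref{prop:kstronger}: at least one of $b'_{m+1},\ldots,b'_n$ is strictly positive, together with $k^A_{init}>0$ (which holds as long as no $b_i$ equals $1$). Under these assumptions $\prod_{i=m+1}^{n}(1-b'_i)<1$, and so $k^{A'}_{init}<k^{A}_{init}$. Without such positivity, only the non-strict bound $k^{A'}_{init}\leq k^{A}_{init}$ follows, attained with equality precisely when every added attacker has initial belief zero.
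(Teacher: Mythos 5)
Your factorisation is exactly the one the paper's own proof uses: it invokes the product form $k^A_{init}=\prod_i(1-b_i)$ from the proof of Proposition \ref{prop:kunit} and writes $k^{A'}_{init} = k^A_{init} \times (1-b_{m+1}) \times \cdots \times (1-b_n)$. Your diagnosis of the direction is also correct: from that identity the paper asserts \enquote{Hence, $k^{A'}_{init} > k^A_{init}$}, which does not follow from its own derivation --- multiplying by factors in $[0,1]$ can only shrink or preserve the value, so the identity yields $k^{A'}_{init} \leq k^A_{init}$, in line with the informal gloss preceding the propositions (more attackers should decrease the attacked belief, hence decrease $k$) and with the pattern of Proposition \ref{prop:kstronger}. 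The stated inequality is therefore reversed, and your strictness caveat stands as well: even with the corrected direction, one needs some added attacker with $b'_i>0$ and $k^A_{init}>0$ (no shared attacker with belief $1$) to get a strict inequality; otherwise only $k^{A'}_{init} \leq k^A_{init}$ holds, with equality precisely when every added attacker has initial belief $0$.
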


\begin{proof}
From the proof of Proposition \ref{prop:kunit},
we have $k^A_{init} = \prod^n_{i=1} (1 - b_i)$.
So $k^{A'}_{init} = k^A_{init} \times (1 - b_{m+1}) \times \ldots \times (1 -b_n)$.
Hence, $k^{A'}_{init} > k^A_{init}$.
\end{proof}

Now we define the way initial, attacked, and reinstated belief are calculated for each argument $A$. In this paper, we assume that for the $init$ labeling, $init(A)$ is obtained directly by sampling the beta distribution for $A$. In other words, $init(A)$ takes a particular value in the unit value with a probability given by the beta distribution as explained in Section \ref{sub:beliefs}. So for instance, if the beta distribution is a bell shape, then the $init(A)$ will be the value at the peak with highest probability, and the further $init(A)$ is away from the peak, the lower the probability of this occurring. 


\begin{definition}
\label{def:propagation}
Let $att$ and $reinst$ be probability labelings.
For argument $A$, the $att(A)$ and $reinst(A)$ values are calculated as follows.
\[
att(A) = 
\begin{cases}
init(A) \times k^A_{init} 
& \mbox{ if there is a } B \in \attackers_n(A) \mbox{ s.t. } init(B) > 0.5\\
init(A) 
& \mbox{ otherwise}
\end{cases}
\]
\[
reinst(A) = 
\begin{cases}
att(A) + (k^A_{reinst} \times (1 - att(A))
& \mbox{if } \attackers_n(A) \neq \emptyset \\
& \quad \mbox{and }  \mbox{for all } B\in \attackers_n(A),\\
& \quad reinst(B) \leq 0.5
\\
att(A) 
& \mbox{otherwise}
\end{cases}
\]
\end{definition}

So $att(A)$ is $init(A)$ decreased by multiplication with the $k^A_{init}$ coefficient when $A$ has an attacker; otherwise, it is unchanged. Hence, $att(A)$ is a local calculation that only takes into account the initial belief in the immediate attackers and does not take into account attackers of attackers, and so on by recursion.

In contrast, $reinst(A)$ is $att(A)$ increased by adding $k^A_{reinst} \times (1 - att(A))$.
Therefore,  the calculation of $reinst(A)$ takes into account the reinstated value of the attackers, and this in turn takes into account the reinstated value of its attackers, and so on.
This method is based on the proposal for ambiguous updates for probabilistic argumentation \cite{Hunter2015ijcai}, 
and we use its equivalent version that is defined directly in terms of belief in arguments \cite{HadouxHunter2018foiks}.


\begin{figure}
\centering
\begin{tikzpicture}[->,>=latex,thick]
\tikzstyle{every node}=[draw,rectangle,rounded corners=6pt,fill=gray!20]
\node (a1) [] at (0,0) {\footnotesize $A_1$};
\node (a2) [] at (2,0){\footnotesize $A_2$};
\node (a3) [] at (4,0) {\footnotesize $A_3$};
\node (a4) [] at (6,0) {\footnotesize $A_4$};
\path	(a2) edge (a1);
\path	(a3) edge (a2);
\path	(a4) edge (a3);
\end{tikzpicture}
\caption{\label{fig:beliefupdate}Argument graph used in Example \ref{ex:beliefupdate}}
\end{figure}
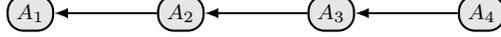

\begin{example}
\label{ex:beliefupdate}
Consider the argument graph in Figure \ref{fig:beliefupdate}.
The following table gives the $init$, and the $att$ and $reinst$ values that are calculated from them. 
\begin{center}
\begin{tabular}{lrrrr}
\toprule
& $A_1$ & $A_2$ & $A_3$ & $A_4$\\
\midrule
$init$ & $0.8$ & $0.6$ & $0.7$ & $0.2$ \\
$k^A_{init}$ & $0.4$ & $0.3$ & $1$ & $1$\\
$att$ & $0.32$ & $0.18$ & $0.7$ & $0.2$ \\
$k^A_{reinst}$ & $0.574$ & $0.3$ & $1$ & $1$ \\
$reinst$ & $0.71$ & $0.426$ & $0.7$ & $0.2$\\
\bottomrule
\end{tabular}\end{center}
Here we see that because $A_4$ is not believed, it has no effect on the calculation of the updated values, whereas $A_2$ is able to attack $A_1$ and thereby decrease the $att$ value of $A_1$, and $A_3$ is able to attack $A_2$ and thereby decrease the $att$ value of $A_2$. Furthermore, the attack by $A_3$ on $A_2$ prevails and so the reinst value of $A_2$ is below $0.5$, whereas the attack by $A_2$ on $A_1$ does not prevail, and so the reinst value of $A_1$ rises above 0.5 but not back to its initial value. 
\end{example}

We use the formulae in Definition \ref{def:propagation} for several reasons.
First, the definition takes the initial belief into account (where that comes from the beta distribution).
Second, the definition takes the number of attackers into account. 
In a fine-grained setting, it is normally difficult to have a uniform intuition as to whether a single attacker with high belief should be more effective in decreasing the belief on the attackee than 10 attackers with lower beliefs. 
Using our $k$ calculation, we take into account both the number of attackers and their respective weight in the set of attackers.
Finally, we have made the common assumption that the reinstated belief should be lower than the initial belief in the argument 
(see \cite{Rahwan2011} for an empirical study and \cite{PolbergHunter2018ijar} for a discussion).
For this, the use  of the coefficients has a dampening effect so that an indirect attacker or defender has decreasing effect as the length of the path to the attacked or defended argument increases.


  
The $reinst$ method is one of a range of possibilities for defining update methods, and we use it as an example of an update method for our framework. However, it would be straightforward to adopt an alternative method such as suggested in \cite{Hunter2015ijcai,Hunter2016sum} in our framework. 
Possibly, as an alternative, we could harness recent developments in weighted and ranking semantics (for example \cite{Amgoud2013,Amgoud2016,Bonzon2017,Baroni2019}), though this would take us away from a probabilistic interpretation of belief in arguments, which would in turn create challenges in how to acquire and interpret user weighting of arguments.

\paragraph{Combining concerns and beliefs}
\label{sec:rerwardfunction}
We now combine the two dimensions of the reward that we have specified in Sections \ref{section:scoringconcerns} and \ref{subsub:updating} as follows.

\begin{definition}
For a dialogue $\dialogue$, and a persuasion goal $A$,
the {\bf reward function} is 
\[
\reward(\dialogue) = \concernscore(\dialogue) \times reinst(A)
\]
\end{definition}

Put simply, the reward function is the product of the two dimensions. This means we give equal weight to the two dimensions. It also means that weakness in either dimension is sufficient to give a low reward. 

\begin{example}
We continue Example \ref{ex:concernscore} where for the dialogue $\dialogue$, $\concernscore(\dialogue) = 0.75$.
Suppose for the persuasion goal $A_{10}$, we have $reinst(A_{10}) = 0.8$. 
So $\reward(\dialogue) = 0.75 \times 0.8 = 0.6$
\end{example}

The reward function is a simple and intuitive way of aggregating the two dimensions, but other aggregations could be specified and used directly in our framework.


\subsubsection{Simulating a choice from the user}


When simulating the results of the system's actions, we also need to simulate a credible behaviour from the user in order 
to advance in the simulated dialogue. Thus, it is important to mimic the choices of arguments that the user could make.  
In order to do that, we propose a multistep process, for each argument $A$ from the system to counter:

\begin{enumerate}

    \item \textbf{Sample}: for each counterarguments $B$ to $A$, we sample from the beta mixture to simulate whether the user believes $B$ or not, \emph{i.e.}, whether the value drawn from the mixture is greater than $0.5$ or not;
    
    \item \textbf{Order}: the set of believed counterarguments $bc$ is then totally ordered w.r.t. $$\mathrm{score} = P(A) \times \left(\sum_{C \in {\sf Con}(A)} \sum_{C' \in {\sf SibCon}(D, A)} \mathds{1}_{C' \conpref_{\concerndomain}^{\dusertag} C} \right) \times \frac{1}{| {\sf SibCon}(D, A)|}, \forall A \in bc$$ 
    In other words, the arguments are ranked from the user point of view, according to her preferences and belief;
   
    \item \textbf{Filter:} to take into account the fact that a user can decide to withhold arguments she believes, we randomly draw a subset size $t \in \{0, 1 \ldots, |bc|\}$. We then only consider the $t$ first arguments in $bc$ starting from the first in the order defined at the previous step.
\end{enumerate}

All the subsets of believed counterarguments to the arguments played by the system are then used as the new step
 in the simulated  dialogue, \emph{i.e.}, a new state in the Monte-Carlo tree, from which any subsequent simulation 
for this line of dialogue will start.

The assumptions that we have made for the simulation of the user choices are supposed to encompass any possible actual behaviour for the users (supposing their choices are based on the same elements: belief in arguments, ranking on them and then choice of whether to play them or not).
Therefore, in theory, with enough simulation, the strategy that we obtain is the most robust way for facing the real users.

\subsection{Baseline strategy}
The baseline and advanced systems use the same protocol and the same argument graphs. 
This means that the the baseline and advanced systems only differ on the argument selection strategy.
The baseline strategy is a form of random strategy: 
When the baseline system has a choice of counterargument to present, it makes a selection  using a uniform random distribution.

\section{Data for domain and user modelling}
\label{section:datascience}

In this section, we describe the methods used for obtaining the data we required for populating our domain and user model. The domain model is based on an argument graph (Section \ref{sec:experimentgraphs}) and the assignment of concerns to arguments (Section \ref{sec:concernsarguments}), and the user model is based on the belief in arguments (Section \ref{sec:beliefarguments}), preferences over concerns (Section \ref{section:preferenceconcerns}), and classification trees for predicting preferences over concerns (Section \ref{sub:tree}). This data is available as an appendix\footnote{The Data Appendix containing the arguments, argument graphs, assignment of concerns to arguments, preferences over concerns, and assignment of beliefs to arguments, is available at the link \url{http://www0.cs.ucl.ac.uk/staff/a.hunter/papers/unistudydata.zip}}.

For each task, we have recruited a certain number of participants from a crowdsourcing platform called Prolific\footnote{\url{https://prolific.ac}} and ensured the quality of their responses by using appropriate attention checks. We have also excluded the participants of one task from taking part in another task. We note that while the participants came from a single platform, various tools were used to create the questionnaires and tasks themselves, depending on the kind of functionality that was needed. 

\subsection{Argument graphs}
\label{sec:experimentgraphs}

For the experiments that we present in Section \ref{section:experiments}, we used two argument graphs on the topic of charging students a fee for attending university. Since the experiments were conducted with participants from the UK, the arguments used in the argument graph pertain to the UK context. In the UK, the current situation is that students from the UK or other EU countries pay \pounds 9K per year for tuition at most universities. This is a controversial situation, with some people arguing for the fees to be abolished, and with others arguing that they should remain in place.  

Each argument graph has a persuasion goal. For the first argument graph, the persuasion goal is {\em \enquote{Charging students the \pounds 9K fee for university education should be abolished}}, and for the second argument graph, the persuasion goal is {\em \enquote{Universities should continue charging students the \pounds 9K fee}}. The reason we have two graphs is that when we ran the experiments, we asked a participant whether they believe the \pounds 9K tuition fees should be abolished or maintained. If they believed that the fees should be maintained, then the APSs used the first argument graph (\emph{i.e.} the graph in favour of abolishing the fees), to enter the discussion, whereas if they believed that they should be abolished, then the APSs used the second argument graph (\emph{i.e.} the graph in favour of continuing charging the fees). 

The arguments were hand-crafted, however, the information in them has been obtained 
from diverse sources such as online forums and newspaper articles so that it would reflect a diverse range of opinions. 
These arguments are enthymemes (\emph{i.e.}, some premises/claims are implicit) as this offers more natural exchanges in the dialogues. 
We obtained 146 arguments, which were then used to construct the two argument graphs (the first graph contains 106 of them, while the second 119). Hence, many of the arguments are shared, but often play contrary roles (\emph{i.e.} a defender of the persuasion goal in one graph was typically an attacker in the other).
In the context of this work, we only deal with the attack relation, and so we did not consider other kinds of interactions such as support. Furthermore, we did not attempt to distinguish between the different kinds of attack (such as undercutting or undermining). Some arguments were edited to enable us to have reasonable depth (so that the dialogues were of a reasonable length) and breadth (so that alternative dialogues were possible) to the argument graph. 
The full list of the arguments is available in the Data Appendix, and the structure for one of the argument graphs is also presented in Figure \ref{fig:unistudygraph}.


For the following data gathering steps, we split the 146 arguments into 13 groups (the groups are distinguished in the Data Appendix files associated with surveys in which grouping was needed).  
We did this so that no group contained two directly related arguments (\emph{i.e.} no arguments where one argument attacks the other) and almost all groups do not contain sibling arguments (\emph{i.e.} attacking the same argument). The aim of this was to avoid the participants consciously or subconsciously evaluating interactions between arguments when undertaking the tasks in the following steps. In other words, when a participant was given a group of arguments, we wanted them to consider the arguments individually and not collectively. 

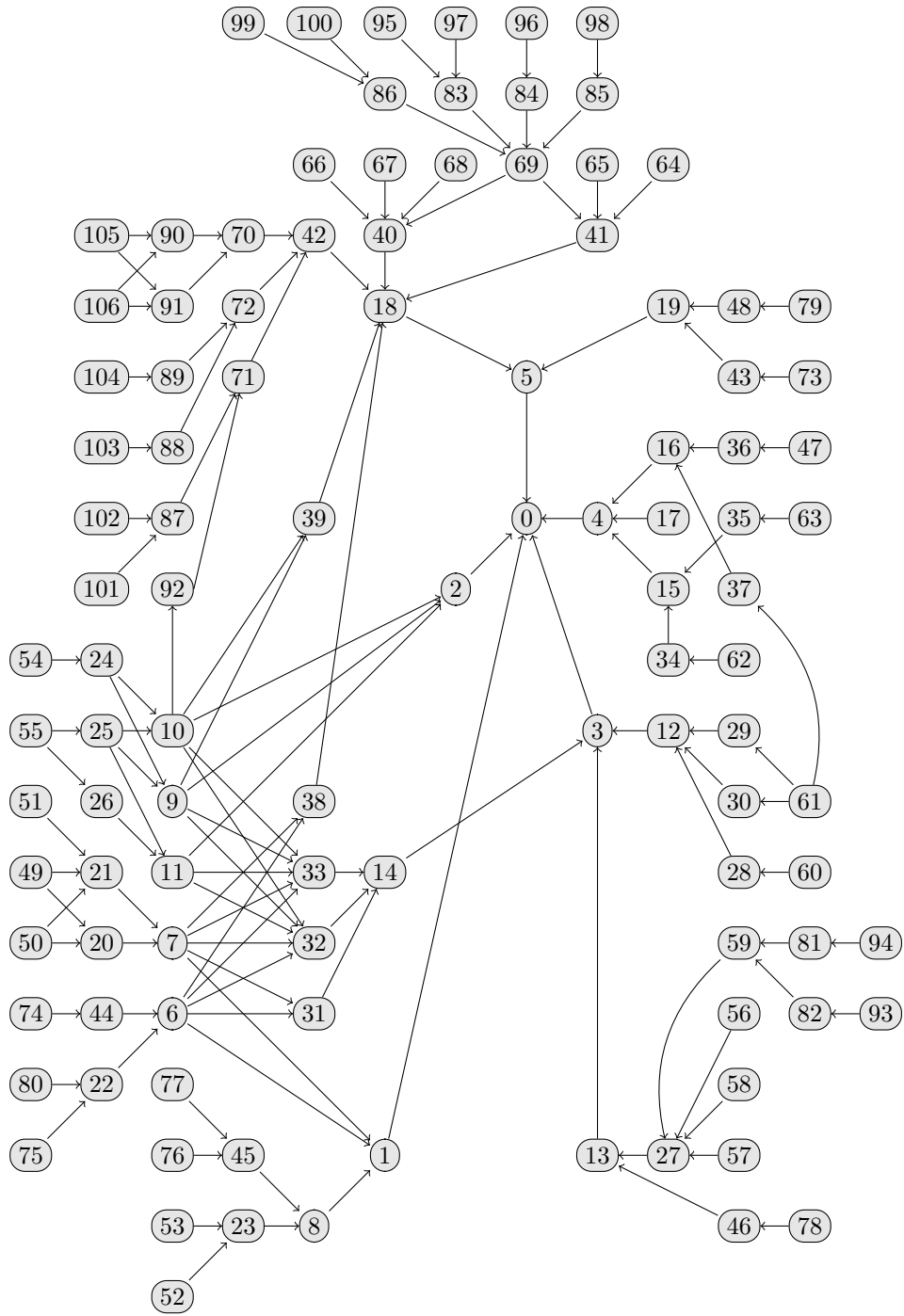
\begin{figure}
\centering

\begin{tikzpicture}[->,every node/.style={draw,rectangle,rounded corners=6pt,fill=gray!20}]


\node (99) [] at (5,18) {99};
\node (100) [] at (6,18) {100};
\node (95) [] at (7,18) {95};
\node (97) [] at (8,18) {97};
\node (96) [] at (9,18) {96};
\node (98) [] at (10,18) {98};

\node (86) [] at (7,17) {86};
\node (83) [] at (8,17) {83};
\node (84) [] at (9,17) {84};
\node (85) [] at (10,17) {85};

\node (66) [] at (6,16) {66};
\node (67) [] at (7,16) {67};
\node (68) [] at (8,16) {68};
\node (69) [] at (9,16) {69};
\node (65) [] at (10,16) {65};
\node (64) [] at (11,16) {64};

\node (105) [] at (3,15) {105};
\node (90) [] at (4,15) {90};
\node (70) [] at (5,15) {70};
\node (42) [] at (6,15) {42};
\node (40) [] at (7,15) {40};
\node (41) [] at (10,15) {41};

\node (106) [] at (3,14) {106};
\node (91) [] at (4,14) {91};
\node (72) [] at (5,14) {72};
\node (18) [] at (7,14) {18};
\node (19) [] at (11,14) {19};
\node (48) [] at (12,14) {48};
\node (79) [] at (13,14) {79};

\node (104) [] at (3,13) {104};
\node (89) [] at (4,13) {89};
\node (71) [] at (5,13) {71};
\node (5) [] at (9,13) {5};
\node (43) [] at (12,13) {43};
\node (73) [] at (13,13) {73};

\node (103) [] at (3,12) {103};
\node (88) [] at (4,12) {88};
\node (16) [] at (11,12) {16};
\node (36) [] at (12,12) {36};
\node (47) [] at (13,12) {47};

\node (102) [] at (3,11) {102};
\node (87) [] at (4,11) {87};
\node (39) [] at (6,11) {39};
\node (0) [] at (9,11) {0};
\node (4) [] at (10,11) {4};
\node (17) [] at (11,11) {17};
\node (35) [] at (12,11) {35};
\node (63) [] at (13,11) {63};

\node (101) [] at (3,10) {101};
\node (92) [] at (4,10) {92};
\node (2) [] at (8,10) {2};
\node (15) [] at (11,10) {15};
\node (37) [] at (12,10) {37};

\node (54) [] at (2,9) {54};
\node (24) [] at (3,9) {24};
\node (34) [] at (11,9) {34};
\node (62) [] at (12,9) {62};

\node (55) [] at (2,8) {55};
\node (25) [] at (3,8) {25};
\node (10) [] at (4,8) {10};
\node (3) [] at (10,8) {3};
\node (12) [] at (11,8) {12};
\node (29) [] at (12,8) {29};

\node (51) [] at (2,7) {51};
\node (26) [] at (3,7) {26};
\node (9) [] at (4,7) {9};
\node (38) [] at (6,7) {38};
\node (30) [] at (12,7) {30};
\node (61) [] at (13,7) {61};

\node (49) [] at (2,6) {49};
\node (21) [] at (3,6) {21};
\node (11) [] at (4,6) {11};
\node (33) [] at (6,6) {33};
\node (14) [] at (7,6) {14};
\node (28) [] at (12,6) {28};
\node (60) [] at (13,6) {60};

\node (50) [] at (2,5) {50};
\node (20) [] at (3,5) {20};
\node (7) [] at (4,5) {7};
\node (32) [] at (6,5) {32};
\node (59) [] at (12,5) {59};
\node (81) [] at (13,5) {81};
\node (94) [] at (14,5) {94};

\node (74) [] at (2,4) {74};
\node (44) [] at (3,4) {44};
\node (6) [] at (4,4) {6};
\node (31) [] at (6,4) {31};
\node (56) [] at (12,4) {56};
\node (82) [] at (13,4) {82};
\node (93) [] at (14,4) {93};

\node (80) [] at (2,3) {80};
\node (22) [] at (3,3) {22};
\node (77) [] at (4,3) {77};
\node (58) [] at (12,3) {58};

\node (75) [] at (2,2) {75};
\node (76) [] at (4,2) {76};
\node (45) [] at (5,2) {45};
\node (1) [] at (7,2) {1};
\node (13) [] at (10,2) {13};
\node (27) [] at (11,2) {27};
\node (57) [] at (12,2) {57};

\node (53) [] at (4,1) {53};
\node (23) [] at (5,1) {23};
\node (8) [] at (6,1) {8};
\node (46) [] at (12,1) {46};
\node (78) [] at (13,1) {78};
\node (52) [] at (4,0) {52};

\path	(1) edge[] (0);
\path	(2) edge[] (0);
\path	(3) edge[] (0);
\path	(4) edge[] (0);
\path	(5) edge[] (0);
\path	(6) edge[] (1);
\path	(6) edge[] (31);
\path	(6) edge[] (38);
\path	(6) edge[] (32);
\path	(6) edge[] (33);
\path	(7) edge[] (1);
\path	(7) edge[] (31);
\path	(7) edge[] (38);
\path	(7) edge[] (32);
\path	(7) edge[] (33);
\path	(8) edge[] (1);
\path	(9) edge[] (32);
\path	(9) edge[] (33);
\path	(9) edge[] (2);
\path	(9) edge[] (39);

\path	(10) edge[] (32);
\path	(10) edge[] (33);
\path	(10) edge[] (2);
\path	(10) edge[] (39);
\path	(10) edge[] (92);
\path	(11) edge[] (32);
\path	(11) edge[] (33);
\path	(11) edge[] (2);
\path	(12) edge[] (3);
\path	(13) edge[] (3);
\path	(14) edge[] (3);
\path	(15) edge[] (4);
\path	(16) edge[] (4);
\path	(17) edge[] (4);
\path	(18) edge[] (5);
\path	(19) edge[] (5);

\path	(20) edge[] (7);
\path	(21) edge[] (7);
\path	(22) edge[] (6);
\path	(24) edge[] (9);
\path	(24) edge[] (10);
\path	(25) edge[] (11);
\path	(25) edge[] (9);
\path	(25) edge[] (10);
\path	(26) edge[] (11);
\path	(23) edge[] (8);
\path	(27) edge[] (13);
\path	(28) edge[] (12);
\path	(29) edge[] (12);

\path	(30) edge[] (12);
\path	(31) edge[] (14);
\path	(32) edge[] (14);
\path	(33) edge[] (14);
\path	(34) edge[] (15);
\path	(35) edge[] (15);
\path	(36) edge[] (16);
\path	(37) edge[] (16);
\path	(38) edge[] (18);
\path	(39) edge[] (18);

\path	(40) edge[] (18);
\path	(41) edge[] (18);
\path	(42) edge[] (18);
\path	(43) edge[] (19);
\path	(44) edge[] (6);
\path	(45) edge[] (8);
\path	(46) edge[] (13);
\path	(47) edge[] (36);
\path	(48) edge[] (19);
\path	(49) edge[] (20);
\path	(49) edge[] (21);

\path	(50) edge[] (20);
\path	(50) edge[] (21);
\path	(51) edge[] (21);
\path	(52) edge[] (23);
\path	(53) edge[] (23);
\path	(54) edge[] (24);
\path	(55) edge[] (25);
\path	(55) edge[] (26);
\path	(56) edge[] (27);
\path	(57) edge[] (27);
\path	(58) edge[] (27);
\path	(59) edge[bend right] (27);

\path	(60) edge[] (28);
\path	(61) edge[] (30);
\path	(61) edge[] (29);
\path	(61) edge[bend right] (37);
\path	(62) edge[] (34);
\path	(63) edge[] (35);
\path	(64) edge[] (41);
\path	(65) edge[] (41);
\path	(66) edge[] (40);
\path	(67) edge[] (40);
\path	(68) edge[] (40);
\path	(69) edge[] (40);
\path	(69) edge[] (41);

\path	(70) edge[] (42);
\path	(71) edge[] (42);
\path	(72) edge[] (42);
\path	(73) edge[] (43);
\path	(74) edge[] (44);
\path	(75) edge[] (22);
\path	(76) edge[] (45);
\path	(77) edge[] (45);
\path	(78) edge[] (46);
\path	(79) edge[] (48);

\path	(80) edge[] (22);
\path	(81) edge[] (59);
\path	(82) edge[] (59);
\path	(83) edge[] (69);
\path	(84) edge[] (69);
\path	(85) edge[] (69);
\path	(86) edge[] (69);
\path	(87) edge[] (71);
\path	(88) edge[] (72);
\path	(89) edge[] (72);

\path	(91) edge[] (70);
\path	(90) edge[] (70);
\path	(92.east) edge[] (71);
\path	(93) edge[] (82);
\path	(94) edge[] (81);
\path	(95) edge[] (83);
\path	(96) edge[] (84);
\path	(97) edge[] (83);
\path	(98) edge[] (85);
\path	(99) edge[] (86);

\path	(100) edge[] (86);
\path	(101) edge[] (87);
\path	(102) edge[] (87);
\path	(103) edge[] (88);
\path	(104) edge[] (89);
\path	(105) edge[] (91);
\path	(105) edge[] (90);
\path	(106) edge[] (91);
\path	(106) edge[] (90);

\end{tikzpicture}

\caption{One of the argument graphs for the university case study. In order to better show the structure of the graph, the textual content of the arguments has been removed. The text is available in the data appendix.}
\label{fig:unistudygraph}
\end{figure}

\subsection{Belief in arguments}
\label{sec:beliefarguments}

In order to determine the belief that participants have in each argument, we used the 13 groups of arguments as described in the previous section. 
For each group, we recruited 80 participants from the Prolific crowdsourcing platform and asked each of them to assign a belief value to every argument in the group\footnote{Please note that the total number of participants was greater, but many of the submissions were rejected due to failed attention checks, which are a standard tool for rejecting dishonest participants.}. 
For each argument in the group, we asked the participants to state how much they agree or disagree both with the information and the reasoning presented in it (if applicable). For example, given a text \enquote{X therefore Y}, we asked them to consider whether they agree with X and Y and believe that X justifies Y, and to make their final judgment by looking at all of these elements.
 

For each statement, the participants could provide their belief using a slider bar that has a range from -5 to 5 and 
with a granularity of .01. This means that a participant can give a belief such as -2.89 or 0.08. We also associated a text description with each integer value as follows: (-5) {\em Strongly disagree}; (0) {\em Neither agree nor disagree}; and (5) {\em Strongly agree}.

Once all the beliefs were gathered, we calculated the beta mixture for every argument (recall Section \ref{sub:beliefs}), using the method described in \cite{HadouxHunter2018aamas}.
Using an \emph{Expectation Maximisation} (EM) algorithm, we learnt the set of components (the beta distributions) describing data the best, while taking into account the complexity of the model in order to avoid overfitting.


\begin{table}[!ht]
\begin{tabularx}{\textwidth}{l X}
\toprule
Concern & Description of what concern deals with\\
\midrule
Economy 
& Economy of the country, including public sector, private sector, import, export, taxation of companies, etc.\\
Government Finances 
& Government finances, including general taxation, government spending etc.\\
Employment 
& Careers and employability of students and the general job market.\\
Student Finances 
& Finances of students, including tuition fees, student debts, credit scores, life costs etc.\\
Education 
& Education, including the quality and value of education, grade inflation, personal development etc.\\
Student Satisfaction 
& Whether students are satisfied with their courses and universities and whether their requests are heard and met.\\
Student Well-Being 
& The physical and mental health of students, recreation and leisure activities, stress, future fears etc.\\
University Management 
& How universities are run, including university finances, competition between universities, investment into facilities or research etc.\\
Commercialization of Universities 
& How universities are commercialized, including private sector universities, treating students as customers, market forces affecting the running of universities etc.\\
Fairness 
& Whether something is fair or not (using a general understanding of fairness), including equal and just treatment of individuals.\\
Society 
& Various groups of society as well as society as a whole, and includes social mobility, minorities, disadvantaged groups of society etc.\\     
\bottomrule
\end{tabularx}
\caption{Types of concern for the topic of charging university tuition fees.}
\label{table:concerns}
\end{table}

\subsection{Concerns of arguments}
\label{sec:concernsarguments}

Once all the arguments have been defined, they need to be appropriately tagged with concerns. 
The types of concern that can be associated with the arguments are topic dependent and in this work we manually defined a set of 12 
classes (as presented in Table \ref{table:concerns}). These were based on a consideration of the different possible stakeholders who 
might have a view on university tuition fees in the UK, and what their concerns might be.  





In order to determine the concerns that the participants associate with each argument, we used the 13 groups of arguments as described previously. 
For each argument described in the Data Appendix, we asked the participants to choose the type of concern they think is the most appropriate from the list presented in Table \ref{table:concerns} (\emph{i.e.}, assign concerns to each argument that in their opinion arose or are addressed by the argument). 
The participants were restricted to assigning between 1 and 3 concerns per argument. 

The concern assignment was later post-processed in order to reduce possible noise in the data. The concerns of a given argument are ordered based on the number of times they were selected by the participants. The threshold is set at half of the number of votes of the most popular concern and only concerns above this threshold were kept. 
For instance, if \emph{Employment} is the most popular concern assigned to argument $A$ and was voted 20 times, then only concerns that have been selected by strictly more than 10 participants are assigned to this argument. The processed concern assignment can be found in the Data Appendix.

For this step, we recruited at least 40 participants from the Prolific\footnote{\url{https://prolific.ac}} crowdsourcing platform for each group of arguments \footnote{Please note that the total number of participants was greater, but many of the submissions were rejected due to failed attention checks. Due to minor platform issues, certain arguments received between 41 and 43 responses rather than 40.}.
The prescreening required their nationality to be British and age between 18 and 100. 
Only the participants who passed the prescreening were able to take part in the studies described here.


\subsection{Preferences over concerns}
\label{section:preferenceconcerns}

After the set of types of concern had been created, the next step was to determine the preferences that the users of our system could have on these types.
Preference elicitation and preference aggregation are research domains by themselves and it would take more than a paper to fully investigate them all in our context. 
Consequently, in this work, we decided to use a simple approach which was to ask the participants to provide a linear ordering over the types of concern.

It is interesting to note that the results show that on average, the \enquote{Education} and \enquote{Student Well-being} concerns were ranked respectively first and second and \enquote{Government Finances} was ranked last. 
 


For this step, we used 110 participants from the Prolific crowdsourcing website\footnote{Similarly as in the previous tasks, we used an attention checks to discard dishonest submissions.}.
The prescreening required their nationality to be British and age between 18 and 100. 
In addition to the preference task, the participants were also asked a series of profiling questions, which will be discussed in more detail in the next subsection.
The results can be found in the Data Appendix.

\subsection{Creation of classification trees}
\label{sub:tree}




Preferences are valuable tools in argumentation in general and they allow APSs to offer a more user-tailored experience. Agents may differ in their preferences and we need to discover these during a dialogue.
However, sourcing them has certain challenges. A simple way to achieve it would be to query the user. However, in practice, we do not want to ask the user too many questions as it is likely to increase the risk of them disengaging. Longer discussions also tend to be 
less effective \cite{TanNDNML16}.
Furthermore, it is normally not necessary to know about all of the preferences of the user. To address this, we can acquire comprehensive data on the preferences of a set of participants, and then use this data to train a classifier to predict the preferences for other participants.   
Thus, in this study we have created the classification trees using information that we had obtained about the users. 

In addition to asking for the ranking of the concerns of the participants (as explained in the previous subsection), we asked them to take a personality test.
We used the Ten-Item Personality Inventory (TIPI) \cite{GoslingRentfrowSwann03} to assess the values on 5 features of personality based on the OCEAN model \cite{McCraeCosta87}, one of the most famous model of the psychology literature.
These features were \enquote{Openness to experience}, \enquote{Conscientiousness}, \enquote{Extroversion}, 
\enquote{Agreeableness} and \enquote{Neuroticism} (emotional instability).
We also asked them to provide some demographic information and domain dependent information, such as age, sex, if they were a student in any higher institution, and the number of children they might have in general as well as in school or university. 

Using all the above data, we learnt a decision tree for each pair of concerns using the Scikit-learn\footnote{\url{http://scikit-learn.org}} Python library. 
The purpose of each decision tree was to determine the ratio of preference (\emph{i.e.}, for each pair of concerns, the proportion who ranked the first concern higher than the second concern) on the concerns depending on the data about the individual.
In other words, for such a decision tree, each leaf is a ratio of preference (\emph{i.e.}, the classification), and so the arcs on the branch to that leaf are for attributes that hold for the individual who is predicted to have that ratio.   

As a first stage, we ran a meta-learning process in order to determine the best combination of tree depth and minimum number of samples at each leaf for each pair of types.
The meta-learning process is the repeated application of the learning algorithm for different choices of these parameters (\emph{i.e.}, tree depth and minimum number if samples at each leaf) until the best combination of parameters is found.  
The criterion to minimize is the Hamming loss, \emph{i.e.}, the difference between the prediction and the actual preferred type.

We used cross-validation in the meta-learning to determine the best combination of tree depth and minimum number of datapoints at each leaf.
Once the best parameters were found for each pair of types, we then ran the actual learning part using these parameters with all the datapoints concerning the personality and demographic information.
We thus obtained one decision tree for each pair of types that was used by the automated persuasion system in the final study.  

Figure \ref{fig:timecomfort} shows the example of the decision tree learnt for the Economy/Fairness pair of types where ``C'' (resp. ``N'') stands for ``Conscienciousness'' (resp. ``Neuroticism'') in the OCEAN model. 

\begin{figure}[t]
    \centering
    \begin{tikzpicture}[>=stealth', thick, node distance=3cm]
      \node[state,shape=rectangle] (1) {C $< 4.75$?};
      \node[state,shape=rectangle] (2) [below left of=1] {Fairness: 77\%};
      \node[state,shape=rectangle] (3) [below right of=1] {N $< 6.25$?};
      \node[state,shape=rectangle] (4) [below left of=3] {C $< 6.25$?};
      \node[state,shape=rectangle] (5) [below left of=4] {Fairness: 57\%};
      \node[state,shape=rectangle] (6) [below right of=4] {Economy: 66\%};
      \node[state,shape=rectangle] (7) [below right of=3] {Economy: 100\%};

      \draw[-] (1) to node[left] {true} (2);
      \draw[-] (1) to node[right] {false} (3);
      \draw[-] (3) to node[left] {true} (4);
      \draw[-] (3) to node[right] {false} (7);
      \draw[-] (4) to node[left] {true} (5);
      \draw[-] (4) to node[right] {false} (6);
    \end{tikzpicture}
    \caption{Example of a decision tree for the Economy/Fairness pair where ``C'' (resp. ``N'') stands for ``Conscienciousness'' (resp. ``Neuroticism'') in the OCEAN model.}
    \label{fig:timecomfort}
\end{figure}
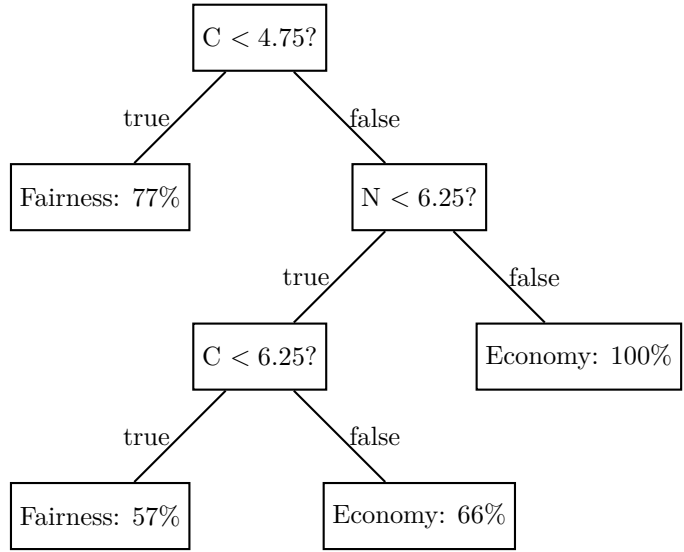

\section{Experiments}
\label{section:experiments}

We now describe the experiments we undertook to evaluate our approach to strategic argumentation for persuasion using the data and models explained in the previous parts of this paper. 


\subsection{Methods}

For the experiments, we implemented two versions of our automated persuasion system, and we deployed them with participants to measure their ability to change the belief of participants in a persuasion goal. We excluded the participants from data gathering studies from taking part in the persuasion experiments. The two versions followed the protocol described in Section \ref{section:protocol}, and were implemented as follows.


\begin{description}
\item[Baseline system] This  was the baseline or control system, and it chose arguments at random from the ones attacking at least one of the arguments presented by the persuadee in the previous step.
\item[Advanced system]
This was the system that made a strategic choice of move that maximizes the reward (see Section \ref{sec:advancedstrategy}. It incorporates the Monte Carlo Tree Search algorithm as presented in Section \ref{section:montecarlo} and uses the reward function as presented in Section \ref{section:reward}. 
\end{description}


In this study, we used 261 participants recruited from the Prolific crowdsourcing website, which later allowed us to have 126 
participants for the advanced system and 119 for the baseline\footnote{Some of the submissions had to be filtered out due to technical issues.}.
The prescreening required their first language to be English, nationality British, and age between 18 and 70. 


At the start of each experiment, each participant was asked the same TIPI, demographic and domain dependant questions as in the ranking of concerns explained in Section \ref{sub:tree}.
The full survey description and demographic statistics can be found in the Data Appendix.

After collecting the demographic and personality information, we asked the participants for their opinion on the following statement (using a slider bar ranging from -3 to 3 with 0.01 graduation). 
We note that the answer $0$ (\emph{i.e.} 
neither agree nor disagree) was not permitted - we requested the participants to express their preference, 
independently of how small it may be.

\begin{quote}
Are you against (slider to the left) or for (slider to the right) the abolishment of the \pounds 9K fees for universities, and to what degree?
\end{quote}

Then we presented each participant with a chatbot (either the baseline system or the advanced system) composed of a front-end we coded in Javascript and a back-end in C++ served by an API in Python using the Flask web server library\footnote{The code is available at \url{https://github.com/ComputationalPersuasion/MCCP}.} (see high level architecture in Figure \ref{fig:architecture}). 
The Javascript front-end gathered the arguments selected by the participant and sent them to the Python API. 
They were transparently forwarded to the C++ back-end to calculate the best answer to these arguments.
Then the back-end sent the system the answer and the allowed counterarguments back to the API that translated it to text and sent it to the front-end to be presented to the participant.
After the end of the dialogue with the chatbot, the participant was again presented the statement about the abolition 
of the \pounds 9K student fee, and asked to express their belief using the slider bar. 
This way we obtain a value for the participant's belief before and after the persuasion dialogue. 

\begin{figure}[t]
\begin{center}
\begin{tikzpicture}[->,>=latex]
\node[text centered,shape=rectangle,minimum size=1cm,text width = 6cm,draw] (c)  at (0,4) {User interface (Javascript)};
\node (c1) at (-1,3.6) {};
\node (c2) at (1,3.6) {};
\node (b1) at (-1,2.4) {};
\node (b2) at (1,2.4) {};
\node[text centered,shape=rectangle,minimum size=1cm,text width = 6cm,draw] (b)  at (0,2) {API (Python+Flask webserver)};
\node (b3) at (-1,1.6) {};
\node (b4) at (1,1.6) {};
\node (a1) at (-1,0.4) {};
\node (a2) at (1,0.4) {};
\node[text centered,shape=rectangle,minimum size=1cm,text width = 6cm,draw] (a)  at (0,0) {Dialogue engine (C++)};
\path (c1) edge (b1);
\node (n1) at (-2.5,3) {User arguments};
\path (b2) edge (c2);
\node (n2) at (2.5,3) {System arguments};
\path (b3) edge (a1);
\node (n3) at (-2.5,1) {User arguments};
\path (a2) edge (b4);
\node (n4) at (2.5,1) {System arguments};
\end{tikzpicture}
\end{center}
\caption{High level architecture of chatbot platform.}
\label{fig:architecture}
\end{figure}
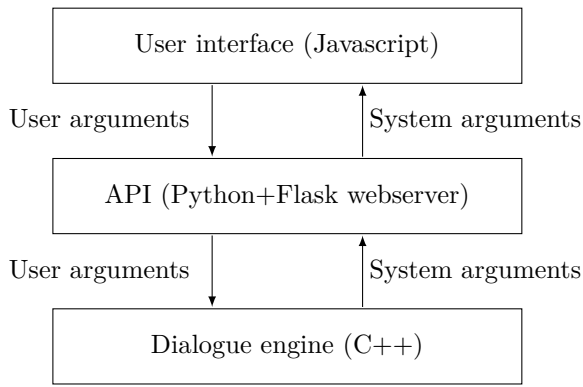

For the MCTS component of the advanced APS, we set the number of simulations to 1000 to balance out the trade-off between the completeness of exploration and 
the time waited by the participant. 
Indeed, the longer they wait, the less engaged they are, which causes deterioration in the quality of data.
On average, the round trip from sending the counterarguments picked by the user to the back-end through the API, calculating the answer and back to the front-end for presentation (therefore including network time and client side execution) took between 0.5 and 5 seconds, depending on the number of counterarguments selected by the participant.
We argue that these are acceptable times compared to traditional human to human chat experience.

\subsection{Results}


From the dialogues we obtained from running the advanced system and the baseline system, we obtained a head-to-head comparison (Section \ref{section:results:comparison}).
All the dialogues are presented in the Data Appendix. 

\subsubsection{Structural analysis of dialogues} 

We start by considering the structure of the dialogues produced by our APSs. We focus on two dimensions: completeness and linearity. 

By a complete dialogue we understand a dialogue such that all the leaves in the subgraph associated with the dialogue are of even depth in the original graph on which the dialogue was based, and no $\nrej$ option was selected. In other words, it is a dialogue in which all arguments of the participants are countered by the system. 

By a linear dialogue we understand a dialogue such that the subgraph associated with a dialogue is simply a chain from the root to the leaf. In other words, at most one argument is used in every dialogue move.

In addition to this, we split our analysis with respect to the graph that was used for the dialogue. We have the graph built in favour of keeping the university fees (used when the participant was in favour of their abolishment), and the dual abolishing graph (used when the participant was in favour of keeping the fees). 

Tables \ref{tab:structureanalysisADV} and \ref{tab:structureanalysisBAS} show the distributions of dialogues structure of different types that were produced by the advanced and baseline systems.


\begin{table}[ht]
\centering
\begin{tabular}{rcccrr}
 \addlinespace[-\aboverulesep] 
 \cmidrule[\heavyrulewidth]{2-6}
& Complete & Linear & Keeping Graph & \# of dialogues & \% of dialogues \\
\cmidrule{2-6}
& $\checkmark$ & $\checkmark$ & $\checkmark$ & 62 & 49.21\% \\
& $\checkmark$ &              & $\checkmark$ & 23 & 18.25\% \\
& $\checkmark$ & $\checkmark$ &              & 14 & 11.11\% \\
&              & $\checkmark$ & $\checkmark$ & 10 & 7.94\% \\
& $\checkmark$ &              &              & 5 & 3.97\% \\
&              &              & $\checkmark$ & 5 & 3.97\% \\
&              &              &              & 5 & 3.97\% \\
&              & $\checkmark$ &              & 2 & 1.59\% \\
\midrule
\multicolumn{1}{r}{\# of dialogues} & 104 & 88 & 100 & 126 & \multicolumn{1}{r}{}  \\
\multicolumn{1}{r}{\% of dialogues} & 82.54\% & 69.84\%& 79.37\% &\multicolumn{1}{r}{} & \multicolumn{1}{r}{} \\
 \cmidrule[\heavyrulewidth]{1-5}
\end{tabular} 
\caption{Structural analysis of the dialogues with the advanced system.}
\label{tab:structureanalysisADV}
\end{table}


\begin{table}[ht]
\centering
\begin{tabular}{rcccrr}
\addlinespace[-\aboverulesep] 
 \cmidrule[\heavyrulewidth]{2-6}
& Complete & Linear & Keeping Graph & \# of dialogues & \% of dialogues \\
\cmidrule{2-6}
& $\checkmark$ & $\checkmark$ & $\checkmark$ & 54 & 45.38\% \\
& $\checkmark$ &              & $\checkmark$ & 7 & 5.88\% \\
& $\checkmark$ & $\checkmark$ &              & 14 & 11.76\% \\
&              & $\checkmark$ & $\checkmark$ & 12 & 10.08\% \\
& $\checkmark$ &              &              & 2 & 1.68\% \\
&              &              & $\checkmark$ & 21 & 17.65\% \\
&              &              &              & 7 & 5.88\% \\
&              & $\checkmark$ &              & 2 & 1.68\% \\
\midrule
\multicolumn{1}{r}{\# of dialogues} & 77 & 82 & 94 & 119 & \multicolumn{1}{r}{}  \\
\multicolumn{1}{r}{\% of dialogues} & 64.71\% & 68.91 \%& 78.99\% &\multicolumn{1}{r}{} & \multicolumn{1}{r}{} \\
\cmidrule[\heavyrulewidth]{1-5}
\end{tabular} 
\caption{Structural analysis of the dialogues with the baseline system.}
\label{tab:structureanalysisBAS}
\end{table}
 
In the case of both APSs we therefore observe that the dialogues are more likely to be complete than incomplete, and linear rather than non-linear (i.e. branched). We also note that in both cases, the majority of the participants were in favour of abolishing the university fees, and therefore the graph in favour of keeping them was used for the dialogue - this is however something out of the control of the APS and merely reflects the views of the participant pool. 

There are however some differences between the dialogues produced by the two systems. We first observe that while complete discussions are prevalent in both the advanced and the baseline system, there is still quite a difference as to their degree. Deeper analysis of the reasons for this variance in incompleteness (see Table \ref{tab:compltenessanalysis}) shows that the primary cause of this is associated with the baseline system being less likely to address all of the user's arguments (i.e. the system is more likely to lead to odd-branched dialogues).

We believe this behaviour is simply a result of the design of the baseline system, where counterarguments are selected from the applicable ones at random. This behaviour might also contribute to the differences in the distributions of different kinds of dialogues produced by the APSs - while complete and linear dialogues are prevalent in both cases, there are differences further down. In particular, incomplete and nonlinear dialogues rank second for the baseline system, while complete and nonlinear dialogues rank second for the advanced system.

\begin{table}[ht]
\centering
\begin{tabular}{llrrr}
\addlinespace[-\aboverulesep] 
 \cmidrule[\heavyrulewidth]{3-5}
\multicolumn{2}{r}{} & Contains $\nrej$ & Contains odd branch & Contains both \\
 \midrule
\multirow{2}{*}{Advanced} 
 & as \% of dialogues & 15.08\% &	2.38\%&	0.00\% \\
& \vtop{\hbox{\strut as \% of incomplete}\hbox{\strut  dialogues}}
 & 86.36\%	& 13.64\% &	0.00\% \\
 \midrule
\multirow{2}{*}{Baseline}  
 & as \% of dialogues  & 18.49\% &	19.33\%	 &2.52\%
 \\
& \vtop{\hbox{\strut as \% of incomplete}\hbox{\strut  dialogues}}
&52.38\% &	54.76\%	& 7.14\% \\
 \bottomrule
\end{tabular} 
\caption{Reasons for incompleteness of dialogues in the advanced and baseline systems.}
\label{tab:compltenessanalysis}
\end{table}

An additional message to take out of this is that $\nrej$ moves had a big impact on incompleteness. This essentially means that the constructed argument graphs should have contained more arguments - while the current ones were sufficient for the majority of the participants, there is still room for improvement. 


\subsubsection{Comparing the system and the baseline}
\label{section:results:comparison} 

A natural next step is to compare the performance of the advanced and baseline systems, by which we understand the belief changes they lead to.

The scatter plots for the before-after beliefs in persuasion goal for each of our APSs are visible in Figure \ref{fig:beforeafterplots}. Please note that a \enquote{perfect} system, managing to convince the participant to radically change their stance on the persuasion goal, would be one producing an \enquote{after} belief of 3 for any participant with a negative \enquote{before} belief, and an \enquote{after} belief of -3 for any participant with a positive \enquote{before} belief (i.e. the perfect scatter plot would not be diagonal). 

\begin{figure}[!ht] 
\centering
\pgfplotsset{every axis/.style={xmin=-3, xmax = 3, ymin = -3, ymax = 3, width=0.5\textwidth,
grid = major,
x label style ={at={(axis description cs:0.5,0.04)}},
y label style ={at={(axis description cs:0.08,0.5)}}, 
xtick ={-3,-2,-1,0,1,2,3},
ytick ={-3,-2,-1,0,1,2,3},
xticklabels ={-3,-2,-1,0,1,2,3},
yticklabels={-3,-2,-1,0,1,2,3}, 
enlarge x limits=0.05,
enlarge y limits=0.05,
    xlabel=$Before$,
    ylabel=$After$,}}
\begin{tikzpicture}
\begin{axis}[name=adv, title = {Advanced System}]
    \addplot[only marks, mark=o]
    coordinates{ 
(0.7,0.13)
(1.29,1.41)
(2.59,2.45)
(3,2.58)
(1.29,1.15)
(2.4,2.46)
(-2.77,-3)
(2.94,2.9)
(3,2.74)
(2.03,1.8)
(2.79,1.59)
(1.59,2)
(0.5,0.17)
(-1.12,-1.11)
(1.76,2.58)
(1.54,1.82)
(3,3)
(0.59,0.72)
(1.81,1.49)
(2.97,3)
(3,1.67)
(3,2.97)
(-0.71,-0.72)
(2.99,3)
(3,2.96)
(0.09,0.17)
(-1.34,-0.77)
(2.99,2.93)
(2.95,2.96)
(2.97,2.96)
(3,2.95)
(0.69,0.92)
(2.62,2.71)
(1.94,1.58)
(2.57,2.4)
(3,3)
(2.05,3)
(2.94,2.61)
(2.94,2.99)
(-1.41,-1.79)
(3,3)
(-1.78,-1.69)
(2.92,2.91)
(2.48,2.51)
(0.59,0.19)
(2.44,2.54)
(3,1.57)
(2.99,3)
(-2.28,-1.63)
(1.21,0.31)
(1.99,2.16)
(2.99,3)
(1.37,1.17)
(-1.81,-1.38)
(-0.96,-1.08)
(3,3)
(-0.55,-0.19)
(2.7,2.86)
(3,2.95)
(-0.75,-0.52)
(2.98,2.85)
(1.66,1.2)
(3,2.91)
(1.35,1.56)
(1.23,1.48)
(1.94,1.66)
(3,3)
(3,2.79)
(-1.46,-1.87)
(2.97,1.95)
(2.68,3)
(-0.48,-0.55)
(1.11,0.74)
(2.36,1.88)
(-3,-3)
(-0.85,-0.86)
(1.05,0.68)
(1.73,1.38)
(2.73,2.94)
(-0.12,-0.25)
(1.34,1.76)
(1.46,1.58)
(0.46,0.16)
(3,2.95)
(0.79,0.58)
(2.92,2.94)
(2.64,2.67)
(1.59,1.12)
(0.95,0.69)
(3,2.98)
(2.06,2.4)
(1.05,1.36)
(-2.29,2.83)
(1.56,1.77)
(2.77,2.93)
(3,1.86)
(2.98,2.99)
(2.98,3)
(1.76,1.74)
(2.42,2.45)
(-1.21,-2.42)
(3,3)
(2.2,2.23)
(3,2.97)
(1.19,1.16)
(3,3)
(1.64,1.12)
(0.07,0.05)
(-0.43,-0.11)
(2.36,2.73)
(3,3)
(1.27,0.57)
(0.78,0.71)
(2.93,3)
(-1.22,-1.29)
(0.87,0.54)
(-0.73,0.26)
(1.41,0.05)
(2.28,2.39)
(1.78,1.57)
(-2.97,-2.98)
(2.39,2.78)
(-2.81,-3)
(3,1.79)
(-1.21,-1.25)
(1.83,1.79)
(1.83,0.71)
(1.5,1.61)
(-3,-3)
(3,3)
(-1.75,-1.82) 
    }; \label{plot_advanced} 
  \end{axis}

\begin{axis}[name=bas, at=(adv.right of south east), xshift = 1.2cm,title = {Baseline System}]
    \addplot[only marks, mark=o]
    coordinates{ 
(1.87,1.98)
(2.07,2)
(-1.24,-1.66)
(-2.65,-2)
(2.98,3)
(2.07,2.35)
(2.6,2.53)
(2.14,2.3)
(-2.19,-1.74)
(0.91,1.17)
(1.18,1.14)
(2.96,2.73)
(2.95,2.81)
(3,3)
(3,2.99)
(-1.91,-1.74)
(1.99,1.3)
(3,2.74)
(1.42,1.28)
(-1.37,-1.93)
(2.36,2.39)
(2.89,2.82)
(1.41,1.6)
(0.89,0.69)
(2.52,2.54)
(3,3)
(1.16,1.74)
(3,3)
(2.79,2.94)
(1.75,1.35)
(-1.89,-1.72)
(1.85,1.46)
(3,2.94)
(-2.1,-2.76)
(1.58,1.31)
(2.99,2.99)
(2.6,1.38)
(2.96,3)
(3,2.98)
(2.13,1.79)
(-1.01,-1.51)
(1.87,1.91)
(2.59,2.8)
(1.05,1.3)
(-0.83,-1.34)
(1.47,1.29)
(1.95,1.91)
(2.22,2.29)
(3,3)
(-3,-3)
(2.76,2.38)
(3,3)
(1.03,1.11)
(-1.87,-1.75)
(3,2.96)
(-1.63,-1.46)
(-3,-3)
(3,3)
(3,3)
(2.38,2.02)
(1.41,1.51)
(1.2,1.13)
(1.19,1.32)
(0.15,0.07)
(1.03,-0.43)
(1.25,2.2)
(1.33,0.5)
(1.7,1.71)
(-2.69,-2.33)
(3,1.65)
(2.93,2.96)
(1.43,2.23)
(-2.59,-2.65)
(2.49,2.53)
(3,3)
(3,3)
(3,3)
(3,2.5)
(-2.18,-2.18)
(3,3)
(-0.78,-0.43)
(2.03,1.94)
(1.36,2.03)
(2.94,2.98)
(1.99,1.49)
(1.57,1.94)
(3,3)
(1.94,1.26)
(-1.29,0.11)
(-3,1.24)
(0.85,1.48)
(-2.18,-2.04)
(3,2.99)
(3,3)
(2.47,2.71)
(1.13,1.32)
(1.1,1.3)
(2.75,2.91)
(3,2.92)
(2.96,3)
(2.37,2.1)
(0.41,0.8)
(0.91,1.1)
(3,3)
(2.21,2.07)
(2.61,1.29)
(1.91,1.94)
(1.71,1.52)
(1.04,1.32)
(2.3,3)
(3,3)
(2.64,2.37)
(-1.23,-0.29)
(-2.72,-2.25)
(2.51,1.38)
(2.98,2.93)
(2.2,1.53)
(-1.03,-0.52)
(2.81,3)
(2.05,1.84)
(0.74,0.87)
(1.15,0.15)
(-2.49,-2.27)
(0.9,0.98)
(2.34,1.08)
(1.2,1.24)
(3,2.96)
(2.27,2.21)
(-0.42,-0.21)
(-1.71,-1.9)
    }; \label{plot_baseline} 
  \end{axis}
\end{tikzpicture}
\caption{Scatter plots of the before and after beliefs for participants that entered a dialogue with the 
advanced or with the baseline system.}
\label{fig:beforeafterplots}
\end{figure}
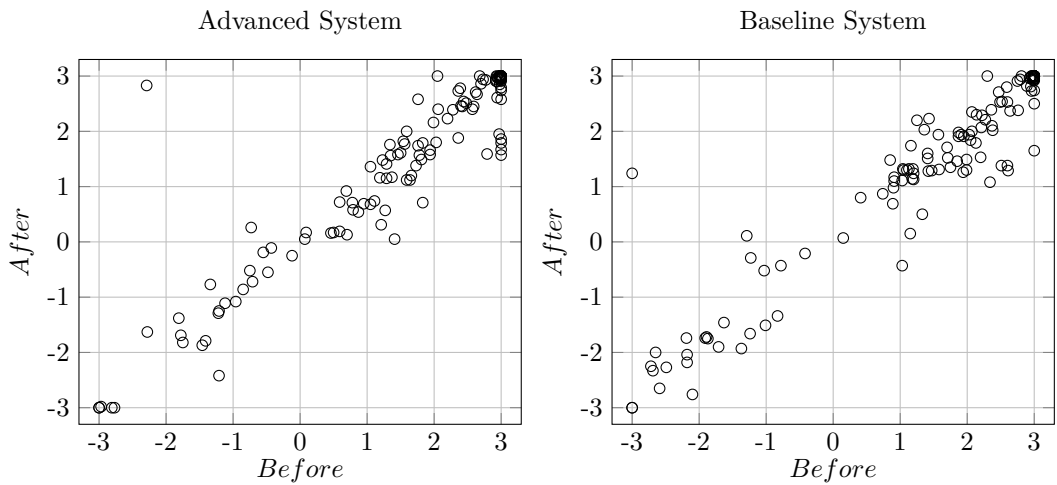

We supplement the scatterplots with additional statistical analysis of the belief change.
We can observe that the distributions of beliefs before the dialogues in the advanced and baseline systems are not statistically different, meaning that the two populations are not dissimilar belief-wise for the two systems\footnote{We carried out a Mann-Whitney U test and obtained p-value of 0.71138. We used \url{https://www.socscistatistics.com/tests/mannwhitney/default2.aspx} for the calculations.}

However, the effects that these systems had on the users are not the same, as visible in Table \ref{tab:statnorm}. We have used Shapiro-Wilk test in order to determine whether the \enquote{before} beliefs of the participants were normally distributed. If the answer was positive, student t-test was used to determine if the changes in beliefs were significant or not; if the answer was negative, Wilcoxon signed-rank test was used\footnote{All calculations have been carried out in R, detailed statistics can be found in the appendix at the end of this paper.}. 

We observe that for the baseline system, independently of the considered subclass of dialogues, the changes in beliefs were either not significant or significance could not have been determined. In turn, the advanced system led to statistically significant changes in beliefs in most cases where significance could have been determined.



\begin{table}[ht] 
\begin{tabularx}{\textwidth}{XXXXX} 
\addlinespace[-\aboverulesep] 
 \cmidrule[\heavyrulewidth]{2-5}
\multicolumn{1}{X}{}	& \multicolumn{2}{c}{Advanced System}& \multicolumn{2}{c}{Baseline System} \\
	Dialogue Type	&	Normality	&	Significance &	Normality	&	Significance	\\ 
\midrule
All	                &	x	        &	\checkmark	&	x			&	x	\\
Complete	        &	x	        &	\checkmark	&	x			&	x	\\
Incomplete  	    &	x	        &	-	        &	x			&	-	\\
Linear	            &	x	        &	\checkmark	&	x			&	x	\\
Nonlinear       	&	x	        &	-	        &	x			&	-	\\
Keeping Graph   	&	x	        &	\checkmark	&	x			&	x	\\
Abolishing Graph	&	\checkmark	&	x	        &	\checkmark	&	x	\\
\bottomrule
\end{tabularx}  
\caption{Results of analysis of statistical significance of belief changes caused by the APSs on different types of dialogues. Shapiro-Wilk test was used to determine whether the \enquote{before} beliefs were normally distributed. If they were, t-test was used to determine significance of belief changes; otherwise, Wilcoxon signed-rank test was used. By \enquote{-} we understand that due to the nature of the data, exact p-value could not have been computed, and we make no claims about the significance. Further details can be found on Table \ref{tab:statnormdet}.}
 \label{tab:statnorm}
\end{table}

Despite these positive results, the actual differences in belief changes between the two systems are rather modest when we look at the numbers. Table \ref{tab:avgChange} shows the average changes in beliefs between the two system. We distinguish between the normal average, which simply takes the mean of all differences in beliefs (which can be negative or positive depending on the persuasion goal), and the absolute average, which ignores whether the change is negative or positive and focuses on the value only. Additionally, in Tables 
\ref{tab:changeADV} and \ref{tab:changeBAS} we include the percentage distribution of the changes.
 

\begin{table}[ht]
\begin{tabularx}{\textwidth}{XXXXX} 
\addlinespace[-\aboverulesep] 
 \cmidrule[\heavyrulewidth]{2-5}
\multicolumn{1}{X}{}	& \multicolumn{2}{c}{Advanced System}& \multicolumn{2}{c}{Baseline System} \\
Dialogue Type	&	Average change	&	Average absolute change	&	Average change	&	Average absolute change	\\ 
\midrule
All					&	0.162	&	0.308	&	0.139	&	0.321	\\ 
Complete			&	0.165	&	0.313	&	0.203	&	0.361	\\ 
Incomplete			&	0.152	&	0.285	&	0.022	&	0.248	\\ 
Linear				&	0.216	&	0.348	&	0.148	&	0.342	\\ 
Nonlinear			&	0.038	&	0.217	&	0.119	&	0.273	\\ 
Keeping Graph		&	0.146	&	0.271	&	0.094	&	0.264	\\ 
Abolishing Graph	&	0.224	&	0.451	&	0.309	&	0.536	\\
\bottomrule
\end{tabularx}  
\caption{Average changes in beliefs in dialogues with the advanced and baseline systems. A change in beliefs can be negative or positive depending on the persuasion goal. Average changes takes the average of all the obtained differences; absolute average ignores whether the change is negative or positive and focuses on the value. Results are rounded to the third decimal place.}
 \label{tab:avgChange}
\end{table}
 

\begin{table}[ht!]
\begin{tabularx}{\textwidth}{XXXXXXXX} 
\addlinespace[-\aboverulesep] 
 \cmidrule[\heavyrulewidth]{3-7}
\multicolumn{2}{X}{}		& $++$	&	$+$	&	x	&	$-$	&	$- -$	\\
\multicolumn{1}{X}{}	&Population \%		&	[1,3]	&	(0,1)	&	0	&	(-1,0)	&	[-3,-1]	\\
\midrule
All	&	100\%	&	7.14\%	&	44.44\%	&	7.14\%	&	40.48\%	&	0.79\%	\\
Complete	&	82.54\%	&	5.77\%	&	50\%	&	6.73\%	&	36.54\%	&	0.96\%	\\
Incomplete	&	17.46\%	&	13.64\%	&	18.18\%	&	9.09\%	&	59.09\%	&	0\%	\\
Linear	&	69.84\%	&	9.09\%	&	44.32\%	&	6.82\%	&	39.77\%	&	0\%	\\
Nonlinear	&	30.16\%	&	2.63\%	&	44.74\%	&	7.89\%	&	42.11\%	&	2.63\%	\\
Keeping Graph	&	79.37\%	&	8\%	&	47\%	&	7\%	&	38\%	&	0\%	\\
Abolishing Graph	&	20.63\%	&	3.85\%	&	34.62\%	&	7.69\%	&	50\%	&	3.85\%	\\
\bottomrule
\end{tabularx}
\caption{Belief change analysis of the dialogues with the advanced system. The results are rounded to two decimal places. Second row shows the interval to which a belief change value should belong to be interpreted as very positive ($++$), positive ($+$), negative ($-$), very negative  ($--$), or no change ($x$). }
 \label{tab:changeADV}
\end{table}


 \begin{table}[ht!] 
\begin{tabularx}{\textwidth}{XXXXXXXX} 
\addlinespace[-\aboverulesep] 
 \cmidrule[\heavyrulewidth]{3-7}
\multicolumn{2}{X}{}		& $++$	&	$+$	&	x	&	$-$	&	$- -$	\\
\multicolumn{1}{X}{}	&Population \%		&	[1,3]	&	(0,1)	&	0	&	(-1,0)	&	[-3,-1]	\\
\midrule
All	&	100\%	&	7.56\%	&	41.18\%	&	14.29\%	&	36.97\%	&	0\%	\\
Complete	&	64.71\%	&	10.39\%	&	38.96\%	&	14.29\%	&	36.36\%	&	0\%	\\
Incomplete	&	35.29\%	&	2.38\%	&	45.24\%	&	14.29\%	&	38.10\%	&	0\%	\\
Linear	&	68.91\%	&	7.32\%	&	41.46\%	&	10.98\%	&	40.24\%	&	0\%	\\
Nonlinear	&	31.09\%	&	8.11\%	&	40.54\%	&	21.62\%	&	29.73\%	&	0\%	\\
Keeping Graph	&	78.99\%	&	7.45\%	&	37.23\%	&	14.89\%	&	40.43\%	&	0\%	\\
Abolishing Graph	&	21.01\%	&	8\%	&	56\%	&	12\%	&	24\%	&	0\%	\\
\bottomrule
\end{tabularx}
\caption{Belief change analysis of the dialogues with the baseline system. The results are rounded to two decimal places. Second row shows the interval to which a belief change value should belong to be interpreted as very positive ($++$), positive ($+$), negative ($-$), very negative  ($--$), or no change ($x$).  }
 \label{tab:changeBAS}
\end{table}


An important thing to notice here is the behaviour of the participants in the complete dialogues. The arguments uttered by the APS in these scenarios correspond to the grounded/preferred/stable extensions of the graphs associated with the dialogues. Predictions using classical Dung semantics would put both of our APS in a strongly winning position, while we would argue that it is not the case here. 

Despite these properties of the complete dialogues, we achieve no statistical significance of the changes in beliefs for the baseline system (see Table \ref{tab:statnorm}). This important threshold is only passed by the advanced system, which, in contrast to the baseline APS, attempts to tailor the dialogue to the profile of the user. This indicates that relying only on the structure of the graph for selecting arguments in dialogues is insufficient. Presenting just any counterargument to a participant's argument turned out to be ineffective in the context of this experiment. In contrast, including beliefs and concerns as factors in the selection of the counterarguments, proved to have statistically significant effects. 

Nevertheless, we still need to acknowledge that in pure numbers, the results are modest. The average changes in beliefs require improvement. While 55.77\% of the users that engaged with the advanced system did experience positive changes, this still leaves a significant proportion of participants that were affected negatively or not affected at all. Thus, there is the need for increasing both the number of participants experiencing positive changes, as well as the degree of these changes. 

We therefore believe that further research in this direction needs to be undertaken. The results require improvement and/or replication; there may also exist other additional factors, besides their beliefs and concerns, that would allow for dialogues to be better tailored for participants.

\subsection{Conclusions}

In general, the changes in beliefs (good direction, bad direction, etc) are quite similar between the baseline and advanced systems. The averages are also relatively close. Yet, only in the advanced system, do we get a statistically significant change in users' beliefs.  Even if we focus on the simpler dialogues (i.e. those that are complete, which are all dialogues that a classical argumentation system would create), the results are similar. Yet again, only the advanced system is significant. So the results suggest that the advanced system is better at changing belief more in favour of the persuasion goal than the baseline system. 

Our claim at the start of this study was that a dialogue needs to be tailored to the user, otherwise it is less effective than it could be. Lack of significance in the baseline system supports that, as no tailoring was taking place. The advanced system was doing some tailoring, and we obtain significance. Nevertheless, the end gain is not as marked as we might hope for. We get a little less than 5\% population increase in positive changes, and over 1\% increase in negative changes. 
Despite this, we need to remember that it is widely acknowledged that convincing anyone of anything (when they are allowed to have their own opinions, as opposed to a logical reasoning exercise), is a difficult task. So developing a system that is going to persuade the majority of participants on a real and controversial topic such as university fees is not very likely. Therefore, even these small improvements of the advanced system over the baseline system are valuable, and indicate that further research into dialogue tailoring approaches is promising.




\section{Literature review}
\label{section:literaturereview}


Since the original proposals for formalizing dialogical argumentation \cite{Ham71,Mac79}, a number of proposals have been made for various kinds of protocol (\emph{e.g.}, \cite{ME98,AMP00,AMP00b,dignum:00,HMP01,MP02,MP02b,MEPA03,Prakken2005,Prakken06,CaminadaPodlaszewki12}).  More recently, interest has been focused on strategies. In the following, we review some of the key work on strategies. We also refer to \cite{Thimm14} for a review of strategies in multi-agent argumentation.


Some strategies have focussed on correctness with respect to argumentation undertaken directly with the knowledge base, in other words, whether the argument graph constructed from the knowledge base yields the same acceptable arguments as those from the dialogue (\emph{e.g.}, \cite{BH09,FanToni11}).
Strategies in argumentation have been analysed using game theory (\emph{e.g.}, \cite{RahwanLarson08,RL09}, \cite{FanToni12}), but these are more concerned with issues of mechanism design, rather than persuasion. 
 In these papers, the game is a one step process rather than a dialogue, and they are concerned with manipulation rather than persuasion.


In \cite{BCB14}, a planning system is used by the persuader to optimize choice of arguments based on belief in premises, and in \cite{BCH2017}, an automated planning approach is used for persuasion that accounts for the uncertainty of the proponent's model of the opponent by finding strategies that have a certain probability of guaranteed success no matter which arguments the opponent chooses to assert.
Alternatively, heuristic techniques can be used to search the space of possible dialogues \cite{Murphy2016}. 
Persuasion strategies can also be based on convincing participants according to what arguments they accept given their view of the structure of an argument graph \cite{Murphy2018}. 
As well as trying to maximize the chances that a dialogue is won according to some dialectical criterion, a strategy can aim to minimize the number of moves made \cite{Atkinson2012}. 
The application of machine learning is another promising approach to developing more sophisticated strategies such 
as the use of reinforcement learning \cite{Huang2007,Monteserin2012,Rosenfeld2016ecai,Alahmari2017,Rach2018,Katsumi2018}
and transfer learning \cite{Rosenfeld2016}.

There are some proposals for strategies using probability theory to, for instance, select a move based on what an agent believes the other is aware of \cite{RienstraThimmOren13}, or, to approximately predict the argument an opponent might put forward based on data about the moves made by the opponent in previous dialogues \cite{HadjinikolisSiantosModgilBlackMcBurney13}.
Using the constellations approach to probabilistic argumentation, a decision-theoretic lottery can be constructed for each possible move \cite{HunterThimm2016ijar}.
Other works represent the problem as a probabilistic finite state machine with a restricted protocol \cite{Hunter14Bis}, and generalize it to POMDPs when there is uncertainty on the internal state of the opponent \cite{HadouxBeynierMaudetWengHunter15}. 
POMDPs are in a sense more powerful than the MCTS that we advocate in our proposal. However, as discussed in \cite{HadouxBeynierMaudetWengHunter15}, there is a challenge in managing the state explosion in POMDPs that arises from modelling opponents in argumentation.


A novel feature of the protocol in this study is that it allows a form of incompleteness in that not every user argument has to be countered for the dialogue to continue. 
The protocol ensures that for each user argument presented at each step of the dialogue, if the system has a counterargument to it that has not been presented in the dialogue so far, then it will present a counterargument to that user argument. However, if the system does not have a counterargument for a user argument, but it does have a counterargument for another user argument played at that step of the dialogue, then the dialogue can still continue. The aim of this tolerance to incompleteness is to reflect how real-life discussions do not necessitate every argument to be countered. This means that discussions can usefully continue, and they are not sensitive to a participant be able to counter every possible argument from the other participants. So our protocol is in contrast to protocols in other approaches to computational argumentation where the aim is for each participant to counter all the arguments by the other agent.

In our previous work \cite{HadouxHunter2018submission}, we developed an APS that selects arguments based on the concerns of the participant in the current dialogue. For this, we assumed that we have a set of arguments, and that each argument is labelled with the type(s) of concern it has an impact on. Furthermore, the system had a model of the user in the form of a preference relation over the types of concern. We did not assume any structure for the preference relation. In particular, we did not assume it is transitive. For each user argument $A$ that the system wishes to attack with a counterargument, the set of attackers (the set of candidates) is identified (\emph{i.e.}, the set of arguments $B$ such that $(B,A) \in \arcs(\graph)$). From this set of candidates, the most preferred one is selected. In other words, the argument returned was the most preferred  attacker of $A$ according to the preference over concerns. In a study with 100 participants,  the results showed that preferences over concerns can indeed be used to improve the persuasiveness of a dialogue when compared with randomly generated dialogue. In another study \cite{Chalaguine2020}, using over 1000 crowdsourced arguments on university fees, taking concerns were shown to be effective in a chatbot that used a natural language interface (i.e. the user could type their arguments in natural language, and the chatbot located the best matching argument in the argument graph, and then responded with a counterargument that appeared to meet the concerns raised in the user's argument).

Conceptually, concerns can be seen as related to value-based argumentation. This approach takes the values of the audience 
into account, where a value is typically seen as moral or ethical principle that is promoted by an argument. It can  
also be used to capture the general goals of an agent, as discussed in \cite{Atkinson2006}. 
A value-based argumentation framework (VAF) extends an abstract argumentation framework by assigning a value to each argument, and for each type of audience, a preference relation over values. This preference relation which can then be used to give a preference ordering over arguments \cite{BenchCapon2003,BenchCapon2005,Atkinson2006,BenchCapon2009,AtkinsonWyner2013}. 
The preference ordering is used to ignore an attack relationship when the attackee is more preferred than the attacker, for that member of the audience. This means the extensions obtained can vary according to who the audience is. 
VAFs have been used in a dialogical setting to make strategic choices of move \cite{BenchCapon2002}.
So theoretically, VAFs could take concerns into account, but they would be unable to model beliefs.There is also no decision-theoretic framework for this, nor is there an empirical evaluation of VAFs. 




More recently, the use of values has been proposed for labelling arguments that have been obtained by crowdsourcing. Here a value is a category of motivation that is important in the life of the agent (\emph{e.g.}, family, comfort, wealth, etc.), and a value assignment to an argument is a category of motivation for an agent if she were to posit this argument \cite{CHHP2018}. It was shown with participants that different people tend to apply the same (or similar) values to the same argument. 
This provides additional evidence that the use of concerns is meaningful and practical, thus 
further supporting our methodology.


The notion of interests as arising in negotiation is also related to concerns. In psychological studies of negotiation, it has been shown that it is advantageous for a participant to determine which goals of the other participants are fixed and which are flexible \cite{Fisher1981}. In \cite{Rahwan2009}, this idea was developed into an argument-based approach to negotiation where meta-information about each agent's
underlying goals can help improve the negotiation process. 
Argumentation has been used in another approach to co-operative problem solving where intentions are exchanged between agents as part of dialogue involving both persuasion and negotiation \cite{dignum:00}. 
Even though the notions of interests and intentions are used in a different way to the way we use the notion of concerns in this paper, it would be worthwhile investigating the relationship between these concepts in future work.

The empirical approach taken in this paper is part of a trend in the field of computational argumentation for studies with participants. This includes studies that evaluate the accuracy of dialectical semantics of abstract argumentation for predicting behaviour of participants in evaluating arguments \cite{Rahwan2011,Cerutti2014}, 
studies comparing a confrontational approach to argumentation with argumentation based on appeal to friends, appeal to group, or appeal to fun \cite{Vargheese2013,Vargheese2016},
studies of appropriateness of probabilistic argumentation for modelling aspects of human argumentation \cite{PolbergHunter2018ijar}, 
studies to investigate physiological responses of argumentation \cite{Villata2017}, 
studies using reinforcement learning for persuasion \cite{Huang2007}, 
and studies of the use of predictive models of an opponent in argumentation to make strategic choices of move by the proponent \cite{Rosenfeld2016}. 
There have also been studies in psycholinguistics to investigate the effect of argumentation style on persuasiveness \cite{Lukin2017}. 

There have already been some promising studies that indicate the potential of using automated dialogues in behaviour change such as using dialogue games
for health promotion \cite{Grasso1998,Cawsey1999,Grasso2000,Grasso2003},
conversational agents for encouraging exercise \cite{Nguyen2007,Bickmore2013} 
and for promoting plant-based diets \cite{Zaal2017}, 
dialogue management for persuasion \cite{Andrews2008},
persuasion techniques for healthy eating messages \cite{JosekuttyThomas2017},
and tailored assistive living systems for encouraging exercise \cite{Guerrero2016}.
However, none of these studies have provided a framework for strategic argumentation, in contrast to the proposal we present in this paper.



\section{Discussion}
\label{section:discussion}

In this paper, we have presented a framework for user modelling that incorporates the beliefs and concerns of persuadees, as well as introduced a framework for optimizing the choice of moves in dialogical argumentation by taking into account these user models. We have shown how we can crowdsource the data required for constructing the user models, and that this can be used by APSs to make strategic choices of move that outperform a baseline system over a population of participants.

This study therefore indicates the value of taking the beliefs and concerns of agents into account. 
Furthermore, it indicates the viability and utility of undertaking real-time decisions on moves to make based on the 
Monte Carlo Tree Search algorithm. The way we have harnessed this algorithm is quite general, and alternative options for the reward 
function could be deployed. For instance, the belief component of the reward function could be replaced by 
different methods of modelling belief updates (e.g. \cite{Hunter2016sum,HunterPotyka2017}), or even having a richer modelling of user beliefs and updating of them (e.g. \cite{HunterPotykaPolberg2019,HunterPolbergThimm2018}).
One can also consider different concern scoring 
approaches, where rather than using preferences, we can focus on how the concerns associated with the played arguments 
are matched between the system and user moves or investigate different quantity-quality balancing. 
Finally, the way these two score values are aggregated can also be modified. In future work, we will 
investigate some of the options to get a more comprehensive understanding of the effectiveness and behaviour of 
different reward functions. 



Another topic for future work is the specification of the protocol.  Many protocols for dialogical argumentation involve a depth-first approach (e.g., \cite{BenchCapon2002}). So when one agent presents an argument, the other agent may provide a counterargument, and then the first agent may provide a counter-counterargument. In this way, a depth-first search of the argument graph is undertaken. With the aim of having more natural dialogues, we used a breadth-first approach. So when a user selects arguments from the menu, the system then may attack more than one of the arguments selected. For the argument graph we used in our study, this appeared to work well. However, for larger argument graphs, a breadth-first approach could also be unnatural. This then raises the questions of how to specify a protocol that interleaves depth-first and breadth-first approaches, and of how to undertake studies with participants to evaluate such protocols.

The aim of our dialogues in this paper is to raise the belief in goal arguments. A goal argument may, among other things, incorporate an intention to change behaviour, though we accept that there is a difference between have a intention to do something, and actually doing it. Nonetheless, having an intention to change behaviour is a valuable step to make towards actually changing behaviour. We focus on the beliefs in arguments because belief is an important aspect of the persuasiveness of an argument (see for example \cite{HunterPolberg17ictai}). Furthermore, beliefs can be measured more easily than intentions in crowdsourced surveys. In future work, we would like to investigate to what extent an increased belief in the persuasion translates to actual changes in behaviour. This would be interesting to investigate in healthcare applications, such as persuading participants 
to undertake regular exercise or reduce alcohol intake. 

We also need to investigate new ways of asking for the beliefs in the arguments.
Currently, we do it through a direct question. Unfortunately, this method is vulnerable to participants who misunderstand the instructions, 
select values carelessly or lie on purpose. 
Therefore, we need to create a new, indirect way, of asking for the belief. A possible approach is to develop several simpler, indicative questions (\emph{e.g.} using the \emph{Yes/No} format instead of real values) such that their
answers can be compiled into a numerical value for the belief.  


\section*{Acknowledgements}

This research was funded by EPSRC Project EP/N008294/1 Framework for
Computational Persuasion.


\bibliography{references}

\newcommand{\etalchar}[1]{$^{#1}$}
\begin{thebibliography}{TNDNML16}

\bibitem[AB13]{Amgoud2013}
L.~Amgoud and J.~{Ben-Naim}.
\newblock Ranking-based semantics for argumentation frameworks.
\newblock In Weiru Liu, V.S. Subrahmanian, and Jef Wijsen, editors, {\em
  Proceedings of SUM'13}, volume 8078 of {\em LNCS}, pages 134--147. Springer,
  2013.

\bibitem[ABB12]{Atkinson2012}
K.~Atkinson, P.~{Bench-Capon}, and T.~{Bench-Capon}.
\newblock Efficiency in persuasion dialogues.
\newblock In Joaquim Filipe and Ana L.~N. Fred, editors, {\em Proceedings of
  {ICAART'12}}, pages 23--32. SciTePress, 2012.

\bibitem[ABD16]{Amgoud2016}
L.~Amgoud, J.~{Ben-Naim}, and D.~Doder.
\newblock Ranking arguments with comppensation-based semantics.
\newblock In Chitta Baral, James Delgrande, and Frank Wolter, editors, {\em
  Proceedings of KR'16}, pages 12--21. AAAI Press, 2016.

\bibitem[ACBF02]{AuerCesaBianchiFischer02}
P.~Auer, N.~Cesa-Bianchi, and P.~Fischer.
\newblock Finite-time analysis of the multiarmed bandit problem.
\newblock {\em Machine Learning}, 47(2-3):235--256, 2002.

\bibitem[AMD08]{Andrews2008}
P.~Andrews, S.~Manandhar, and M.~{De Boni}.
\newblock Argumentative human computer dialogue for automated persuasion.
\newblock In David Schlangen and Beth~Ann Hockey, editors, {\em Proceedings of
  {SIGdial'08}}, pages 138--147. ACL, 2008.

\bibitem[AMP00a]{AMP00}
L.~Amgoud, N.~Maudet, and S.~Parsons.
\newblock Arguments, dialogue and negotiation.
\newblock In Werner Horn, editor, {\em Proceedings of {ECAI'00}}, volume~54 of
  {\em FAIA}, pages 338--342. IOS Press, 2000.

\bibitem[AMP00b]{AMP00b}
L.~Amgoud, N.~Maudet, and S.~Parsons.
\newblock Modelling dialogues using argumentation.
\newblock In Bob Werner, editor, {\em Proceedings of {ICMAS'00}}, pages 31--38.
  IEEE, 2000.

\bibitem[Atk06]{Atkinson2006}
K.~Atkinson.
\newblock Value-based argumentation for democratic decision support.
\newblock In Paul~E. Dunne and Trevor J.~M. Bench{-}Capon, editors, {\em
  Proceedings of {COMMA'06}}, volume 144 of {\em FAIA}, pages 47--58. IOS
  Press, 2006.

\bibitem[AW13]{AtkinsonWyner2013}
K.~Atkinson and A.~Wyner.
\newblock The value of values in computational argumentation.
\newblock In {\em From Knowledge Representation to Argumentation in AI, Law and
  Policy Making: A Festschrift in Honour of Trevor Bench-Capon on the Occasion
  of His 60th Birthday}, pages 39--62. College Publications, 2013.

\bibitem[AYK17]{Alahmari2017}
S.~Alahmari, T.~Yuan, and D.~Kudenko.
\newblock Reinforcement learning for abstract argumentation: A q-learning
  approach.
\newblock In Floris Bex, Floriana Grasso, and Nancy Green, editors, {\em
  Proceedings of {CMNA@ICAIL'17}}, volume 2048 of {\em {CEUR} Workshop
  Proceedings}, pages 76--78. CEUR-WS.org, 2017.

\bibitem[BC03]{BenchCapon2003}
T.~Bench-Capon.
\newblock Persuasion in practical argument using value-based argumentation
  frameworks.
\newblock {\em Journal of Logic and Computation}, 13(3):429--448, 2003.

\bibitem[BCA09]{BenchCapon2009}
T.~Bench-Capon and K.~Atkinson.
\newblock Abstract argumentation and values.
\newblock In {\em Argumentation in Artificial Intelligence}, pages 45--64.
  Springer, 2009.

\bibitem[BCAC05]{BenchCapon2005}
T.~Bench-Capon, K.~Atkinson, and A.~Chorley.
\newblock Persuasion and value in legal argument.
\newblock {\em Journal of Logic and Computation}, 15(6):1075--1097, 2005.

\bibitem[BCB14]{BCB14}
E.~Black, A.~Coles, and S.~Bernardini.
\newblock Automated planning of simple persuasion dialogues.
\newblock In Nils Bulling, Leendert van~der Torre, Serena Villata, Wojtek
  Jamroga, and Wamberto Vasconcelos, editors, {\em Proceedings of {CLIMA'14}},
  volume 8624 of {\em LNCS}, pages 87--104. Springer, 2014.

\bibitem[BCH17]{BCH2017}
E.~Black, A.~Coles, and C.~Hampson.
\newblock Planning for persuasion.
\newblock In Kate Larson, Michael Winikoff, Sanmay Das, and Edmund~H. Durfee,
  editors, {\em Proceedings of {AAMAS'17}}, pages 933--942. ACM, 2017.

\bibitem[BDKM16]{Bonzon2017}
E.~Bonzon, J.~Delobelle, S.~Konieczny, and N.~Maudet.
\newblock A comparative study of ranking-based semantics for abstract
  argumentation.
\newblock In Dale Schuurmans and Michael~P. Wellman, editors, {\em Proceedings
  of AAAI'16}, pages 914--920. AAAI Press, 2016.

\bibitem[{Ben}02]{BenchCapon2002}
T.~{Bench-Capon}.
\newblock Agreeeing to differ: Modelling persuasive dialogue between parties
  with different values.
\newblock {\em Informal Logic}, 22(3):231--246, 2002.

\bibitem[BGGv18]{HOFA2018}
P.~Baroni, D.~Gabbay, M.~Giacomin, and L.~{van der Torre}, editors.
\newblock {\em Handbook Of Formal Argumentation}.
\newblock College Publications, 2018.

\bibitem[BGV14]{BaroniGiacominVicig14}
P.~Baroni, M.~Giacomin, and P.~Vicig.
\newblock On rationality conditions for epistemic probabilities in abstract
  argumentation.
\newblock In Simon Parsons, Nir Oren, Chris Reed, and Federico Cerutti,
  editors, {\em Proceedings of {COMMA'14}}, volume 266 of {\em FAIA}, pages
  121--132. IOS Press, 2014.

\bibitem[BH09]{BH09}
E.~Black and A.~Hunter.
\newblock An inquiry dialogue system.
\newblock {\em Autonomous Agents and Multi-Agent Systems}, 19(2):173--209,
  2009.

\bibitem[BPW{\etalchar{+}}12]{Browne2012}
C.~Browne, E.~Powley, D.~Whitehouse, S.~Lucas, P.~Cowling, P.~Rohlfshagen,
  S.~Taverner, D.~Perez, S.~Samonthrakis, and S.~Colton.
\newblock A survey of {Monte Carlo} tree search methods.
\newblock {\em IEEE Transactions on Computational Intelligence and AI in
  Games}, 4(1):1--49, 2012.

\bibitem[BRT19]{Baroni2019}
P.~Baroni, A.~Rago, and F.~Toni.
\newblock From fine-grained properties to broad principles for gradual
  argumentation: A principled spectrum.
\newblock {\em International Journal of Approximate Reasoning}, 105:252--286,
  2019.

\bibitem[BSS13]{Bickmore2013}
T.~Bickmore, D.~Schulman, and C.~Sidner.
\newblock Automated interventions for multiple health behviours using
  conversational agents.
\newblock {\em Patient Education and Counseling}, 92:142--148, 2013.

\bibitem[CGJ99]{Cawsey1999}
A.~Cawsey, F.~Grasso, and R.~Jones.
\newblock A conversational model for health promotion on the world wide web.
\newblock In Werner Horn, Yuval Shahar, Greger Lindberg, Steen Andreassen, and
  Jeremy Wyatt, editors, {\em Proceedings of {AIMDM'99}}, volume 1620 of {\em
  LNAI}, pages 379--388. Springer, 1999.

\bibitem[CH20]{Chalaguine2020}
L.~Chalaguine and A.~Hunter.
\newblock A persuasive chatbot using a crowd-sourced argument graph and
  concerns.
\newblock In H.~Prakken, S.~Bistarelli, and C.~Taticchi, editors, {\em
  Proceedings of {COMMA'20}}, volume 326 of {\em FAIA}, pages 9--20. {IOS}
  Press, 2020.

\bibitem[CHH{\etalchar{+}}18]{Hunter2018domain}
L.~Chalaguine, E.~Hadoux, F.~Hamilton, A.~Hayward, A.~Hunter, S.~Polberg, and
  H.~Potts.
\newblock Domain modelling in computational persuasion for behaviour change in
  healthcare.
\newblock {\em ArXiv}, 2018.
\newblock arXiv:1802.10054 [cs.AI].

\bibitem[CHHP18]{CHHP2018}
L.~Chalaguine, F.~Hamilton, A.~Hunter, and H.~Potts.
\newblock Argument harvesting using chatbots.
\newblock In Sanjay Modgil, Katarzyna Budzynska, and John Lawrence, editors,
  {\em Proceedings of {COMMA'18}}, volume 305 of {\em FAIA}, pages 149--160.
  {IOS} Press, 2018.

\bibitem[CHHP19]{Chalaguine19}
L.~Chalaguine, F.~Hamilton, A.~Hunter, and H.~Potts.
\newblock Impact of argument type and concerns in argumentation with a chatbot.
\newblock In {\em Proceedings of ICTAI'19}, pages 1549--1554. {IEEE} Press,
  2019.

\bibitem[Cou07]{Coulom07}
R.~Coulom.
\newblock Efficient selectivity and backup operators in {M}onte-{C}arlo tree
  search.
\newblock In H.~Jaap van~den Herik, Paolo Ciancarini, and H.~H. L. M.~(Jeroen)
  Donkers, editors, {\em Proceedings of {CG'06}}, volume 4630 of {\em LNCS},
  pages 72--83. Springer, 2007.

\bibitem[CP04]{ChenPu04}
L.~Chen and P.~Pu.
\newblock Survey of preference elicitation methods.
\newblock Technical Report IC/2004/67, EPFL, 2004.

\bibitem[CP12]{CaminadaPodlaszewki12}
M.~Caminada and M.~Podlaszewski.
\newblock Grounded semantics as persuasion dialogue.
\newblock In Bart Verheij, Stefan Szeider, and Stefan Woltran, editors, {\em
  Proceedings of {COMMA'12}}, volume 245 of {\em FAIA}, pages 478--485. {IOS}
  Press, 2012.

\bibitem[CTO14]{Cerutti2014}
F.~Cerutti, N.~Tintarev, and N.~Oren.
\newblock Formal arguments, preferences, and natural language interfaces to
  humans: an empirical evaluation.
\newblock In Torsten Schaub, Gerhard Friedrich, and Barry O'Sullivan, editors,
  {\em Proceedings of {ECAI'14}}, volume 263 of {\em FAIA}, pages 1033--1034.
  {IOS} Press, 2014.

\bibitem[DDKV00]{dignum:00}
F.~Dignum, B.~Dunin-Keplicz, and R.~Verbrugge.
\newblock Dialogue in team formation.
\newblock In {\em Issues in Agent Communication}, volume 1916 of {\em LNCS},
  pages 264--280. Springer, 2000.

\bibitem[Dun95]{Dung95}
P.~Dung.
\newblock On the acceptability of arguments and its fundamental role in
  nonmonotonic reasoning, logic programming, and n-person games.
\newblock {\em Artificial Intelligence}, 77:321--357, 1995.

\bibitem[FT11]{FanToni11}
X.~Fan and F.~Toni.
\newblock Assumption-based argumentation dialogues.
\newblock In Toby Walsh, editor, {\em Proceedings of {IJCAI'11}}, pages
  198--203. AAAI Press, 2011.

\bibitem[FT12]{FanToni12}
X.~Fan and F.~Toni.
\newblock Mechanism design for argumentation-based persuasion.
\newblock In Bart Verheij, Stefan Szeider, and Stefan Woltran, editors, {\em
  Proceedings of {COMMA'12}}, volume 245 of {\em FAIA}, pages 322--333. IOS
  Press, 2012.

\bibitem[FU81]{Fisher1981}
R.~Fisher and W.~Ury.
\newblock {\em Getting to Yes: Negotiating Agreement Without Giving In}.
\newblock Penguin, 1981.

\bibitem[GCJ00]{Grasso2000}
F.~Grasso, A.~Cawsey, and R.~Jones.
\newblock Dialectical argumentation to solve conflicts in advice giving: a case
  study in the promotion of healthy nutrition.
\newblock {\em International Journal of Human-Computer Studies},
  53(6):1077--1115, 2000.

\bibitem[GLKM14]{Gratch2014}
J.~Gratch, G.~Lucas, A.~King, and {L.-P.} Morency.
\newblock It's only a computer: The impact of human-agent interaction in
  clinical interviews.
\newblock In Ana L.~C. Bazzan, Michael~N. Huhns, Alessio Lomuscio, and Paul
  Scerri, editors, {\em Proceedings of {AAMAS'14}}, pages 85--92. A, 2014.

\bibitem[GNL16]{Guerrero2016}
E.~Guerrero, J.~Nieves, and H.~Lindgren.
\newblock An activity-centric argumentation framework for assistive technology
  aimed at improving health.
\newblock {\em Argument \& Computation}, 7:5--33, 2016.

\bibitem[Gra98]{Grasso1998}
F.~Grasso.
\newblock Exciting avocados and dull pears - combining behavioural and
  argumentative theory for producing effective advice.
\newblock In M.~Gernsbacher and S.~Derry, editors, {\em Proceedings of the 20th
  Annual Meeting of the Cognitive Science Society}, pages 436--441. Lawrence
  Erlbaum Associates, 1998.

\bibitem[Gra03]{Grasso2003}
F.~Grasso.
\newblock Rhetorical coding of health promotion dialogues.
\newblock In Michel Dojat, Elpida~T. Keravnou, and Pedro Barahona, editors,
  {\em Proceedings of {AIME'03}}, volume 2780 of {\em LNCS}, pages 179--188.
  Springer, 2003.

\bibitem[GSRS03]{GoslingRentfrowSwann03}
S.~Gosling, D.~Samuel, P.~Rentfrow, and W.~Swann.
\newblock A very brief measure of the big-five personality domains.
\newblock {\em Journal of Research in Personality}, 37(6):504--528, 2003.

\bibitem[Ham71]{Ham71}
C.~Hamblin.
\newblock Mathematical models of dialogue.
\newblock {\em Theoria}, 37:567--583, 1971.

\bibitem[HBM{\etalchar{+}}15]{HadouxBeynierMaudetWengHunter15}
E.~Hadoux, A.~Beynier, N.~Maudet, P.~Weng, and A.~Hunter.
\newblock Optimization of probabilistic argumentation with {M}arkov decision
  models.
\newblock In Qiang Yang and Michael~J. Wooldridge, editors, {\em Proceedings of
  {IJCAI'15}}, pages 2004--2010. AAAI Press, 2015.

\bibitem[HH16]{HadouxHunter16}
E.~Hadoux and A.~Hunter.
\newblock Computationally viable handling of beliefs in arguments for
  persuasion.
\newblock In Nikolaos Bourbakis, Anna Esposito, Amol Mali, and Miltos
  Alamaniotis, editors, {\em Proceedings of {ICTAI'16}}, pages 319--326. IEEE,
  2016.

\bibitem[HH17]{HadouxHunter17}
E.~Hadoux and A.~Hunter.
\newblock Strategic sequences of arguments for persuasion using decision trees.
\newblock In Satinder~P. Singh and Shaul Markovitch, editors, {\em Proceedings
  of {AAAI'17}}, pages 1128--1134. AAAI Press, 2017.

\bibitem[HH18]{HadouxHunter2018aamas}
E.~Hadoux and A.~Hunter.
\newblock Learning and updating user models for subpopulations in persuasive
  argumentation using beta distributions.
\newblock In Elisabeth Andr{\'{e}}, Sven Koenig, Mehdi Dastani, and Gita
  Sukthankar, editors, {\em Proceedings of {AAMAS'18}}, pages 1141--1149.
  IFAAMAS, 2018.

\bibitem[HH19]{HadouxHunter2018submission}
E.~Hadoux and A.~Hunter.
\newblock Comfort or safety? gathering and using the concerns of a participant
  for better persuasion.
\newblock {\em Argument \& Computation}, Pre-press:1--35, 2019.

\bibitem[HHC18]{HadouxHunter2018foiks}
E.~Hadoux, A.~Hunter, and {J.-B.} Corr\'{e}g\'{e}.
\newblock Strategic dialogical argumentation using multi-criteria decision
  making with application to epistemic and emotional aspects of arguments.
\newblock In Flavio Ferrarotti and Stefan Woltran, editors, {\em Proceedings of
  {FoIKS’18}}, volume 10833 of {\em LNCS}, pages 207--224. Springer, 2018.

\bibitem[HL07]{Huang2007}
S.~Huang and F.~Lin.
\newblock The design and evaluation of an intelligent sales agent for online
  persuasion and negotiation.
\newblock {\em Electronic Commerce Research and Applications}, 6:285--296,
  2007.

\bibitem[HMP01]{HMP01}
D.~Hitchcock, P.~McBurney, and S.~Parsons.
\newblock A framework for deliberation dialogues.
\newblock In Hans~V. Hansen, Robert~C. Pinto, Christopher~W. Tindale,
  J.~Anthony Blair, and Ralph~H. Johnson, editors, {\em Proceedings of
  {OSSA'01}: Argumentation and its Applications}. OSSA, 2001.

\bibitem[HP17a]{HunterPolberg17ictai}
A.~Hunter and S.~Polberg.
\newblock Empirical methods for modelling persuadees in dialogical
  argumentation.
\newblock In Juan Guerrero, editor, {\em Proceedings of {ICTAI'17}}, pages
  382--389. IEEE, 2017.

\bibitem[HP17b]{HunterPotyka2017}
A.~Hunter and N.~Potyka.
\newblock Updating probabilistic epistemic states in persuasion dialogue.
\newblock In {\em Proceedings of {ECSQARU}'17}, volume 10369 of {\em LNCS},
  pages 46--56. Springer, 2017.

\bibitem[HPP19]{HunterPotykaPolberg2019}
A.~Hunter, S.~Potyka, and S.~Polberg.
\newblock Delegated updates in epistemic graphs for opponent modelling.
\newblock {\em International Journal of Approximate Reasoning}, 113:207--244,
  2019.

\bibitem[HPT20]{HunterPolbergThimm2018}
A.~Hunter, S.~Polberg, and M.~Thimm.
\newblock Epistemic graphs for representing and reasoning with positive and
  negative influences of arguments.
\newblock {\em Artificial Intelligence}, 281:103236, 2020.

\bibitem[HSM{\etalchar{+}}13]{HadjinikolisSiantosModgilBlackMcBurney13}
C.~Hadjinikolis, Y.~Siantos, S.~Modgil, E.~Black, and P.~McBurney.
\newblock Opponent modelling in persuasion dialogues.
\newblock In Qiang Yang and Michael~J. Wooldridge, editors, {\em Proceedings of
  {IJCAI'15}}, pages 164--170. AAAI Press, 2013.

\bibitem[HT16a]{HunterT16}
A.~Hunter and M.~Thimm.
\newblock On partial information and contradictions in probabilistic abstract
  argumentation.
\newblock In Chitta Baral, James~P. Delgrande, and Frank Wolter, editors, {\em
  Proceedings of {KR'16}}, pages 53--62. {AAAI} Press, 2016.

\bibitem[HT16b]{HunterThimm2016ijar}
A.~Hunter and M.~Thimm.
\newblock Optimization of dialectical outcomes in dialogical argumentation.
\newblock {\em International Journal of Approximate Reasoning}, 78:73--102,
  2016.

\bibitem[HT17]{HunterThimm2017}
A.~Hunter and M.~Thimm.
\newblock Probabilistic reasoning with abstract argumentation frameworks.
\newblock {\em Journal of Artificial Intelligence Research}, 59:565--611, 2017.

\bibitem[Hun13]{Hunter2013ijar}
A.~Hunter.
\newblock A probabilistic approach to modelling uncertain logical arguments.
\newblock {\em International Journal of Approximate Reasoning}, 54(1):47--81,
  2013.

\bibitem[Hun14]{Hunter14Bis}
A.~Hunter.
\newblock Probabilistic strategies in dialogical argumentation.
\newblock In Umberto Straccia and Andrea Cal{\`\i}, editors, {\em Proceedings
  of {SUM'14}}, volume 8720 of {\em LNCS}, pages 190--202. Springer, 2014.

\bibitem[Hun15]{Hunter2015ijcai}
A.~Hunter.
\newblock Modelling the persuadee in asymmetric argumentation dialogues for
  persuasion.
\newblock In Qiang Yang and Michael~J. Wooldridge, editors, {\em Proceedings of
  {IJCAI'15}}, pages 3055--3061. AAAI Press, 2015.

\bibitem[Hun16a]{Hunter2016comma}
A.~Hunter.
\newblock Computational persuasion with applications in behaviour change.
\newblock In Pietro Baroni, Thomas~F. Gordon, Tatjana Scheffler, and Manfred
  Stede, editors, {\em Proceedings of {COMMA'16}}, volume 287 of {\em FAIA},
  pages 5--18. IOS Press, 2016.

\bibitem[Hun16b]{Hunter2016sum}
A.~Hunter.
\newblock Persuasion dialogues via restricted interfaces using probabilistic
  argumentation.
\newblock In Steven Schockaert and Pierre Senellart, editors, {\em Proceedings
  of {SUM'16}}, volume 9858 of {\em LNCS}, pages 184--198. Springer, 2016.

\bibitem[Hun16c]{Hunter2016ecai}
A.~Hunter.
\newblock Two dimensional uncertainty in persuadee modelling in argumentation.
\newblock In Gal~A. Kaminka, Maria Fox, Paolo Bouquet, Eyke H{\"u}llermeier,
  Virginia Dignum, Frank Dignum, and Frank van Harmelen, editors, {\em
  Proceedings of {ECAI'16}}, volume 285 of {\em FAIA}, pages 150--157. IOS
  Press, 2016.

\bibitem[JHMK93]{JohnsonHersheyMeszarosKunreuther93}
E.~Johnson, J.~Hershey, J.~Meszaros, and H.~Kunreuther.
\newblock Framing, probability distortions, and insurance decisions.
\newblock {\em Journal of Risk and Uncertainty}, 7(1):35--51, 1993.

\bibitem[KHY{\etalchar{+}}18]{Katsumi2018}
H.~Katsumi, T.~Hiraoka, K.~Yoshino, K.~Yamamoto, S.~Motoura, K.~Sadamasa, and
  S.~Nakamura.
\newblock Optimization of information-seeking dialogue strategy for
  argumentation-based dialogue system.
\newblock In {\em Proceedings of {DEEP-DIAL@AAAI'19}}, volume abs/1811.10728.
  {ArXiv}, 2018.

\bibitem[LAWW17]{Lukin2017}
S.~Lukin, P.~Anand, M.~Walker, and S.~Whittaker.
\newblock Argument strength is in the eye of the beholder: Audience effects in
  persuasion.
\newblock In Mirella Lapata, Phil Blunsom, and Alexander Koller, editors, {\em
  Proceedings of {EACL'17}}, pages 742--753. ACL, 2017.

\bibitem[LT16]{Lippi2016}
M.~Lippi and P.~Torroni.
\newblock Argumentation mining: State of the art and emerging trends.
\newblock {\em ACM Transactions on Internet Technology}, 16(2):10:1--10:25,
  2016.

\bibitem[MA13]{Monteserin2012}
A.~Monteserin and A.~Amandi.
\newblock A reinforcement learning approach to improve the argument selection
  effectiveness in argumentation-based negotiation.
\newblock {\em Expert Systems with Applications}, 40:2182--2188, 2013.

\bibitem[Mac79]{Mac79}
J.~Mackenzie.
\newblock Question begging in non-cumulative systems.
\newblock {\em Journal of Philosophical Logic}, 8:117--133, 1979.

\bibitem[MBL16]{Murphy2016}
J.~Murphy, E.~Black, and M.~Luck.
\newblock Heuristic trategy for persuasion.
\newblock In Pietro Baroni, Thomas~F. Gordon, Tatjana Scheffler, and Manfred
  Stede, editors, {\em Proceedings of {COMMA'16}}, volume 287 of {\em FAIA},
  pages 411 -- 418. IOS Press, 2016.

\bibitem[MBL{\etalchar{+}}18]{Murphy2018}
J.~Murphy, A.~Burdusel, M.~Luck, S.~Zschaler, and E.~Black.
\newblock Deriving persuasion strategies using search-based model engineering.
\newblock In Sanjay Modgil, Katarzyna Budzynska, and John Lawrence, editors,
  {\em Proceedings of {COMMA'18}}, volume 305 of {\em FAIA}, pages 221--232.
  IOS Press, 2018.

\bibitem[MC87]{McCraeCosta87}
R.~McCrae and P.~Costa.
\newblock Validation of the five-factor model of personality across instruments
  and observers.
\newblock {\em Journal of Personality and Social Psychology}, 52(1):81, 1987.

\bibitem[ME98]{ME98}
N.~Maudet and F.~Evrard.
\newblock A generic framework for dialogue game implementation.
\newblock In J.~Hulstijn and A.~Nijholt, editors, {\em Proceedings of
  {SEMDIAL'98}}, pages 185--198. University of Twente, 1998.

\bibitem[MP02a]{MP02}
P.~McBurney and S.~Parsons.
\newblock Dialogue games in multi-agent systems.
\newblock {\em Informal Logic}, 22:257--274, 2002.

\bibitem[MP02b]{MP02b}
P.~McBurney and S.~Parsons.
\newblock Games that agents play: A formal framework for dialogues between
  autonomous agents.
\newblock {\em Journal of Logic, Language and Information}, 11:315--334, 2002.

\bibitem[MvEPA03]{MEPA03}
P.~McBurney, R.~van Eijk, S.~Parsons, and L.~Amgoud.
\newblock A dialogue-game protocol for agent purchase negotiations.
\newblock {\em Journal of Autonomous Agents and Multi-Agent Systems},
  7:235--273, 2003.

\bibitem[NME07]{Nguyen2007}
H.~Nguyen, J.~Masthoff, and P.~Edwards.
\newblock Persuasive effects of embodied conversational agent teams.
\newblock In Julie~A. Jacko, editor, {\em Proceedings of {HCI'07}}, volume 4552
  of {\em LNCS}, pages 176--185. Springer, 2007.

\bibitem[Ogd12]{ogden2012health}
J.~Ogden.
\newblock {\em Health Psychology: A Textbook}.
\newblock Open University Press, 5 edition, 2012.

\bibitem[Pag09]{Paglieri2009}
F.~Paglieri.
\newblock Ruinous arguments: Escalation of disagreement and the dangers of
  arguing.
\newblock In Juho Ritola, editor, {\em Proceedings of OSSA'09: Argument
  Cultures}, pages 1--15. OSSA, 2009.

\bibitem[PC10]{PaglieriCastel10}
F.~Paglieri and C.~Castelfranchi.
\newblock Why argue? {T}owards a cost–benefit analysis of argumentation.
\newblock {\em Argument \& Computation}, 1(1):71--91, 2010.

\bibitem[PH18]{PolbergHunter2018ijar}
S.~Polberg and A.~Hunter.
\newblock Empirical evaluation of abstract argumentation: Supporting the need
  for bipolar and probabilistic approaches.
\newblock {\em International Journal of Approximate Reasoning}, 93:487--543,
  2018.

\bibitem[PHT17]{PolbergHunterThimm17}
S.~Polberg, A.~Hunter, and M.~Thimm.
\newblock Belief in attacks in epistemic probabilistic argumentation.
\newblock In Seraf{\'i}n Moral, Olivier Pivert, Daniel S{\'a}nchez, and
  Nicol{\'a}s Mar{\'i}n, editors, {\em Proceedings of {SUM'17}}, volume 10564
  of {\em LNCS}, pages 223--236. Springer, 2017.

\bibitem[Pot19]{Potyka19a}
N.~Potyka.
\newblock A polynomial-time fragment of epistemic probabilistic argumentation.
\newblock In Edith Elkind, Manuela Veloso, Noa Agmon, and Matthew~E. Taylor,
  editors, {\em Proceedings of {AAMAS}'19}, pages 2165--2167. {IFAAMAS}, 2019.

\bibitem[PPH19]{PPH19a}
N.~Potyka, S.~Polberg, and A.~Hunter.
\newblock Polynomial-time updates of epistemic states in a fragment of
  probabilistic epistemic argumentation.
\newblock In {\em Proceedings of {ECSQARU}'19}, volume 11726 of {\em LNCS},
  pages 74--86. Springer, 2019.

\bibitem[Pra05]{Prakken2005}
H.~Prakken.
\newblock Coherence and flexibility in dialogue games for argumentation.
\newblock {\em Journal of Logic and Computation}, 15(6):1009--1040, 2005.

\bibitem[Pra06]{Prakken06}
H.~Prakken.
\newblock Formal sytems for persuasion dialogue.
\newblock {\em Knowledge Engineering Review}, 21(2):163--188, 2006.

\bibitem[RII19]{Reisert2019}
P.~Reisert, N.~Inoue, and K.~Inui.
\newblock A crowdsourceable protocol for collecting user-generated
  counter-arguments.
\newblock In {\em Proceedings of {ANLP}'19}, pages 151--154, 2019.

\bibitem[RK16a]{Rosenfeld2016}
A.~Rosenfeld and S.~Kraus.
\newblock Providing arguments in discussions on the basis of the prediction of
  human argumentative behavior.
\newblock {\em ACM Transactions on Interactive Intelligent Systems},
  6(4):30:1--30:33, December 2016.

\bibitem[RK16b]{Rosenfeld2016ecai}
A.~Rosenfeld and S.~Kraus.
\newblock Strategical argumentative agent for human persuasion.
\newblock In Gal~A. Kaminka, Maria Fox, Paolo Bouquet, Eyke H{\"u}llermeier,
  Virginia Dignum, Frank Dignum, and Frank van Harmelen, editors, {\em
  Proceedings of ECAI'16}, volume 285 of {\em FAIA}, pages 320--328. IOS Press,
  2016.

\bibitem[RL08]{RahwanLarson08}
I.~Rahwan and K.~Larson.
\newblock Pareto optimality in abstract argumentation.
\newblock In Dieter Fox and Carla~P. Gomes, editors, {\em Proceedings of
  {AAAI'08}}, pages 150--155. AAAI Press, 2008.

\bibitem[RLT09]{RL09}
I.~Rahwan, K.~Larson, and F.~Tohm{\'e}.
\newblock A characterisation of strategy-proofness for grounded argumentation
  semantics.
\newblock In Craig Boutilier, editor, {\em Proceedings of {IJCAI'09}}, pages
  251--256. AAAI Press, 2009.

\bibitem[RMB{\etalchar{+}}10]{Rahwan2011}
I.~Rahwan, M~Madakkatel, J.~Bonnefon, R~Awan, and S.~Abdallah.
\newblock Behavioural experiments for assessing the abstract argumentation
  semantics of reinstatement.
\newblock {\em Cognitive Science}, 34(8):1483--1502, 2010.

\bibitem[RMU18]{Rach2018}
N.~Rach, W.~Minker, and S.~Ultes.
\newblock Markov games for persuasive dialogue.
\newblock In Sanjay Modgil, Katarzyna Budzynska, and John Lawrence, editors,
  {\em Proceedings of COMMA'18}, volume 305 of {\em FAIA}, pages 213--220. IOS
  Press, 2018.

\bibitem[RPSD09]{Rahwan2009}
I.~Rahwan, P.~Pasquier, L.~Sonenberg, and F.~Dignum.
\newblock A formal analysis of interest-based negotiation.
\newblock {\em Annuals of Mathematics and Artificial Intelligence},
  55:253--276, 2009.

\bibitem[RTO13]{RienstraThimmOren13}
T.~Rienstra, M.~Thimm, and N.~Oren.
\newblock Opponent models with uncertainty for strategic argumentation.
\newblock In Francesca Rossi, editor, {\em Proceedings of {IJCAI'13}}, pages
  332--338. AAAI Press, 2013.

\bibitem[SHM{\etalchar{+}}16]{Silver16}
D.~Silver, A.~Huang, C.~Maddison, A.~Guez, L.~Sifre, G.~{Van Den Driessche},
  J.~Schrittwieser, I.~Antonoglou, V.~Panneershelvam, M.~Lanctot, et~al.
\newblock Mastering the game of go with deep neural networks and tree search.
\newblock {\em Nature}, 529(7587):484, 2016.

\bibitem[SO03]{SieglerOpfer03}
R.~Siegler and J.~Opfer.
\newblock The development of numerical estimation.
\newblock {\em Psychological Science}, 14(3):237--250, 2003.

\bibitem[SS19]{Stede2019}
M.~Stede and J.~Schneider.
\newblock {\em Argumentation Mining}.
\newblock Morgan \& Claypool, 2019.

\bibitem[SV10]{SilverVeness10}
D.~Silver and J.~Veness.
\newblock Monte-{C}arlo planning in large {POMDP}s.
\newblock In J.D. Lafferty, C.K.I. Williams, J.~Shawe-Taylor, R.S. Zemel, and
  A.~Culotta, editors, {\em Proceedings of {NIPS'10}}, pages 2164--2172. Curran
  Associates, Inc., 2010.

\bibitem[Thi12]{Thimm12}
M.~Thimm.
\newblock A probabilistic semantics for abstract argumentation.
\newblock In Luc~De Raedt, Christian Bessi{\`{e}}re, Didier Dubois, Patrick
  Doherty, Paolo Frasconi, Fredrik Heintz, and Peter J.~F. Lucas, editors, {\em
  Proceedings of {ECAI'12}}, volume 242 of {\em FAIA}, pages 750--755. IOS
  Press, 2012.

\bibitem[Thi14]{Thimm14}
M.~Thimm.
\newblock Strategic argumentation in multi-agent systems.
\newblock {\em K{\"u}nstliche Intelligenz}, 28:159--168, 2014.

\bibitem[TMO17]{JosekuttyThomas2017}
R.~Josekutty Thomas, J.~Masthoff, and N.~Oren.
\newblock Adapting healthy eating messages to personality.
\newblock In Peter~W. de~Vries, Harri Oinas-Kukkonen, Liseth Siemons, Nienke
  Beerlage-de Jong, and Lisette van Gemert-Pijnen, editors, {\em Proceedings of
  {PERSUASIVE'17}}, volume 10171 of {\em LNCS}, pages 119--132. Springer, 2017.

\bibitem[TNDNML16]{TanNDNML16}
C.~Tan, V.~Niculae, C.~Danescu-Niculescu-Mizil, and L.~Lee.
\newblock Winning arguments: Interaction dynamics and persuasion strategies in
  good-faith online discussions.
\newblock In Jacqueline Bourdeau, Jim Hendler, Roger Nkambou, Ian Horrocks, and
  Ben~Y. Zhao, editors, {\em Proceedings of {WWW}'16}, pages 613--624. {ACM},
  2016.

\bibitem[VCJ{\etalchar{+}}17]{Villata2017}
S.~Villata, E.~Cabrio, I.~Jraidi, S.~Benlamine, M.~Chaouachi, C.~Frasson, and
  F.~Gandon.
\newblock Emotions and personality traits in argumentation: An empirical
  evaluation.
\newblock {\em Argument \& Computation}, 8(1):61--87, 2017.

\bibitem[VSM{\etalchar{+}}13]{Vargheese2013}
J.~Vargheese, S.~Sripada, J.~Masthoff, N.~Oren, P.~Schofield, and V.~Hanson.
\newblock Persuasive dialogue for older adults: promoting and encouraging
  social interaction.
\newblock In Wendy~E. Mackay, Stephen~A. Brewster, and Susanne B{\o}dker,
  editors, {\em Proceedings of {CHI'13}}, pages 877--882. ACM Press, 2013.

\bibitem[VSMO16]{Vargheese2016}
J.~Vargheese, S.~Sripada, J.~Masthoff, and N.~Oren.
\newblock Persuasive strategies for encouraging social interaction for older
  adults.
\newblock {\em International Journal of Human Computer Interaction},
  32(3):190--214, 2016.

\bibitem[ZMH{\etalchar{+}}17]{Zaal2017}
E.~Zaal, G.~Mills, A.~Hagen, C.~Huisman, and J.~Hoeks.
\newblock Convincing conversations: Using a computer-based dialogue system to
  promote a plant-based diet.
\newblock In Glenn Gunzelmann, Andrew Howes, Thora Tenbrink, and Eddy~J.
  Davelaar, editors, {\em Proceedings of {CogSci'17}}, pages 3627--3632.
  Cognitive Science Society, 2017.

\end{thebibliography}
\bibliographystyle{alpha}

\newpage
\section*{Appendix}
\label{section:appendices}

\begin{table}[ht] 
\begin{tabularx}{\textwidth}{p{1.6cm}XXXX} 
\toprule
\multicolumn{1}{p{1.6cm}}{}	& \multicolumn{2}{c}{Advanced System}& \multicolumn{2}{c}{Baseline System} \\
\midrule
{\hbox{\strut Dialogue} 
\hbox{\strut Type}}	&	Normality	&	Significance &	Normality	&	Significance	\\ 
\midrule
All		&	W=0.8482			&	V=4321.5									&	W=0.83112		&		V=2689 	\\  
							& p-value=4.79e-10  	&	p-value=0.01803								&	p-value=2.36e-10	&	p-value=0.83603 \\
Complete 	&	W=0.84615			&	V=3001.5									&	W=0.83796			&	V=1128 	\\  
							& p-value=5.21e-09  	&	p-value=0.024625							&	p-value=1.00e-07	&	p-value=0.88823	\\
Incomplete &	W=0.86144			&	-  &	W=0.81982			&	-	\\    
							& p-value=0.005446  	&												&	p-value=1.2e-05	&	\\  
Linear	    &	W=0.87166			&	V=2129.5									&	W=0.84902			&	V=1272.5 	\\  
							& p-value=3.42e-07  	&	p-value=0.048107							&	p-value=1.14e-07	&	p-value=0.67004	\\
Nonlinear  &	W=0.78658			&	-  &	W=0.78585			&	-	\\   
							& p-value=5.64e-06  	&												&	p-value=6.85e-06	&	\\ 
\multirow{2}{*}{\vtop{\hbox{\strut Keeping} \hbox{\strut Graph}}}  &	W=0.86376			&	V=2941.5									&	W=0.89009			&	V=1910 	\\  
								& p-value=3.96e-08  	&	p-value=0.003792							&	p-value=9.58e-07	&	p-value=0.16493	\\ 
\multirow{2}{*}{\vtop{\hbox{\strut Abolishing} \hbox{\strut Graph}}}	&	W=0.92538			&	t=-1.0592									&	W=0.9582			&	t=-1.63439 	\\
									& p-value=0.060208		&	p-value=0.85018  							&	p-value=0.37979		&	p-value=0.94239	\\
\bottomrule
\end{tabularx}
\caption{Results of analysis of statistical significance of belief changes caused by the APSs on different types of dialogues. Shapiro-Wilk test was used to determine whether the \enquote{before} beliefs were normally distributed. If they were, t-test was used to determine significance of belief changes; otherwise, Wilcoxon signed-rank test was used.  By \enquote{-} we understand that due to the nature of the data, exact p-value could not have been computed, and we make no claims about the significance. }
 \label{tab:statnormdet}
\end{table}

\end{document}